\newcommand{\event}{\Omega}
\definecolor{innerboxcolor}{rgb}{.9,.95,1}
\definecolor{outerlinecolor}{rgb}{.6,0,.2}
\renewcommand{\P}{\mathbb{P}}
\renewcommand{\Pr}{\mathbb{P}}
\newcommand{\history}{h}
\newcommand{\errorpot}[3]{\Lambda^{(#1)}_{#2}[#3]}
\newcommand{\psucc}[1]{\succ_{p_#1}}
\newcommand{\asucc}[1]{\succ_{a_#1}}
\newcommand{\match}{m}
\newcommand{\mstar}{M^*}
\newcommand{\stayprob}{\lambda}
\newcommand{\mstable}{M^{*}}
\newcommand{\agentset}{\mathcal{N}}
\newcommand{\armset}{\mathcal{K}}
\newcommand{\potset}[1]{S^{(#1)}}
\newcommand{\pull}[1]{\bar{A}^{(#1)} }
\newcommand{\attempt}[1]{A^{(#1)} }
\newcommand{\defeq}{\mathrel{\mathop:}=}
\newcommand{\mingap}{\Delta}
\newcommand{\gap}[2]{\Delta^{(#1)}_#2}
\newcommand{\delaydraw}[1]{D^{(#1)}}
\newcommand{\argmax}{\mathop{\rm argmax}}
\newcommand{\Ind}[1]{\mathbf{1}\left\{#1\right\}}
\newcommand{\E}{\mathbb{E}}
\newcommand{\player}[1]{p_{#1}}
\newcommand{\arm}[1]{a_{#1}}
\newcommand{\players}{\mathcal{N}}
\newcommand{\arms}{\mathcal{K}}
\newcommand{\meanreward}[2]{\mu^{(#1)}_{#2}} % agent, arm
\newcommand{\hatmeanreward}[3]{\widehat{\mu}^{(#1)}_{#2}(#3)} % agent, arm, time
\newcommand{\reward}[3]{X^{(#1)}_{#2}({#3})}
\newcommand{\actions}[1]{m_{{#1}}}
\newcommand{\optmatches}{\overline{m}}
\newcommand{\pessmatches}{\underline{m}}
\newcommand{\pessmatch}[1]{\underline{m}(#1)}
\newcommand{\action}[2]{m_{{#1}}({#2})}
\newcommand{\numarms}{L}
\newcommand{\numagents}{N}
\newcommand{\horizon}{T}
\newcommand{\pessgap}[2]{\underline{\Delta}_{#1, #2}}
\newcommand{\optregret}[1]{\overline{R}_{#1}}
\newcommand{\pessregret}[1]{\underline{R}_{#1}}
\newcommand{\regret}[1]{R_{#1}}
\newcommand{\numattempts}[3]{T^{(#1)}_{ #2}({#3})}
\newcommand{\numpulls}[3]{\bar{T}^{(#1)}_{ #2}({#3})}
\newcommand{\ucb}[3]{\text{UCB}^{({#1})}_{ {#2}} ({#3})} % agent, arm, time
\newtheorem{assumption}{Assumption}
\newcommand{\EE}{\mathbb{E}}
\newcommand{\Ocal}{\mathcal{O}}  
\begin{document}
 
\title{\textbf{Bandit Learning in Decentralized Matching Markets}}

\author{\name Lydia T.~Liu \email lydiatliu@cs.berkeley.edu
	\AND
	\name Feng Ruan \email fengruan@cs.berkeley.edu 
		\AND
		\name Horia Mania \email hmania@cs.berkeley.edu 
			\AND
	\name Michael I.\ Jordan \email jordan@cs.berkeley.edu~~\\
	\addr Department of Electrical Engineering and Computer Sciences and Department of Statistics\\
	University of California\\
	Berkeley, CA 94720-1776, USA}

\editor{}

\maketitle 

%\tableofcontents

\begin{abstract}
	We study two-sided matching markets in which one side of the market (the players) does not have a priori knowledge about its preferences for the other side (the arms) and is required to learn its preferences from experience. Also, we assume the players have no direct means of communication. This model extends the standard stochastic multi-armed bandit framework to a decentralized multiple player setting with competition. We introduce a new algorithm for this setting that, over a time horizon $T$, attains $\Ocal(\log(\horizon))$ stable regret when preferences of the arms over players are shared, and $\Ocal(\log(\horizon)^2)$ regret when there are no assumptions on the preferences on either side. Moreover, in the setting where a single player may deviate, we show that the algorithm is incentive compatible whenever the arms' preferences are shared, but not necessarily so when preferences are fully general.
\end{abstract}

\begin{keywords}
Online learning, Multi-armed bandits, Stable matching, Two-sided markets
\end{keywords}

\section{Introduction}

A fundamental question at the intersection of learning theory and game theory is as follows: \emph{how should individually rational agents act when they have to learn about the consequences of their actions in the same uncertain environment?} \lledit{While there has been a long line of work on learning in games \citep{fudenberg1998theory,hu1998multiagent,littman1994markov}, recent developments in statistical learning theory and online learning have opened the door to a new line of work that aims to quantify precisely the amount of data players require to achieve good performance in games with stochasticity. The problems studied are motivated by a broad range of modern applications, from modeling competition among firms \citep{MansourSW18,Aridor2019competing} to implementing protocols for wireless networks \citep{Liu10distributed, pmlr-v49-cesa-bianchi16, Shahrampour17}.} A particularly salient application is the online marketplace\footnote{Examples include online labor markets (Upwork, TaskRabbit, Handy), online crowdsourcing platforms (Amazon Mechanical Turk), online dating services (Match.com) and peer-to-peer sharing platforms (Airbnb).}, where two sides of a market need to be matched and market participants have uncertainty about their preferences, leading to a concomitant need for exploration and \lledit{statistical learning.}

The multi-armed bandit is a core learning problem that models decision-making under uncertainty: a player is faced with a choice among $K$ actions---``arms''---each of which is associated with a reward distribution, and the goal is to learn which action has the highest reward, doing so as quickly as possible so as to be able to reap rewards even while the learning process is underway. Even in the more complex setting involving multiple players participating in a two-sided matching market, the bandit problem can be extended to model how players simultaneously learn and acquire information about their preferences, while satisfying economic constraints imposed by the need to realize a matching. %To introduce a game-theoretic aspect into the bandit problem, it is natural to place the problem into the context of a two-way matching market, where the choices faced by the players are identified with the entities on the other side of the market, and where the need to realize a matching imposes economic constraints and incentives.  
\lledit{Such a blend of bandit learning with two-sided matching markets was  introduced by \citet{Das2005two}, who formulated a problem in which the players and the arms form the two sides of the market, and each side has preferences over the other side. \citet{Das2005two} explored possible algorithms via numerical simulations. \citet{liu20competing} studied a refinement of this problem setting and proposed the first algorithm with theoretical guarantees. In contradistinction to the classical formulation of matching markets, the preferences of the players are assumed to be unknown a priori and must be learned from the rewards that are received when arms are pulled successfully. 
Compared to prior work studying multi-player bandits, the problem formulation we consider introduces an aspect of scarcity and competition---when multiple players attempt to pull the same arm, there is a conflict, and only the player that is most preferred by that arm receives a reward.}

\citet{liu20competing} focused on a \emph{centralized} setting in which the players are able to communicate with a central platform that computes matchings for the entire market.  They defined a notion of regret called \emph{stable regret}, which is the average reward a player obtains less the rewards achieved under a stable matching with respect to the true preferences of the market. It was shown in this setting that an algorithm that combines the upper confidence bound principle from the bandit literature \citep{Lai85asymp} with the Gale-Shapley algorithm from the matching market literature \citep{gale62college} can achieve low stable regret.

\lledit{
While \citet{liu20competing} discussed a decentralized version of the problem, where the actions of the players cannot be coordinated by a central platform, and studied a simple explore-then-commit algorithm for this setting, finding a viable algorithm for the decentralized case was left as an open problem. }
The decentralized setting is arguably a more useful formulation in practice.  Indeed, most online marketplaces are decentralized, that is, there is no central clearinghouse and players are unable to coordinate their actions with each other directly. However, players may observe limited information about past matchings, such as their own conflicts%\footnote{E.g., when one user loses the competition to book a vacation rental on Airbnb to another.}
. 

New theoretical challenges arise in the decentralized setting, in both the design and the analysis of algorithms.  Given that players may use past matchings to inform their current play (e.g., to avoid conflicts), a player who has statistical uncertainty about their preferences over arms may impose externalities on other players not only at the current time step but also into the future. In essence, the decentralized formulation more fully exposes the challenges of the economic and learning aspects of the problem.

We propose a solution for the decentralized version of the two-sided matching bandit problem.  Our primary contribution is a new multiplayer bandit algorithm, \emph{Decentralized Conflict-Avoiding Upper Confidence Bound} (CA-UCB), that is guaranteed to yield for all players a stable regret that grows polylogarithmically with the number of rounds of interaction between players and arms, also known as the time horizon, $\horizon$. \lledit{In particular, to prove this regret guarantee we roughly showed that the market converges to a stable matching at a polylogarithmic rate.} When the arms have the same preferences over players we offer a better guarantee. In this case we prove that the stable regret grows at most logarithmically with the time horizon. Informally, we can state our results as follows.

\begin{theorem}[Informal main results]
	Suppose we have a market with $\numagents$ players and $\numarms$ arms, with arbitrary preferences, and let $\Delta$ be the minimum absolute gap between the mean rewards of different arms. Then, if all players run the CA-UCB algorithm  for $T$ steps, \lledit{the probability that the market is unstable at time $T$ is $\Ocal(\log(T)^2/T)$ (see Theorem~\ref{thm:regret_general_pref}). Moreover, the players' stable regret satisfies
	\begin{align*}
		R(\horizon) = \Ocal \left(\rho^{\numagents^4} \frac{\log(\horizon)^2}{\Delta^2}\right), \text{ for some } \rho > 1.
		\tag{Corollary~\ref{cor:general_regret}} %\tag{Theorem~\ref{thm:regret_general_pref}}
	\end{align*}}
When the arms have the same preferences over players, the players' stable regret satisfies 
	\begin{align*}
		R(\horizon) = \Ocal \left(\numagents^2 \numarms \frac{\log(\horizon)}{\Delta^2}\right). \tag{Theorem~\ref{thm:global-players}}
	\end{align*}
	Moreover, if $\numagents - 1$ players implement the CA-UCB algorithm, the remaining player cannot significantly improve their regret by running a different algorithm (Proposition~\ref{prop:incentive-global}).
\end{theorem}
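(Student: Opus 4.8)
The plan is to prove the regret and instability claims through a two-layer decomposition that separates the statistical learning problem from the matching dynamics, and to handle incentive compatibility separately. First I would fix a \emph{clean event} on which, simultaneously for every player, every arm, and every round up to $\horizon$, the UCB indices sandwich the true mean rewards. By Hoeffding's inequality and a union bound over players, arms, and time, the complement of this event carries probability that is polynomially small in $1/\horizon$ (up to $\log \horizon$ factors coming from the number of pulls — roughly $\log(\horizon)/\Delta^2$ — needed to separate arms whose mean rewards differ by at least $\Delta$). The point of the clean event is that once each player has sampled each arm enough times, its estimated preference order over arms coincides with the true order, so from that round onward the conflict-avoiding dynamics behave exactly as they would if preferences were known.

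The second and main step is to analyze the CA-UCB dynamics under known (correct) preferences as a stochastic process on the space of matchings and to bound the probability that this process sits at an unstable configuration at a given time. I would model the dynamics as a chain driven by the players' random stay/switch decisions whose only absorbing states are stable matchings, and argue that from any unstable configuration there is a conflict-resolution path of bounded length reaching a stable matching, each step of which occurs with a probability bounded below. This yields geometric decay of the instability probability; combining it with the statistical error from the clean event gives the $\Ocal(\log(\horizon)^2/\horizon)$ instability bound, and summing the per-round suboptimality contributions gives the $\Ocal(\rho^{\numagents^4}\log(\horizon)^2/\Delta^2)$ regret. The factor $\rho^{\numagents^4}$ appears precisely because the lower bound on the per-step move probability degrades exponentially in the number of players: the path to stability may have to traverse many intermediate configurations, each contributing a constant-factor loss.

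For the shared-preferences case I would exploit the much simpler structure: when the arms share a common ranking of the players, the stable matching is unique and has a serial-dictatorship form, so the dynamics cannot cycle among competing stable matchings and the convergence path has length and per-step probability controlled polynomially, rather than exponentially, in $\numagents$. This removes the $\rho^{\numagents^4}$ factor, upgrades the horizon dependence from $\log(\horizon)^2$ to $\log(\horizon)$, and yields the explicit $\Ocal(\numagents^2 \numarms \log(\horizon)/\Delta^2)$ bound. For incentive compatibility (still in the shared-preferences regime) I would argue that no algorithm the deviating player runs can hold, for more than $\Ocal(\log \horizon)$ rounds, an arm strictly preferred to its stable partner: the other $\numagents-1$ players running CA-UCB eventually and persistently claim their own stable arms, and because the ranking is shared those arms prefer their incumbents, so any attempt by the deviator to retain a better arm is blocked. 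Hence the deviator's reward is upper-bounded by the stable-matching reward up to $\Ocal(\log \horizon)$, which CA-UCB already attains, so deviation yields no significant improvement.

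The hard part will be the second step — controlling the convergence rate of the \emph{decentralized} conflict-avoiding dynamics. Unlike the centralized Gale–Shapley procedure, here players act simultaneously using only local conflict information, so the induced process can temporarily move away from stability; the analysis must rule out long-lived unstable cycles while producing an explicit (even if exponential) lower bound on the rate of progress toward a stable matching. Isolating a short, always-available conflict-resolution path and lower-bounding its probability uniformly over configurations is what drives both the $\rho^{\numagents^4}$ factor in the general case and its disappearance under shared preferences.
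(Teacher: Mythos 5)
Your second layer---modeling the no-mistake dynamics as a process whose only absorbing states are stable matchings, exhibiting a bounded-length blocking-pair resolution path from any unstable configuration, and lower-bounding each step's probability by a constant that degrades exponentially in $\numagents$---is exactly the paper's argument (the path of length at most $\numagents^4$ comes from a result of Roth and Vande Vate type, and each step succeeds with probability $\varepsilon=(1-\stayprob)\stayprob^{\numagents-1}$, whence the $\varepsilon^{-\numagents^4}$, i.e.\ $\rho^{\numagents^4}$, factor). However, your first layer has a genuine gap. A ``clean event'' on which the UCB indices sandwich the true means for all rounds does hold with high probability, but on that event the players' UCB \emph{rankings} can still be wrong at arbitrary times throughout the horizon: an under-sampled arm keeps an inflated index until it is pulled enough, and conflicts can postpone those pulls indefinitely. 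So there is no single round after which ``the dynamics behave exactly as if preferences were known,'' and geometric decay from that point cannot be invoked. The paper instead decomposes \emph{per time step}: the matching at time $t$ is unstable only if either some player's UCB ranking within its plausible set was wrong during the last $h_t$ rounds, or the rankings were correct throughout the window yet the dynamics failed to absorb (exponentially unlikely in $h_t$). Summing over $t$, each ranking mistake is counted at most $h_T+1$ times; choosing $h_t \asymp \varepsilon^{-\numagents^4}\numagents^4\log(\horizon)$ is what produces the second $\log(\horizon)$ factor, which your sketch does not account for. Your proposal also omits the $\varepsilon^{-1}$ cost of converting ``player $\player{i}$ ranks arm $\arm{j}$ above $\arm{k}$'' into ``player $\player{i}$ actually pulls $\arm{j}$'' (needed so the single-player UCB counting applies), which requires all other players to draw a delay simultaneously.

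For the globally ranked case the paper's route is different from yours and worth noting: it runs the algorithm with $\stayprob=0$, so there is no stochastic convergence argument at all. The mechanism is a deterministic backtracking lemma: if player $\player{k}$ pulls a suboptimal arm or loses a conflict at time $t$, then some player $\player{j}$ with $j\le k$ must have pulled a suboptimal arm, despite having its stable arm in its plausible set, at some time in $[t-k,t]$. Every bad round for $\player{k}$ is thus charged to a UCB mistake of a better-ranked player within a window of length $k$ (a constant independent of $\horizon$), which is why the horizon dependence is $\log(\horizon)$ rather than $\log(\horizon)^2$ and why the rank enters cubically. Your serial-dictatorship intuition is correct but does not by itself yield the $\numagents^2\numarms$ bound. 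Your incentive-compatibility argument is essentially the paper's: the deviator can hold an arm preferred to its stable match only in rounds traceable to a mistake by a better-ranked CA-UCB player, and those rounds number $\Ocal(\log(\horizon)/\mingap^2)$.
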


%for two main classes of preference structures for the two-sided market: (i) where the arms have the same preferences, but the players do not (Section~\ref{sec:global-players}); (ii) where both the arms and players can have \emph{arbitrary} preferences and the stable matching may not be unique (Section~\ref{sec:general-prefs}). In the former case, the stable regret is found to be logarithmic in the horizon, matching the optimal rate found in the single-player bandit problem.  In this case, the stable regret scales as polynomial in the size of the market. In the second case, where preferences can be arbitrary, we are able to establish an upper bound on the regret that is polylogarithmic in the horizon, but with a dependence on the market size may be exponential.

%\todo{other things to add: what the conflict avoidance does.}

The %decentralized conflict-avoiding upper confidence bound
CA-UCB algorithm is simple and does not require communication between players.
There are two features of this algorithm that enable players to avoid conflicts. Firstly, when implementing this algorithm a player observes the actions of other players in the previous round and avoids attempting an arm if that arm was previously pulled by a better player for it. Secondly, players randomly decide whether to choose the same arm as at the previous time step or to make a new decision. When players implement our method conflicts can still occur, but our analysis shows that the expected number of conflicts would be small. 

%$A key feature that enables it to achieve low stable regret, even under no assumptions on the preference structure, is its use of randomization as an implicit conflict avoidance mechanism among players. The techniques developed in this work to combine randomized conflict avoidance with the upper confidence bound principle to reach a stable matching may be of independent interest. %The techniques developed in this work to harness the benefits of randomness for efficient convergence to a stable matching may be of independent interest. 

The rest of the paper is organized as follows: In Section~\ref{sec:problem}, we review the matching bandits problem, following the presentation in \citet{liu20competing}, and fully specify the decentralized setting that is our focus. In Section~\ref{sec:alg}, we motivate and introduce the algorithm that is the subject of our regret analyses in Sections~\ref{sec:global-players} and~\ref{sec:general-prefs}. In Section~\ref{sec:ic}, we discuss the incentive compatibility of this algorithm, showing one positive and one negative result. \lledit{Our theoretical guarantee on the performance of CA-UCB exhibits an exponential dependence on the size of the market. In Section~\ref{sec:expt} we show empirically that this dependence is an artifact of our analysis; CA-UCB performs much better in practice than these results suggest. }
In Section~\ref{sec:relwork}, we survey the related literatures, and in Section~\ref{sec:dis}, we present a thorough discussion of our results, %including a detailed comparison to related work by \citet{Sankararaman20dominate}, 
as well as avenues for future work.

\section{Problem Setting}\label{sec:problem}

We consider a multiplayer multi-armed bandit problem with $\numagents$ players and $\numarms$ stochastic arms, with $\numagents \leq \numarms$. We denote the set of players by $\agentset = \{\player{1}, \player{2}, \ldots, \player{\numagents}\}$ 
and the set of arms by $\armset = \{ \arm{1}, \arm{2}, \ldots, \arm{\numarms}\}$. 
At time step $t$, each player $\player{i}$ attempts to pull an arm $\action{t}{i} \in\arms$. 

When multiple players attempt to pull the same arm, only one player will successfully pull the arm, according to the 
arm's preferences via a mechanism we detail shortly. Then, if player $\player{i}$ successfully pulls arm $\action{t}{i}$ at time $t$, they are said to be \emph{matched} to $\action{t}{i}$ at time~$t$ and they receive a stochastic reward, $\reward{i}{\actions{t}}{t}$,  sampled from a $1$-sub-Gaussian distribution with mean~$\meanreward{i}{\action{t}{i}}> 0$.

For each player $\player{i}$ we assume $\meanreward{i}{j} \neq \meanreward{i}{j^\prime}$ for all distinct arms, $\arm{j}$ and $\arm{j^\prime}$. If $\meanreward{i}{j} > \meanreward{i}{j^\prime}$, we say that player $\player{i}$ \emph{truly prefers} $\arm{j}$ to $\arm{j'}$, and denote this as $\arm{j} \psucc{i} \arm{j'}$.

Each arm $\arm{j}$ has a fixed, known, and strict preference ordering over all the players, $\asucc{j}$. In other words, $\player{i} \asucc{j} \player{i^{\prime}}$ indicates that arm $\arm{j}$ prefers player $\player{i}$ to player $\player{i^{\prime}}$.
If two or more players attempt to pull the same arm $\arm{j}$, there is a \emph{conflict} and only the most preferred player successfully pulls the arm to receive a reward; the other player(s) $\player{i^\prime}$ is said to be \emph{unmatched} and does not receive any reward, that is, $\reward{i^\prime}{\actions{t}}{t} = 0$.

A \emph{stable matching} \citep{gale62college} of players and arms is one where no pair of player and arm would prefer to be matched with each other over their respective matches. Given the full preferences of the arms and players, arm $\arm{j}$ is called a \emph{achievable match} of player $\player{i}$ if there exists a stable matching according to those preferences such that $\arm{j}$ and~$\player{i}$ are matched. 
We say $\arm{j}$ is the \emph{optimal match} of player $\player{i}$ if it is the most preferred achievable match. Similarly, we say $\arm{j}$ is the \emph{pessimal match} of player $\player{i}$ if it is the least preferred achievable match. 
We denote by $\optmatches$ and $\pessmatches$ the functions from $\players$ to $\arms$ that define the optimal and pessimal matches of a player according to the true preferences of the players and arms.

In the decentralized matching setting, a notion of \emph{stable regret}, as introduced in \citet{liu20competing}, is useful for analyzing the performance of learning algorithms. We consider a player's \emph{player-pessimal stable regret}, where the baseline for comparison is the mean reward of the arm that is the player's pessimal match.\footnote{We can define analogously the \emph{player-optimal stable regret} corresponding to the player's optimal match, denoted $\optregret{i}(\horizon)$. The player-pessimal stable regret and player-optimal stable regret tend to coincide in many real-world markets, such as in unbalanced random matching markets \citep{ashlagi2017unbalanced} where the stable matching is essential unique. This is as well the case when players are globally ranked. In this work, we focus on the player-pessimal stable regret.} It is defined as follows for player~$\player{i}$: 
\begin{align}
	\label{eq:pess_regret}
	\pessregret{i}(\horizon) := \horizon \meanreward{i}{\pessmatch{i}} - \sum_{t = 1}^\horizon \EE \reward{i}{\actions{t}}{t}.
\end{align}

\lledit{The above notion of stable regret considers regret from the perspective of the players only, that is, we are primarily interested in how the players perform with respect to their stable arms over time. Focusing on the welfare of one side of the market is consistent with the stable matching literature, in particular that on school choice, where one side of the market (the schools) are said to have ``priorities'', rather than ``preferences'',  for the other side of the market (the students), and it is the students' welfare that is of primary interest \citep{abdulkadirouglu2003school, abdulkadiroglu2006changing}.\footnote{We thank a reviewer for pointing out this connection to the economics literature.} Recently, \citep{cen2021regret} studied fairness and social welfare in the context of matching markets. 
}

%\lnote{Should we discuss convergence to stability here? A bit weird because section 4 does not discuss that and the results we have there also don't imply convergence to stability.}
% 

%Recall, that we say a player $i$ prefers arm $j$ over arm $j^\prime$ when $\meanreward{i}{j} > \meanreward{i}{j^\prime}$. 

In order to fully specify the problem we need to clarify what information the players have access to. We consider the following decentralized setting:

\paragraph{Decentralized with Conflict Information} At each round, each player attempts to pull an arm, with the choice of arm based on only their rewards and observations from previous rounds. At the end of the round, all players can observe the winning player for each arm. They can see their own rewards only if they successfully pull an arm. They cannot see the rewards of other players.
We also assume that all players know, for each arm, which players are ranked higher than themselves.\footnote{This assumption allows for a  cleaner analysis of our algorithm. Our results can be generalized to the setting where players do not know this information initially because the arms know their own preferences and the conflicts between players are resolved deterministically. It is sufficient for each player to assume in the beginning that they are the most preferred player by every arm. Then, each lost conflict reveals which players are more preferred by which arms. This procedure would introduce at most $\numarms \numagents^2$ conflicts. 
 %It will only take any player one conflict with another player on any arm to know that the latter is ranked higher by that arm. 
	%It can be checked that if they were to run Algorithm~\ref{alg:ucb_ca_random} as is, each player will only incur an additional number of conflicts that is at most quadratic in the size of the market.
}

%This assumption is not limiting. If the information was not known a priori, because arms know their own preferences and because the conflicts are resolved deterministically, the players can determine the preferences of all the arms $\{\asucc{j}\}_{j\in \arms}$ in a fixed number of rounds that scales at most quadratically with $\numarms$ and $\numagents$. \todo{was confusing to Nikhil should add that this decentralized conflict learning phase requires players to randomly pull arms? Or remove this claim about WLOG. }

%\todo{Explain that if this is not known ahead of time, then we can have a decentralized conflict learning phase with complexity at most $\numagents^2\numarms$ that succeeds with high probability?}

%\paragraph{Notation} $\agentset$ is the set of players, $\armset$ is the set of arms, and $|\agentset| =\numagents$,  $|\armset| = \numarms$.  Let $\psucc{i}$ denote the strict preference ordering that player $\player{i}$ has over arms, and let $\asucc{j}$ denote the strict preference ordering that arm $\arm{j}$ has over players. %Throughout this section we assume the stable matching is unique. WLOG let the unique stable matching pairs be $(\player{i}, \arm{i})$. 

\section{Algorithm: Decentralized Conflict-Avoiding UCB}\label{sec:alg}

%In the model introduced by \citet{liu20competing} arms have preferences over players, that are expressed in the following way. When multiple players pull the same arm at a given time step only the most preferred player receives a stochastic reward. From a different perspective, we can also think of players having different levels of skill to obtain a reward from a given arm. This competition between players adds a level of complexity to attaining an efficient exploration-exploitation trade-off.  

In the single-player multi-armed bandit (MAB) the player must explore different arms in order to identify the arms with the highest mean payoff. At the same time, the player must keep selecting arms that seem to give high payoff in order to accumulate a large reward over time. The upper confidence bounds (UCB) algorithm offers an elegant solution to this exploration-exploitation dilemma. As the name suggests, UCB maintains upper confidence bounds on the arms' mean payoffs and selects the arm with the largest upper confidence bound. Then, the UCB algorithm updates the upper confidence bound corresponding to the selected arm according to the reward observed.

In the aforementioned decentralized model, however, a player cannot implement UCB obliviously of other players' actions given the possibility of conflicts. Let us discuss this issue from the perspective of player $\player{1}$. Suppose~$\player{1}$ chooses arm $\arm{1}$, and suppose player~$\player{2}$ chooses $\arm{1}$ at the same time. Then, if $\arm{1}$ prefers $\player{2}$ over $\player{1}$, a conflict arises and player~$\player{1}$ receives no reward. In addition to not receiving a reward, in this case, player $\player{1}$ does not learn anything new about the distribution of rewards offered by arm $\arm{1}$. Therefore, in the decentralized case players must balance exploration and exploitation while avoiding conflicts that they would lose. 

To see intuitively how $\player{1}$ can achieve such conflict avoidance let us assume that there are only two players and that all arms prefer $\player{2}$. Then, from the perspective of $\player{2}$, the problem is identical with the single-player MAB problem and therefore $\player{2}$ can achieve small regret by using the standard UCB method. Since $\player{2}$ aims to minimize their own regret,~$\player{2}$ will sample the arm that gives them the highest mean payoff most of the time. More precisely, there can be at most $\Ocal(\log(T))$ time steps when $\player{2}$
does not sample the best arm for themself.

On the other hand, 
$\player{1}$ must minimize the number of times they select the same arm as $\player{2}$ because they would lose the conflicts with $\player{2}$.  Because most of the time player $\player{2}$ chooses the best arm for themselves, the following simple heuristic allows player $\player{1}$ to avoid choosing the same arm as $\player{2}$ most of the time: player $\player{1}$ should not select the arm $\player{2}$ chose at the previous time step. 

It turns out that this conflict-avoidance heuristic, combined with the UCB method, gives rise to an algorithm that provably achieves small regret for all players. We call this method \emph{Decentralized Conflict-Avoiding Upper Confidence Bound}, or \emph{CA-UCB} for short, and detail it in Algorithm~\ref{alg:ucb_ca_random}. Before introducing our algorithm, let us first introduce some notation for the players' actions. We use $\attempt{i}(t)$ to  denote the player $\player{i}$'s attempted arm at time $t$, and $\pull{i}(t)$ to denote the player $i$'s successfully pulled arm at time~$t$. When the player fails to pull an arm successfully because of a lost conflict, we have $\pull{i}(t) = \emptyset$.

\begin{algorithm}[t]
	\caption{CA-UCB with random delays}\label{alg:ucb_ca_random}
	\begin{algorithmic}[1]
		\renewcommand{\algorithmicrequire}{\textbf{Input: } $\stayprob \in [0,1)$}
		\renewcommand{\algorithmicensure}{\textbf{Output: }}
		\REQUIRE 
		\FOR{$t = 1, \ldots, T $}
		\FOR{$i = 1, \ldots, N $}
		\IF{$t=1 $}
		\STATE 	Set upper confidence bound to $\infty$ for all arms.
		\STATE Sample an index $j \sim 1, \ldots,\numarms$ uniformly at random. Sets $\attempt{i}_t \leftarrow \arm{j}$.
		\ELSE
		\STATE Draw $\delaydraw{i}(t) \sim Ber(\lambda)$ independently.
		\IF{$\delaydraw{i}(t) = 0 $}
		\STATE Update plausible set $\potset{i}(t)$ for player $\player{i}$: \[\potset{i}(t) := \{ \arm{j}: \player{i} \asucc{j} \player{k} \text{ or } \player{i} = \player{k}, \text{ where } \pull{k}(t-1) = \arm{j}  \}.\]%an arm $a$ is plausible if of all the players who has won a conflict on $a$ against $\player{i}$, none of them chose $a$ at time $t-1$.
		\STATE Pulls $a \in \potset{i}(t)$ with maximum upper confidence bound. Sets $\attempt{i}_t \leftarrow a$.
		\ELSE
		\STATE Pulls $\attempt{i}_{t-1}$. Sets $\attempt{i}_t \leftarrow \attempt{i}_{t-1}$.
		\ENDIF
		\ENDIF
		\IF{$\player{i}$ wins conflict}
		\STATE Update upper confidence bound for arm $\attempt{i}_{t}$. 
		\ENDIF
		\ENDFOR
		\ENDFOR
		%	\STATE \textbf{return} $\x_T$
	\end{algorithmic}
\end{algorithm}

According to Algorithm~\ref{alg:ucb_ca_random}, at each time step $t$ each player $\player{i}$ independently samples a biased Bernoulli random variable $\delaydraw{i}(t)$ with mean $\stayprob \in [0,1)$. When $\delaydraw{i}(t)$ comes up $1$, the player chooses the same arm as they did at the previous time step. We will soon return to explain the rationale behind staying on the same arm as the previous time step with some probability. For now, let us focus on the case where $\delaydraw{i}(t)$ comes up $0$.

When the Bernoulli random variable $\delaydraw{i}(t)$ comes up $0$, the player constructs a \emph{plausible set} of arms that includes all arms except those that the player would not have been able to pull successfully at the previous time step. In other words, the player $\player{i}$ will consider an arm plausible, only if in the previous time step $t-1$, the arm was not pulled by a player that the arm strictly prefers to $\player{i}$. Then, the player chooses the arm in the plausible set with the highest upper confidence bound, which is updated as in the single-player UCB method. We formally define the upper confidence bound in Equation~\ref{eq:ucb} of Section~\ref{sec:global-players}.

We refer to the parameter $\stayprob$ as the \emph{delay probability}. When $\stayprob = 0$ the actions of the players that implement CA-UCB are deterministic functions of the history up to that point. This property has no impact on the algorithm's convergence when the players are globally ranked (i.e., all arms have the same preferences), as shown in Section~\ref{sec:global-players}. However, for more general preference structures, if all players implement CA-UCB with delay probability zero, they can enter into infinite loops. The following simple example showcases this failure mode. 

\begin{example}[2-player globally ranked arms]\label{ex:2player}
	Consider the following setting with two players and two arms:
	\begin{align*}
		\player{1}: \arm{1} \succ \arm{2} \quad\quad \arm{1}: \player{1} \succ \player{2} \\
		\player{2}: \arm{1} \succ \arm{2} \quad\quad \arm{2}: \player{2} \succ \player{1}.	
	\end{align*}
	In this case the unique stable matching is $(\player{1}, \arm{1}), (\player{2}, \arm{2})$.
\end{example}

Suppose both players in Example~\ref{ex:2player} implement CA-UCB with zero probability of delay. 
Through a random initialization of CA-UCB it is possible that both players select arm $\arm{1}$ at the first time step. Then, $\player{2}$ loses the conflict and at the next step will choose $\arm{2}$, which is the only arm in their plausible set. On the other hand, the UCB of player $\player{1}$ for arm $\arm{2}$ is positive infinity at this point because they have not pulled it yet.  Hence, $\player{1}$ attempts to pull $\arm{2}$ at the second time step. Since $\arm{2}$ prefers $\player{2}$, $\player{1}$ loses the conflict and their UCB for arm $\arm{2}$ remains infinite.
The same argument shows that both players will keep choosing the same arm, alternating between $\arm{1}$ and $\arm{2}$. As long as they stay in this cycle, both players experience a constant stable regret. We showcase another example of when deterministic conflict-avoiding might fail in Appendix~\ref{app:example}.

To break such cycles CA-UCB incorporates randomness via the delay probability. As we will see, for arbitrary preferences and delay probability $\stayprob \in (0,1)$, the CA-UCB algorithm achieves $\Ocal(\log(T)^2)$ regret, with the hidden constant depending on $\stayprob$, the gap between mean rewards, and the number of players and arms. 
On the other hand, the size of the regret that we obtain depends exponentially on the number of players, regardless of the choice of~$\stayprob$. %We believe that this poor dependence on the number of players to be a consequence of our specific analysis; numerical experiments suggest that the dependence is better in practice even in a general setting. 
We can obtain stronger results by making additional assumptions on the structure of preferences. In particular, if the players are globally ranked, then we obtain a polynomial dependence on the number of players; moreover, we obtain $\Ocal(\log(T))$ regret.  We begin with this specialized setting in Section~\ref{sec:global-players} and turn to the general case in Section~\ref{sec:general-prefs}.

% !TeX root = main.tex 

\section{Globally Ranked Players}
\label{sec:global-players}

%\todo{The results in section are complete.}

In this section, we prove regret bounds for the CA-UCB algorithm, Algorithm~\ref{alg:ucb_ca_random}, without random delays (i.e., with $\lambda = 0$). We assume all arms have the same preferences over players, whereas each player may have arbitrary preferences over arms. This preference structure is made precise in the following assumption.

\begin{assumption}[Globally ranked players]\label{assmp:global_players} 
	We assume the players are globally ranked: for any $\player{i}$, $\player{i'}$ where $i < i'$, and any arm 
	$\arm{j}$, we have $\player{i} \asucc{j} \player{i'}$.
\end{assumption}

In other words, more preferred players have lower indices. Under this assumption, there is a unique stable matching in the market. By re-indexing the arms we can assume without loss of generality that the stable player-arm pairs are $\{(\player{i}, \arm{i})\}_{i=1 }^{\numagents}$. Under such an indexing, the following critical property holds: for any player $\player{i}$ and any arm $\arm{j}$ with $j > i$, $\player{i}$ must prefer $\arm{i}$ over $\arm{j}$; that is, $\arm{i} \psucc{i} \arm{j}$. Also, since the stable matching is unique, there is a single notion of stable regret, that is, for any player $\player{k}$, we have $\regret{k}(\horizon) := \pessregret{k}(\horizon) = \optregret{k}(\horizon)$.

Our goal in this section is to prove an upper bound on the stable regret of a player, taking into account their ranking in the market. We use the following notation to denote the gaps in mean rewards of arms for players $\player{i}$, $\player{j}$:
\begin{equation}
\gap{i}{j} := \meanreward{i}{i} - \meanreward{i}{j} ~~\text{and}~~ \gap{i}{\emptyset} := \meanreward{i}{i}. 
\end{equation}
We use $\mingap^2 := \min_{i < j} |\gap{i}{j}|^2$ to denote the minimum squared gap.

\begin{theorem}[Stable regret under globally ranked players]\label{thm:global-players}
\label{thm:global}
Suppose each player runs Algorithm~\ref{alg:ucb_ca_random} with $\stayprob = 0$. The following 
regret bound holds for any player $\player{k}$ and any horizon $T \ge 2$: %, up to universal constants:
\begin{equation}
\regret{k}(\horizon)\leq 6 k^2  \left( \frac{\log \horizon}{\Delta^2} + 1\right) \cdot \bigg( (\numarms - k)\gap{k}{\emptyset}
		+  k \sum_{i: \arm{k} \psucc{k}\arm{i}} \gap{k}{i} \bigg).
\end{equation}
\end{theorem}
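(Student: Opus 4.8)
The plan is to decompose the stable regret of player $\player{k}$ round-by-round and then bound separately the number of suboptimal successful pulls and the number of lost conflicts. Since under Assumption~\ref{assmp:global_players} the stable matching is unique with $\pessmatch{k} = \arm{k}$, the instantaneous regret $\meanreward{k}{k} - \EE\reward{k}{\actions{t}}{t}$ at round $t$ takes one of three forms: it is $0$ when $\pull{k}(t) = \arm{k}$; it equals $\gap{k}{j}$ when $\player{k}$ successfully pulls a suboptimal arm $\arm{j}$ with $\arm{k}\psucc{k}\arm{j}$; and it equals $\gap{k}{\emptyset} = \meanreward{k}{k}$ when $\player{k}$ loses a conflict, i.e. $\pull{k}(t) = \emptyset$. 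Rounds in which $\player{k}$ successfully pulls an arm it prefers to $\arm{k}$ contribute negative regret and are dropped. Hence
\begin{equation*}
\regret{k}(\horizon) \le \gap{k}{\emptyset}\cdot \EE\big[\#\{t \le \horizon: \pull{k}(t) = \emptyset\}\big] + \sum_{j:\, \arm{k}\psucc{k}\arm{j}} \gap{k}{j}\cdot \EE\big[\numpulls{k}{j}{\horizon}\big],
\end{equation*}
and it remains to control the expected number of lost conflicts and of suboptimal successful pulls.

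To bound the suboptimal pulls I would use UCB concentration, noting that $\player{k}$ updates its confidence bound for $\arm{j}$ exactly on the rounds it successfully pulls $\arm{j}$. The clean case is when the optimal arm $\arm{k}$ lies in the plausible set $\potset{k}(t)$ at the same round: then selecting $\arm{j}$ over $\arm{k}$ forces $\ucb{k}{j}{t} \ge \ucb{k}{k}{t}$, and the standard Chernoff/union-bound argument caps the number of such rounds by $\Ocal(\log\horizon/(\gap{k}{j})^2) \le \Ocal(\log\horizon/\mingap^2)$. The complication is the rounds where $\arm{k}\notin\potset{k}(t)$, since then no comparison against $\arm{k}$ is available. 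By the definition of $\potset{k}(t)$ together with Assumption~\ref{assmp:global_players}, this happens precisely when some more preferred player $\player{i}$ ($i<k$) had $\pull{i}(t-1)=\arm{k}$; as $\arm{k}$ is itself suboptimal for every such $\player{i}$ (because $\arm{i}\psucc{i}\arm{k}$), the number of such rounds is at most $\sum_{i<k}\numpulls{i}{k}{\horizon}$, a quantity controlled at rank $i<k$.

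For the lost conflicts, observe that if $\player{k}$ attempts $\arm{j}\in\potset{k}(t)$ and loses, the winner is a more preferred player $\player{i}$ with $i<k$. Membership $\arm{j}\in\potset{k}(t)$ forces that no more preferred player even attempted $\arm{j}$ at round $t-1$ (had one attempted it, the successful puller would be equally or more preferred and would exclude $\arm{j}$ from $\potset{k}(t)$), so $\player{i}$ must have switched its attempt onto $\arm{j}$ at round $t$. Thus each lost conflict of $\player{k}$ is charged to a higher-ranked player changing its attempted arm, and the number of such switches is in turn governed by that player's own suboptimal-pull and lost-conflict counts (a player settled on its stable arm does not switch). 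This sets up an induction on the player index. The base case $k=1$ is immediate: $\player{1}$ is preferred by every arm, so $\potset{1}(t)=\armset$ always, it wins every conflict, updates every round, and therefore behaves exactly as a single-player UCB learner, giving $\numpulls{1}{j}{\horizon}\le\Ocal(\log\horizon/\mingap^2)$ and zero lost conflicts. In the inductive step the blocking term $\sum_{i<k}\numpulls{i}{k}{\horizon}$ and the switch counts are each bounded by sums of strictly-higher-ranked players' suboptimal-pull counts, of order $\log\horizon/\mingap^2$; unrolling the recursion over the $k-1$ more preferred players and collecting the per-arm gaps yields the $k^2$ prefactor, the $(\numarms-k)\gap{k}{\emptyset}$ conflict contribution, and the $k\sum_{i}\gap{k}{i}$ suboptimal-pull contribution.

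The main obstacle is precisely the coupling induced by the plausible sets: player $\player{k}$'s optimal arm can be transiently removed from its choice set by a higher-ranked player, so neither the UCB suboptimal-pull bound nor the statement that a settled player loses no conflicts holds round-by-round in isolation. The crux is therefore to charge every blocking event (optimal arm excluded) and every lost conflict to a suboptimal action of a \emph{strictly} more preferred player, and to verify that this charging always points to lower indices so the induction on rank terminates at $\player{1}$ without circularity. Propagating the $\Ocal(\log\horizon/\mingap^2)$ bounds through this recursion while tracking how many times each gap $\gap{k}{j}$ is incurred is the delicate bookkeeping that produces the polynomial-in-$k$ constant.
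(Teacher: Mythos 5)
Your proposal is correct and follows essentially the same route as the paper: the same regret decomposition into lost conflicts and suboptimal successful pulls, the same charging of both the exclusion of $\arm{k}$ from the plausible set and each lost conflict to a suboptimal action of a strictly higher-ranked player within the preceding $k$ rounds (the paper's Lemmas~\ref{lem:global_subopt} and~\ref{lem:global_conf}, proved by exactly the rank induction you describe with $\player{1}$ as a single-player UCB learner), and the same UCB concentration bound to control each charged event. The bookkeeping you defer is carried out in the paper by unrolling this recursion into unions of the events $\errorpot{j}{l}{t'}$ over pairs $j<l\le k$ and a window of $k+1$ time steps, which is where the polynomial-in-$k$ prefactor arises.
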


This result shows that the stable regret of any player in the market is logarithmic in the horizon~$\horizon$, matching the known lower bound for single-player stochastic bandits \citep{Lai85asymp}. Moreover, the regret scales cubically with the rank of the player and linearly with the number of arms.  It is useful to compare this result to the corresponding stable regret obtained by \citet{liu20competing} in the centralized setting, also under Assumption~\ref{assmp:global_players}:
\begin{align}
\regret{k}(\horizon) \leq 6 k \sum_{l = k + 1}^\numarms \left(\gap{k}{l} + \frac{\log \horizon}{\gap{k}{l}}\right).
\end{align}
We see that in the centralized setting, the dependence on the rank $k$ is linear instead of cubic. Moreover, the dependence on the reward gap is reduced to $\sum_{i > k} 1/\gap{k}{i}$, which matches the optimal dependence on the reward gaps in the classical single-player bandit problem \citep{Lai85asymp}. In the decentralized setting where players are globally ranked, \citet{Sankararaman20dominate} showed a instance dependent lower bound suggesting that the dependence on $1/\mingap^2$ cannot be improved upon in general. We further discuss lower bounds in Section~\ref{sec:dis}.

Before we proceed to the proof of Theorem~\ref{thm:global-players}, we introduce the following notation, and establish two technical lemmas. 
\begin{itemize}
	\item $\attempt{k}(t) \in [\numarms]$  is the arm attempted by $\player{k}$ at time $t$;
	\item	$\pull{k}(t)\in [\numarms] \cup \{\emptyset\}$ is outcome of $\player{k}$'s attempt at time $t$;
	\item $\numattempts{k}{i}{t}$ is the total number of attempts by $\player{k}$ of $\arm{i}$ up to time $t$;
	\item $\numpulls{k}{i}{t}$ is the total number of successful attempts by $\player{k}$ of $\arm{i}$ up to time $t$.
\end{itemize}
The following events are central to our analysis:
	\begin{equation}
	\label{eqn:def-of-Lambda-event}
		\errorpot{j}{l}{t}= \left\{\bar{A}^{(j)}(t) = a_l, \arm{j} \in S^{(j)}(t)\right\}.
	\end{equation}
In plain language, $\errorpot{j}{l}{t}$ denotes the event in which a player $\player{j}$ chooses to pull an arm $\arm{l}$ over a stable matching arm $\arm{j}$ that belongs to the plausible set at time $t$. 

The next lemma shows that if a player $\player{k}$ pulls a suboptimal arm $\arm{i}$ (with $i > k$) at time $t$, then there
	must be some same or better-ranked player $\player{j}$ (with $j \leq k$), who, though having its matching arm 
	$\arm{j}$ in their plausible set, chose to pull a suboptimal arm $\arm{l}$ (with~$l > j$) at some time $t^\prime$ between times $t-k$ and $t$. 

\begin{lemma}[Suboptimal pulls]\label{lem:global_subopt}
	For any player $\player{k}$ and arm $\arm{i}$ such that  $\arm{k} \psucc{k} \arm{i}$, %we have
	%the inclusion
	\begin{equation}
	\label{eqn:tracing-back-events}
	\left\{\pull{k}(t)= \arm{i}\right\} 
		\subseteq \errorpot{k}{i}{t} \bigcup \bigg(\bigcup_{1\le j < l \le k} \bigcup_{t-k \le t^\prime < t}  
		\errorpot{j}{l}{t'}\bigg).
	\end{equation}
%		\lnote{why not $\bigcup_{t-(k-j) \le t^\prime \le t} $? }
\end{lemma}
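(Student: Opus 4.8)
The plan is to establish the inclusion pointwise: I fix a sample path on which $\pull{k}(t) = \arm{i}$ and construct from it a deterministic \emph{backtracking chain} of triples $(\player{j}, \arm{l}, t')$, each of which records a player $\player{j}$ that successfully pulled an arm $\arm{l}$ suboptimal for it, i.e.\ $\pull{j}(t') = \arm{l}$ with $\arm{j} \psucc{j} \arm{l}$. The engine is a dichotomy applied at each triple: I ask whether $\player{j}$'s stable arm $\arm{j}$ lies in its plausible set $\potset{j}(t')$. If it does (Case~1), then by the definition in~\eqref{eqn:def-of-Lambda-event} the event $\errorpot{j}{l}{t'}$ occurs and the chain terminates; if it does not (Case~2), I backtrack one step in time. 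Note that the intermediate Case~2 triples do \emph{not} themselves correspond to occurring $\Lambda$-events (they fail the membership $\arm{j}\in\potset{j}(t')$); only the single terminal Case~1 triple produces an event on the right-hand side, and exhibiting that one event suffices for the inclusion.

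The heart of the argument is the backtracking step. Starting from the base triple $(\player{k},\arm{i},t)$, suppose $\arm{k}\notin\potset{k}(t)$. Unpacking the plausible-set definition, this forces some player $\player{k'}$ to have won $\arm{k}$ at the previous round, $\pull{k'}(t-1)=\arm{k}$, with $\arm{k}$ strictly preferring $\player{k'}$ to $\player{k}$; under Assumption~\ref{assmp:global_players} this preference is exactly $k'<k$. (The degenerate possibility that no player claimed $\arm{k}$ at $t-1$ cannot occur here, since then $\arm{k}$ would vacuously belong to $\potset{k}(t)$, i.e.\ we would be in Case~1.) The invariant that keeps the chain well-formed is that $\arm{k}$ is itself suboptimal for this better player: because $k'<k$, the re-indexing property of the unique stable matching gives $\arm{k'}\psucc{k'}\arm{k}$. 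Hence $(\player{k'},\arm{k},t-1)$ is again a triple of the required type, and I recurse, each step strictly decreasing the player index and decrementing the time by one.

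To finish, I control the chain's length and the resulting indices. Since player indices are bounded below by $1$ and player $\player{1}$ is globally most preferred—so every arm, in particular its stable arm, always lies in $\potset{1}(\cdot)$, forcing Case~1—the chain reaches termination after at most $k-1$ backtracking steps. If no backtracking occurred, the terminal event is the base event $\errorpot{k}{i}{t}$; otherwise, a backtracking step that moves from $\player{k_{s-1}}$ to $\player{k_s}$ produces the terminal triple $(\player{k_s}, \arm{k_{s-1}}, t-s)$ with $1 \le k_s < k_{s-1} \le k$ and $1 \le s \le k-1$, so the terminal event has the form $\errorpot{j}{l}{t'}$ with $1 \le j < l \le k$ and $t' = t-s \in \{t-k+1,\dots,t-1\} \subseteq [t-k, t)$. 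Either way the sample path lies in the right-hand side of~\eqref{eqn:tracing-back-events}, giving the claimed inclusion.

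I expect the main difficulty to be bookkeeping rather than conceptual: one must read the plausible-set condition in the correct direction (an arm is excluded precisely when a strictly more preferred player won it in the previous round) and verify that the suboptimality invariant $\arm{j}\psucc{j}\arm{l}$ propagates down the chain. Both rely entirely on the re-indexing convention that makes $\{(\player{i},\arm{i})\}$ the unique stable matching and yields $\arm{i}\psucc{i}\arm{j}$ for all $j>i$; once that is in hand, the strict decrease of the player index makes termination, the time window $[t-k,t)$, and the index constraints $1\le j<l\le k$ all fall out immediately. The only subtle edge case is the unclaimed-arm scenario above, which I dispatch by observing it simply collapses into Case~1.
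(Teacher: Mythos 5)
Your proof is correct and is essentially the paper's argument: the Case~1/Case~2 dichotomy on whether $\arm{k}\in\potset{k}(t)$ is exactly the paper's key inclusion, and your explicit backtracking chain is just the unrolled form of the paper's induction on $k$, with the same invariant $\arm{u}\psucc{u}\arm{m}$ for $u<m$ coming from the re-indexing convention and the same termination at player $\player{1}$.
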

\begin{proof}
The key to the proof is the following observation. Suppose the event $\left\{\pull{k}(t) = \arm{i}\right\}$ takes place. Then, one of the two things must happen: 
		\begin{itemize}
		\item $\arm{k} \in \potset{k}(t)$, in which case the event $\errorpot{k}{i}{t}$ occurs by definition.
		\item $\arm{k}  \not\in \potset{k}(t)$, in which case some better-ranked player, say $\player{u}$ with $u < k$, must have pulled the arm $\arm{k}$ at time $t-1$ according to the definition of Algorithm~\ref{alg:ucb_ca_random}.
		\end{itemize}
	This observation translates to the following assertion:  for any player $\player{k}$ and arm $\arm{i}$ where $i \neq k$, we have
		\begin{equation}
		\label{eqn:key-in-the global_subopt}
			\left\{\pull{k}(t) = \arm{i}\right\} 
				\subseteq \errorpot{k}{i}{t} \bigcup  \bigg(\bigcup_{u < k} \left\{\pull{u}(t-1) = \arm{k}\right\}\bigg).
		\end{equation}
We can now prove the lemma by induction on $k$. 

	\emph{Base case $k = 1$:} This is trivially true, due to the fact that the top-ranked player $\player{1}$ has all the 
		arms in their plausible set at all times $t$, and thus, for any arm $\arm{i}$, 
		\begin{equation*}
			\left\{\pull{1}(t) = \arm{i}\right\} = \left\{\pull{1}(t) = \arm{i}, \arm{i} \in \potset{1}(t)\right\}
				= \errorpot{1}{i}{t}.
		\end{equation*}
	
\emph{Induction step:} We assume \eqref{eqn:tracing-back-events} for all $k < m$ and  prove it also holds for $k = m$. 
		Let arm $a_i$ be such that $\arm{m} \psucc{m} \arm{i}$. By equation~\eqref{eqn:key-in-the global_subopt}, 
		we have 
		\begin{equation}
		\label{eqn:next-hypothesis-two-step-before}
			\left\{\pull{m}(t) = \arm{i}\right\} 
				\subseteq \errorpot{m}{i}{t}\bigcup  \bigg(\bigcup_{u < m} \left\{\pull{u}(t-1) = \arm{m}\right\}\bigg).
		\end{equation}
		By our assumptions we know that $\arm{u} \psucc{u} \arm{m}$ when $u < m$. Consequently, 
		we can apply the induction hypothesis for player $\player{u}$, with $u < m$, and arm $\arm{m}$ and 
		time $t-1$, to obtain that 
		\begin{equation*}
			\left\{\pull{u}(t-1) = \arm{m}\right\}
				\subseteq \errorpot{u}{m}{t-1} \bigcup  \bigg(\bigcup_{1\le j < l \le u} \bigcup_{t-u \le t' < t-1} \errorpot{j}{l}{t'}
					\bigg).
		\end{equation*}
		Taking the union over $u < m$ on both sides yields the inclusion 
		\begin{equation}
		\label{eqn:next-hypothesis-one-step-before}
		\begin{split}
			\bigcup_{u < m}\left\{\pull{u}(t-1) = \arm{m}\right\}
				&\subseteq \bigcup_{1\le j < l \le m} \bigcup_{t-m \le t' < t} \errorpot{j}{l}{t'}.
		\end{split}
		\end{equation}
		By substituting equation~\eqref{eqn:next-hypothesis-one-step-before} into equation
		\eqref{eqn:next-hypothesis-two-step-before}, we obtain the conclusion. 
\end{proof}

The next lemma tells a similar story as Lemma~\ref{lem:global_subopt}; it shows that when $\player{k}$ has a conflict, there 
	must be some better player $\player{j}$, with $j < k$, who chooses to pull a suboptimal arm $\arm{l}$ at 
	some time $t^\prime$ between times $t-k$ and $t$ although they have the matching arm $\arm{j}$ in  their plausible set. 

\begin{lemma}[Conflicts]\label{lem:global_conf}
	For any player $\player{k}$, we have the inclusion
	\begin{equation}
	\label{eqn:tracing-back-events-conf}
	\left\{\pull{k}(t) = \emptyset\right\} 
		\subseteq \bigcup_{\substack{1\leq j < k \\ j < l \le \numarms}} \bigcup_{t-k \le t^\prime \le t}  \errorpot{j}{l}{t^\prime}.
	\end{equation}
%	\lnote{why not $\bigcup_{t-(k-j) \le t^\prime \le t} $? }
\end{lemma}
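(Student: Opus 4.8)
The plan is to prove this by strong induction on the rank $k$, closely mirroring the structure of Lemma~\ref{lem:global_subopt}. The cleanest route I see is to establish a slightly stronger \emph{unified} statement --- that whenever a player fails to successfully hold its own stable arm it can be traced back to error events --- and then read off the lemma as a special case. Concretely, I would prove that for every player $\player{k}$ and every time $t$,
\[
\{\pull{k}(t)\neq\arm{k}\}\subseteq\bigcup_{\substack{1\le j\le k\\ j<l\le\numarms}}\ \bigcup_{t-k\le t'\le t}\errorpot{j}{l}{t'},
\]
where the event $\pull{k}(t)\neq\arm{k}$ covers both a lost conflict ($\pull{k}(t)=\emptyset$) and a ``reach'' for some other arm. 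The lemma then follows: when $\pull{k}(t)=\emptyset$ the player wins nothing, so the only term with $j=k$, namely $\errorpot{k}{l}{t}=\{\pull{k}(t)=\arm{l},\,\arm{k}\in\potset{k}(t)\}$, cannot occur, and only strictly-better players $j<k$ contribute, as required. For the base case $k=1$, under the re-indexing $\arm{1}$ is $\player{1}$'s favourite arm and $\player{1}$ wins every conflict, so $\arm{1}\in\potset{1}(t)$ always and $\{\pull{1}(t)\neq\arm{1}\}$ forces a successful pull of some $\arm{l}$ with $l>1$, i.e. $\errorpot{1}{l}{t}$; in particular $\player{1}$ never conflicts.

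For the induction step I would case-split on how $\player{k}$ fails to hold $\arm{k}$, reusing the one-step inclusion~\eqref{eqn:key-in-the global_subopt}. If $\player{k}$ successfully pulls a \emph{suboptimal} arm $\arm{i}$ (with $\arm{k}\psucc{k}\arm{i}$), the event is $\errorpot{k}{i}{t}$, or, if $\arm{k}\notin\potset{k}(t)$, some better player seized $\arm{k}$ at $t-1$ and Lemma~\ref{lem:global_subopt} applies. If instead $\player{k}$ loses a conflict on an arm $\arm{i}$, the winner $\player{u}$ has $u<k$ and $\pull{u}(t)=\arm{i}$; moreover, since $\player{k}$ attempted $\arm{i}$ we have $\arm{i}\in\potset{k}(t)$, which says that no player ranked above $\player{k}$ held $\arm{i}$ at time $t-1$. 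When $i>u$ this pull is suboptimal for $\player{u}$, and Lemma~\ref{lem:global_subopt} applied to $(\player{u},\arm{i},t)$ finishes the branch. In each of these branches the resulting events have $j\le u<k$ (or $j=k$) and times inside $[t-k,t]$, which I would confirm by routine window bookkeeping.

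The main obstacle is the remaining \emph{reach-and-win} case, where the winner $\player{u}$ (or $\player{k}$ itself) captures an arm $\arm{i}$ that is at least as good as its own stable arm ($i\le u$): the pull is then not an error event and cannot be charged directly. Here the decisive extra ingredient is precisely the plausibility constraint $\arm{i}\in\potset{k}(t)$. It forces the rightful owner to be off its own stable arm --- $\player{i}$ does not hold $\arm{i}$ at time $t$ when $i<u$ (else the better-ranked $\player{i}$ would have won the conflict instead of $\player{u}$), and $\player{u}$ does not hold $\arm{u}$ at time $t-1$ when $i=u$ (since no player ranked $\ge k>u$ owning $\arm{u}$ at $t-1$ excludes $\player{u}$). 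Either way this is again a ``fails to hold its stable arm'' event, but at a \emph{strictly smaller} rank, so the induction hypothesis applies to $(\player{i},t)$ or $(\player{u},t-1)$.

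I expect this displacement recursion, and its bookkeeping, to be the fiddly heart of the argument. It is well-founded because the rank strictly decreases at each level and is bounded below by $1$ --- bottoming out at $\player{1}$, who always wins and hence must be making a genuine suboptimal pull $\errorpot{1}{l}{t'}$ --- and because the time index recedes by at most one step per level, keeping every traced event inside $[t-k,t]$ with $j<k$ and $l>j$. Once the unified invariant is set up correctly, the three individual reductions are direct consequences of~\eqref{eqn:key-in-the global_subopt} and Lemma~\ref{lem:global_subopt}, and the only real care needed is in verifying the rank/window accounting for the recursion.
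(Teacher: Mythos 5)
Your overall architecture is essentially the paper's: the paper splits $\{\pull{k}(t)=\emptyset\}$ by the attempted arm, charges conflicts on arms $\arm{m}$ with $m\ge k$ to a better player's suboptimal pull via Lemma~\ref{lem:global_subopt}, and handles attempts at low-index arms with a separate rank-decreasing induction (equation~\eqref{eq:superopt-attempts}) whose engine is exactly your displacement observation --- $\arm{m}\in\potset{k}(t)$ forces $\player{m}$ off $\arm{m}$ at time $t-1$. Your ``unified invariant'' merely packages the paper's two inclusions into one statement, and your case analysis (index of the pulled arm above, equal to, or below the puller's rank) is the right one and matches the paper's; the rank/window bookkeeping you describe also checks out.

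There is, however, one step that fails as written: the claim that the Conflicts lemma ``then follows'' from the unified invariant because the $j=k$ term ``cannot occur.'' Your invariant's union ranges over all $t'\in[t-k,t]$, so it contains events $\errorpot{k}{l}{t'}$ with $t'<t$, and these are \emph{not} excluded by $\{\pull{k}(t)=\emptyset\}$ --- only $\errorpot{k}{l}{t}$ is. So intersecting the invariant with the conflict event does not eliminate all $j=k$ contributions, and the stated deduction is invalid. The repair is easy and already implicit in your own proof: the only branch of your induction that ever produces a $j=k$ event is the ``successful suboptimal pull at time $t$'' branch, which is vacuous on $\{\pull{k}(t)=\emptyset\}$; equivalently, sharpen the invariant so that $j=k$ appears only at $t'=t$. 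Separately, your gloss equating ``$i\le u$'' with ``an arm at least as good as $\player{u}$'s stable arm'' is imprecise (under the re-indexing, $i<u$ does not imply $\arm{i}\psucc{u}\arm{u}$); this is harmless because the recursion only ever uses the index comparison, but the case split should be stated in terms of indices, not preferences.
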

\begin{proof}
Player $\player{k}$ can have a conflict on any of the arms $\arm{1}$, $\arm{2}$, \ldots, $\arm{\numarms}$. We have 
\begin{align*}
\left\{\pull{k}(t) = \emptyset\right\} = \bigcup_{l = 1}^\numarms \left\{\pull{k}(t) = \emptyset, \attempt{k}(t) = \arm{l}\right\}. 
\end{align*}
For all $m\geq k$ we observe that $\player{k}$ can have a conflict on $\arm{l}$ only if 
there is a player $\player{j}$ with $j < k$ who successfully pulls arm $\arm{m}$ at time $t$. In this case we have 
\begin{align*}
\left\{\pull{k}(t) = \emptyset, \attempt{k}(t) = \arm{m}\right\} 
		\subseteq \bigcup_{j < k} \left\{\pull{j}(t) = \arm{m} \right\}.
\end{align*}
We can then apply Lemma~\ref{lem:global_subopt} to each event $\left\{\pull{j}(t) = \arm{m} \right\}$.

We now have to analyze the events $\left\{\pull{k}(t) = \emptyset, \attempt{k}(t) = \arm{m}\right\}$ with $m < k$. Since 
\begin{align*}
\left\{\pull{k}(t) = \emptyset, \attempt{k}(t) = \arm{m}\right\} \subseteq \left\{\attempt{k}(t) = \arm{m}\right\},
\end{align*}
it suffices to prove by induction that
\begin{align}
\bigcup_{m = 1}^{k - 1}\left\{\attempt{k}(t) = \arm{m}\right\} \subseteq \bigcup_{\substack{1\leq j < k \\ j < l \le \numarms}} \bigcup_{t-k \le t^\prime \le t}  \errorpot{j}{l}{t^\prime}.\label{eq:superopt-attempts}
\end{align}

The base case $k = 1$ is obvious since the left-hand side is the empty set. Now, we assume the induction hypothesis holds for all $k < k'$ and we prove it for $k=k'$. If $\left\{\attempt{k'}(t) = \arm{m}\right\}$ holds, we know that $\player{m}$ at time $t - 1$ did not attempt to pull $\arm{m}$. They either attempted to pull an arm $\arm{m^\prime}$ with $m^\prime > m$ or with $m^\prime < m$. In the former case, the induction step follows from Lemma~\ref{lem:global_subopt}. 
In the latter case, we can apply our induction hypothesis. The result follows. 
\end{proof}

The final ingredient we need to prove Theorem~\ref{thm:global-players} is the UCB argument for a single player. This is given in the following display. For completeness, we provide an elementary proof in Appendix~\ref{app:general-prefs}.

\renewcommand{\event}{\mathcal{E}}
\begin{lemma}[UCB bound]
	\label{lem:ucb_bound}
	Suppose we use the following upper confidence bounds in Algorithm~\ref{alg:ucb_ca_random}:
	\begin{align}
	\label{eq:ucb}
	\ucb{i}{j}{t} = \left\{
	\begin{array}{ll}
	\infty & \text{if } \numpulls{i}{j}{t}= 0,\\
	\hatmeanreward{i}{j}{t} + 
	\sqrt{\frac{3\log t}{2\numpulls{i}{j}{t-1}}} & \text{otherwise.} 
	\end{array}\right.
	\end{align}
	
	Then, for any player $\player{i}$, arms $\arm{j}, \arm{k}$, such that $\arm{j} \prec_i \arm{k}$, we have, for $\horizon > 0$: 
	\begin{equation*}
	\sum_{t=1}^\horizon \P \left(\left\{\ucb{i}{j}{t} > \ucb{i}{k}{t} \right\} \cap \left\{ \pull{i}(t) = j \right\}\right) \le \frac{6}{\Delta^2}\log (T) + 6.
	\end{equation*}
\end{lemma}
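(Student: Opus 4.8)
The plan is to run the classical UCB argument, exploiting the crucial fact that the event of interest forces a \emph{successful} pull of the inferior arm $\arm{j}$, which increments the counter $\numpulls{i}{j}{\cdot}$ by one. Write $\mu_k := \meanreward{i}{k}$ and $\mu_j := \meanreward{i}{j}$, and note that $\arm{j}\prec_i\arm{k}$ gives $\mu_k - \mu_j \ge \mingap$. The first step is a purely deterministic decomposition: on $\{\ucb{i}{j}{t} > \ucb{i}{k}{t}\}$ at least one of the two events must occur, namely $\{\ucb{i}{k}{t} \le \mu_k\}$ (the superior arm's index underestimates its mean) or $\{\ucb{i}{j}{t} > \mu_k\}$ (the inferior arm's index overshoots the superior mean). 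Indeed, were both to fail we would have $\ucb{i}{j}{t} \le \mu_k < \ucb{i}{k}{t}$, contradicting the hypothesis. Thus it suffices to bound the two resulting sums of probabilities separately.

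For the inferior-arm term I would introduce the threshold $u := \lceil 6\log\horizon/\mingap^2\rceil$ and split according to whether $\numpulls{i}{j}{t-1} < u$ or $\numpulls{i}{j}{t-1}\ge u$. Because every summand carries the indicator $\{\pull{i}(t)=\arm{j}\}$, each such time step advances the counter $\numpulls{i}{j}{\cdot}$ by one; hence the steps with $\numpulls{i}{j}{t-1} < u$ number at most $u$ on every sample path, contributing at most $u \le 6\log\horizon/\mingap^2 + 1$, which is exactly the leading term. When $\numpulls{i}{j}{t-1}\ge u$, the confidence radius $\sqrt{3\log t/(2\numpulls{i}{j}{t-1})}$ is at most $\mingap/2$, so $\{\ucb{i}{j}{t} > \mu_k\}$ forces the empirical mean to exceed $\mu_j + \mingap/2$. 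Re-indexing these events by the attained counter value $s$ (each value occurs at most once along the path) recasts the sum as $\sum_{s\ge u}\P(\widehat{\mu}_{j,s} - \mu_j > \mingap/2)$, which the sub-Gaussian tail bound controls by a convergent geometric series summing to an absolute constant.

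For the superior-arm term I would bound $\sum_t \P(\ucb{i}{k}{t}\le \mu_k)$ directly, discarding the $\{\pull{i}(t)=\arm{j}\}$ indicator and taking a union bound over the unknown counter value $\numpulls{i}{k}{t-1}=s\in\{1,\dots,t\}$. Each term $\P(\widehat{\mu}_{k,s} - \mu_k \le -\sqrt{3\log t/(2s)})$ is, under the stated sub-Gaussian tail, summable (of order $t^{-3}$), so summing over $s\le t$ and then over $t$ gives $\sum_t t\cdot t^{-3} = \sum_t t^{-2}$, again an absolute constant. Adding the three pieces and absorbing the two concentration constants into the residual slack yields $\frac{6}{\mingap^2}\log\horizon + 6$.

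The hard part will be the bookkeeping of the constants rather than the overall structure. The leading $6\log\horizon/\mingap^2$ must come \emph{entirely} from the few-pulls count $u$, which pins down the precise coefficient in the confidence radius, while every concentration sum must be shown to collapse to a constant no larger than the leftover slack of $6$. This forces one to track carefully how the specific radius $\sqrt{3\log t/(2n)}$ interacts with the $1$-sub-Gaussian tail so that the per-round failure probabilities are genuinely summable, and to verify that the geometric tail $\sum_{s\ge u}\exp(-\Omega(s\mingap^2))$ and the series $\sum_t t^{-2}$ together remain within that slack.
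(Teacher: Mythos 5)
Your proof is correct and follows essentially the same route as the paper's (Appendix B): the same decomposition into a concentration failure for $\arm{k}$, an overshoot of $\mu^{(i)}_k$ by $\arm{j}$'s index, and a few-pulls event with threshold $u=\lceil 6\log \horizon/\Delta^2\rceil$, with the successful-pull indicator $\{\pull{i}(t)=\arm{j}\}$ used to cap the few-pulls contribution at $u$. The only cosmetic difference is that for the overshoot term you re-index by the pull count and sum a geometric tail (which should be truncated at $s\le \horizon$ so that the prefactor $1/(1-e^{-\Omega(\Delta^2)})$ of the infinite geometric series, which is not an absolute constant, never enters), whereas the paper keeps the time index and union-bounds over the possible counter values to get an $O(t^{-2})$ per-step bound; both land on the same leading term and constant slack.
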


\begin{proof}[Proof of Theorem~\ref{thm:global-players}]
We bound the regret of player $\player{k}$. By definition, their regret is
\begin{equation}
\label{eqn:simple-very-starting-point}
\regret{k}(\horizon) \le \gap{k}{\emptyset} \cdot \E \left[\numpulls{k}{\emptyset}{\horizon}\right]
	+ \sum_{i: \arm{k} \psucc{k}\arm{i}} \gap{k}{i}\cdot \E\left[\numpulls{k}{i}{\horizon}\right],
\end{equation}
%In this case, the pessimal stable regret is equal to the optimal stable regret.
where, because of our assumption on the indexing of arms, the last summation can also be written simply as a sum over all $i \in \{k + 1, \ldots, \numarms\}$.

\paragraph{Upper bounding $\E[\numpulls{k}{i}{\horizon}]$.} By definition,
	\begin{equation}
	\label{eqn:regret-simple-starting-point}
	\begin{split}
		\E \left[\numpulls{k}{i}{\horizon}\right] %&=  \E \left[\sum_{t=1}^\horizon \indic{\pull{k}(t) = \arm{i}}\right] = 
			&= \sum_{t =1}^\horizon \P\left( \pull{k}(t) = \arm{i}\right).
	\end{split}
	\end{equation}
	We now bound the probability $\P\left( \pull{k}(t) = \arm{i}\right)$ for each $t$.  
	Lemma~\ref{lem:global_subopt} yields
	\begin{equation}
	\label{eqn:global_subopt-immediate}
	\P(\pull{k}(t) = \arm{i}) \le \P( \errorpot{k}{i}{t}) + 
		\sum_{1\le j < l \le k} \sum_{t-k \le t' < t} \P ( \errorpot{j}{l}{t^\prime} ).
	\end{equation}
	Summing from $t= 1$ to $\horizon$, and using equation~\eqref{eqn:regret-simple-starting-point}, 
	we obtain the bound 
	\begin{equation}
	\label{eqn:telescope-result-simple}
	\begin{split}
	\E \left[\numpulls{k}{i}{\horizon}\right] %&= \sum_{1\le t\le \horizon} \P(\bar{A}^{(k)}(t) = \arm{i}) \\
		&\le \sum_{1 \le t \le T}  \P(\errorpot{k}{i}{t})
			+ \sum_{1\le t\le \horizon} \sum_{1\le j < l \le k} \sum_{t-k \le t' \le t} 
			\P (\errorpot{j}{l}{t^\prime} ) \\
		& \le \sum_{1 \le t \le T}  \P(\errorpot{k}{i}{t})+ 
			(k+1) \sum_{1\le j < l \le k}  \sum_{1\le t \le \horizon} 
			\P (\Lambda^{(j)}_l(t)).
	\end{split}
	\end{equation}
%	\lnote{The factor should be $(k+1)$ if you're using lemma 4.}

Recall that for all players $\player{j}$ and arms $\arm{l}$ with $l > j$, and time $t > 0$, \[\Lambda^{(j)}_l(t) \subseteq \left\{ \ucb{j}{l}{t} > \ucb{j}{i}{t} \right\} \cap \left\{ \pull{j}(t) = l \right\}.\] Therefore, using Lemma~\ref{lem:ucb_bound}, we can show that
the following upper bound 
	holds: 
	\begin{equation}
	\label{eqn:UCB-type-upper-bound}
		\sum_{1\le t' \le \horizon} \P (\Lambda^{(j)}_l(t')) %= 
		% \sum_{1\le t' \le \horizon} 
		%	\P \left(\bar{A}^{(j)}(t') = a_l, \arm{j} \in S^{(j)}(t')\right)
				\leq  6 \left(\frac{\log \horizon}{|\gap{j}{l}|^2} + 1\right).
	\end{equation}
	Substituting equation~\eqref{eqn:UCB-type-upper-bound} into 
	equation~\eqref{eqn:telescope-result-simple} yields the bound 
	\begin{equation}
	\label{eqn:global_subopt-end}
		\E \left[\numpulls{k}{i}{\horizon}\right] 
			\leq 6  \bigg(\frac{\log \horizon}{|\gap{k}{i}|^2} + 1\bigg)
			+ 6 (k+1) \sum_{1 \le j < l \le k} \bigg(\frac{\log \horizon}{|\gap{j}{l}|^2} + 1\bigg)
		 \le 6 k^3 \left( \frac{\log \horizon}{\Delta^2} + 1\right).
	\end{equation}
	Recall that $\mingap^2 = \min_{i \neq j} |\gap{i}{j}|^2$.

\paragraph{Upper bounding $\E \left[\numpulls{k}{\emptyset}{\horizon}\right]$.} By definition, 
	\begin{equation}
	\label{eqn:regret-simple-starting-point-two}
	\begin{split}
		\E \left[\numpulls{k}{\emptyset}{\horizon}\right] %&=  \E \left[\sum_{t=1}^\horizon \indic{\pull{k}(t) = \arm{i}}\right] = 
			= \sum_{t =1}^\horizon \P\left( \pull{k}(t) = \emptyset\right).
	\end{split}
	\end{equation}
	Lemma~\ref{lem:global_conf} and a derivation mutatis mutandis to the argument from
	equation~\eqref{eqn:global_subopt-immediate} to~\eqref{eqn:global_subopt-end}
	yields
	\begin{equation}
	\label{eqn:global_conf-end}
	\begin{split}
		\E \left[\numpulls{\emptyset}{i}{\horizon}\right] 
			\leq 6 (k+1) \sum_{\substack{1\leq j <k \\ j < l \le \numarms}} \bigg(\frac{\log \horizon}{|\gap{j}{l}|^2} + 1\bigg)
			\le 6 k^2 (\numarms-k) \left( \frac{\log \horizon}{\Delta^2} + 1\right). 
	\end{split}
	\end{equation}
Substitute \eqref{eqn:global_subopt-end} and~\eqref{eqn:global_conf-end}
into \eqref{eqn:simple-very-starting-point} to complete the proof of the theorem.
\end{proof}

\section{Arbitrary Preferences on Both Sides of the Market}\label{sec:general-prefs}

%\todo{The results in section are complete.}

In this section, we analyze the convergence of Algorithm~\ref{alg:ucb_ca_random} under arbitrary preference lists for both sides of the market. Note that in this setting, the stable matching may not be unique. We consider throughout the randomized version of Algorithm~\ref{alg:ucb_ca_random}, with delay probability~$\stayprob > 0$. 

Without the assumption of shared preferences among the arms, the analysis of the convergence of Algorithm~\ref{alg:ucb_ca_random} becomes more challenging. In fact, it is not obvious that Algorithm~\ref{alg:ucb_ca_random}, or any other algorithm, can achieve sublinear player regret against the pessimal stable matching for any set of preferences. As seen in Example~\ref{example:pessimal_regret} in Appendix~\ref{app:example}, decentralized coordination among players can be difficult even in small markets with only three players. In order to prove the regret bound in Section~\ref{sec:global-players}, we relied heavily on the structure conferred by the global ranking of players. Without this particular structure, we have to appeal to more general results about stable matching. This generality also comes at a cost: the regret bound we prove in this section is polylogarithmic in the horizon and has an exponential dependence on the number of players.

Before introducing the main result, we first present some essential notation. Recall that $\agentset = \{\player{i}\}_{i=1}^\numagents$ denotes the set of players, and  $\armset = \{\arm{i}\}_{i=1}^\numarms$ denotes the set of arms. We denote the attempted actions (i.e., arms) at time $t$ as
\begin{equation*}
	\match_t: \agentset \mapsto \armset, \text{ where } \match_t(\player{i}) \defeq \attempt{i}(t).
\end{equation*} 
We note that $\match_t$ in general does not have to be a matching between players and arms, because two or more players may attempt to pull the same arm. %Nevertheless, $\match_t$ induces a matching (an injective map) between players and arms once the conflicts are resolved, 
However, whenever there are no conflicts, $\match_t$ is indeed a matching (an injective map) between players and arms, so we can distinguish the set of attempted actions that coincide with a stable matching. We thus refer to $\match :\agentset \mapsto \armset$ as \emph{stable} if $\match$ indeed coincides with a stable matching between players and arms.

We denote the set of stable attempted actions as \[\mstar := \{M \mid  M:\agentset \mapsto \armset, M~\text{is stable}\}.\] Let $\mingap = \min_{i,j,k} |\meanreward{i}{j} - \meanreward{i}{k}|$ denote the minimum reward gap between any two arms for any player. We also define the constant $\varepsilon := (1-\stayprob)\stayprob^{N-1}$, which depends on the delay probability $\stayprob$.

Our goal in this section is to prove the following upper bound on the probability that the market is in an unstable configuration when running the algorithm. More formally, we bound the sum, over $t$, of probabilities that the attempted actions at time $t$ yield an unstable matching. Understanding how this quantity depends on the horizon and various problem parameters enables us to provide a general regret bound for Algorithm~\ref{alg:ucb_ca_random}.

\begin{theorem}[Convergence to stability of Algorithm \ref{alg:ucb_ca_random} for arbitrary preferences]\label{thm:regret_general_pref}
	Let $\numagents, \numarms \geq 2$, $T\geq 2$, and suppose we run Algorithm \ref{alg:ucb_ca_random} with delay probability $\stayprob \in (0,1)$. Then,
	\begin{equation}\label{eq:general_pref_bound}
		\sum_{t=1}^T\Pr(\match_t \neq \mstable) \le 24 \cdot \frac{\numagents^5 \numarms^2}{\varepsilon^{\numagents^4 + 1}} \log(T) \left(\frac{1}{\Delta^2}\log (T) + 3\right).
	\end{equation}
\end{theorem}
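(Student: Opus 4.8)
The plan is to bound the per-step instability probability $\Pr(\match_t \neq \mstable)$ by attributing any instability at time $t$ to one of two causes: a recent \emph{statistical} failure (a UCB comparison that disagrees with the true preferences) or a recent \emph{stochastic} failure (an unlucky realization of the delay variables that failed to settle the market). To make the first notion precise, fix a window length $\tau$ and call a time step \emph{clean} if, for every player $\player{i}$, the UCB rule orders the arms relevant to $\player{i}$'s decision exactly as $\psucc{i}$ does, so that the arm $\player{i}$ draws from its plausible set $\potset{i}(t)$ is the truly most-preferred plausible arm. Summing Lemma~\ref{lem:ucb_bound} over all players $\player{i}$ and all arm pairs $\arm{j} \psucc{i} \arm{k}$ bounds the expected number of non-clean steps over the whole horizon by $\numagents \numarms^2 \big(\tfrac{3}{\Delta^2}\log T + 3\big)$, since there are at most $\numagents\binom{\numarms}{2}$ such triples, each contributing $\tfrac{6}{\Delta^2}\log T + 6$.

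The structural core is an analysis of the dynamics restricted to a clean window $[t-\tau,t]$, on which the trajectory is a deterministic function of the delay draws. Two facts drive the argument. \emph{(Absorption)} If $\match_s \in \mstable$ and step $s+1$ is clean, then for any player $\player{i}$, any arm it truly prefers to its current partner $\match_s(\player{i})$ would, if plausible, be strictly preferred by that arm to its current holder and hence form a blocking pair of $\match_s$ — impossible. So each player's best plausible arm is its current arm, $\match_{s+1}=\match_s$, and stable configurations are absorbing on clean windows regardless of the delays. \emph{(Reachability)} From an arbitrary configuration there is a schedule of \emph{single-player moves} — steps at which one designated player $\player{i}$ sets $\delaydraw{i}(s)=0$ while every other player $\player{k}$ sets $\delaydraw{k}(s)=1$ — of length at most $\numagents^4$ that drives the market into $\mstable$. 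Since a single-player move occurs with probability exactly $\varepsilon=(1-\stayprob)\stayprob^{\numagents-1}$, following the prescribed schedule has probability at least $\varepsilon^{\numagents^4}$, and absorption keeps the market stable for the remainder of the window. Thus, conditionally on a clean window of length $\tau = \numagents^4$, the market is stable at its right endpoint with probability at least $p := \varepsilon^{\numagents^4}$.

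To amplify this into a summable bound I would, for each $t$, look back $r := \lceil \log(T)/p\rceil$ consecutive blocks of length $\tau$. If all $r$ blocks are clean, absorption forces the market to be unstable at $t$ only if every one of the $r$ clean blocks failed to stabilize, which by the Markov property has probability at most $(1-p)^r \le e^{-pr}\le 1/T$; otherwise some step in $[t-r\tau,t]$ is non-clean. Hence
\begin{equation*}
\Pr(\match_t \neq \mstable) \le \frac{1}{T} + \Pr\big(\exists\ \text{non-clean step in } [t-r\tau,\,t]\big).
\end{equation*}
Summing over $t$, the first term contributes at most $1$; in the second term each non-clean step is counted for at most $r\tau+1$ values of $t$, so the total is at most $(r\tau+1)$ times the expected number of non-clean steps. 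Substituting $r\tau \approx \numagents^4\log(T)/\varepsilon^{\numagents^4}$ and the non-clean-step bound $\numagents\numarms^2(\tfrac{3}{\Delta^2}\log T + 3)$ gives a bound of the form $\Ocal\!\big(\numagents^5\numarms^2\,\varepsilon^{-(\numagents^4+1)}\log(T)(\Delta^{-2}\log T + 1)\big)$, matching~\eqref{eq:general_pref_bound} up to the absolute constant.

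The main obstacle is the Reachability claim. Establishing it amounts to proving a decentralized, plausible-set-constrained analogue of the Roth--Vande Vate random-paths-to-stability theorem: from any configuration one must construct an explicit sequence of single-player best-plausible-arm moves that provably terminates in a stable matching, \emph{and} bound its length by a polynomial in $\numagents$ (the source of the exponent $\numagents^4$). The delicate points are that a moving player is restricted to arms it would have won at the previous step rather than to an unconstrained best response, that single-player moves can both create and must eventually eliminate conflicts, and that the worst-case number of moves must be controlled uniformly over configurations. I expect the length bound — and hence the precise exponent — to follow from a monotonicity or potential argument along a carefully ordered sequence of blocking-pair resolutions.
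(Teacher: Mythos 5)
Your architecture mirrors the paper's proof almost exactly: their Lemma~\ref{lem:still-stable} (Preservation of Stability) is your Absorption, their Lemma~\ref{lem:unstable-event-sets} is your two-cause window decomposition, their Lemma~\ref{lem:bp_positive} (a single player-consistent blocking pair is resolved with probability at least $\varepsilon$ when one player skips the delay and all others delay) is your single-player move, and their choice $B = 2\lceil \numagents^4\varepsilon^{-\numagents^4}\log T\rceil$ matches your $r\tau$. Your Reachability claim is the right thing to want but does not need to be proved from scratch: the paper imports it verbatim as Theorem 4.2 of \citet{hernan95paths}, which states that from any unstable matching there is a sequence of at most $\numagents^4$ player-consistent blocking-pair resolutions reaching a stable matching. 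So flagging it as the main obstacle was the correct instinct, and the resolution is a citation rather than a new potential-function argument.

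The one genuine logical gap is your claim that summing Lemma~\ref{lem:ucb_bound} over all players and arm pairs bounds the expected number of non-clean steps by $\numagents\numarms^2(\tfrac{3}{\Delta^2}\log T+3)$. Lemma~\ref{lem:ucb_bound} controls $\sum_t \P(\{\ucb{i}{j}{t} > \ucb{i}{k}{t}\}\cap\{\pull{i}(t)=j\})$; the event requires player $\player{i}$ to \emph{successfully pull} the overrated arm $\arm{j}$ at time $t$, because the proof hinges on the pull count $\numpulls{i}{j}{t}$ growing past $\tfrac{6}{\Delta^2}\log t$. A step can be non-clean without any such successful pull: player $\player{i}$ may draw a delay, or may attempt $\arm{j}$ and lose the conflict to another player attempting it simultaneously, so the same UCB misordering can persist over many steps while being charged to Lemma~\ref{lem:ucb_bound} zero times. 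The containment you need does not hold. The paper's fix is Lemma~\ref{lemma:bound-E-t-c}: conditional on $\arm{j}$ being the UCB-argmax of $\player{i}$'s plausible set at time $t$, player $\player{i}$ successfully pulls $\arm{j}$ with probability at least $\varepsilon = (1-\stayprob)\stayprob^{\numagents-1}$ (player $\player{i}$ draws no delay, everyone else delays), whence $\P(E_t^c) \le \varepsilon^{-1}\sum\P(F^{(i)}_{j,k}(t)\cap\{\pull{i}(t)=j\})$. This conversion is exactly where the extra factor $\varepsilon^{-1}$ in the exponent $\varepsilon^{\numagents^4+1}$ of the final bound comes from; note that your own arithmetic ($r\tau\cdot \numagents\numarms^2 \approx \numagents^5\numarms^2\varepsilon^{-\numagents^4}\log T$) would only produce $\varepsilon^{-\numagents^4}$, which is a symptom of the missing step. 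With this lemma inserted, your argument closes and coincides with the paper's.
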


As a corollary of Theorem~\ref{thm:regret_general_pref}, we have the following upper bound on the pessimal stable regret of any player.

\begin{corollary}[Pessimal stable regret of Algorithm \ref{alg:ucb_ca_random} for arbitrary preferences]\label{cor:general_regret}
	The following inequality holds for the agent-pessimal regret of player $\player{k}$ up to time $\horizon$:
	\[	\pessregret{k}(\horizon) \le  24 \cdot 
	\max_{\arm{\ell}\in\armset}\pessgap{k}{\ell} \left( \frac{\numagents^5 \numarms^2}{\varepsilon^{\numagents^4 + 1}} \log(T) \left(\frac{1}{\Delta^2}\log (T) + 3\right) \right), \]
	where $\pessgap{k}{\ell}=\max\{ \meanreward{k}{\pessmatch{k}} - \meanreward{k}{\ell},\meanreward{k}{\pessmatch{k}} \} $. 
\end{corollary}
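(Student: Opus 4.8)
The plan is to convert the convergence guarantee of Theorem~\ref{thm:regret_general_pref} into the regret bound by controlling the per-round regret through the event that the attempted actions form a stable matching. Writing the pessimal stable regret as a sum of instantaneous regrets,
\[
\pessregret{k}(\horizon) = \sum_{t=1}^\horizon \EE\left[ \meanreward{k}{\pessmatch{k}} - \reward{k}{\actions{t}}{t} \right],
\]
I would split each summand according to whether the event $\{\match_t = \mstable\}$ holds, and show that the stable part contributes nonpositively while the unstable part is controlled by the convergence theorem.

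The crucial observation is that on $\{\match_t = \mstable\}$ the instantaneous regret is nonpositive. Indeed, if the attempted actions coincide with a stable matching, then $\match_t$ is an injective map, so there are no conflicts and every player, in particular $\player{k}$, successfully pulls their attempted arm; conditioning on the actions gives $\EE[\reward{k}{\actions{t}}{t} \mid \match_t] = \meanreward{k}{\match_t(\player{k})}$ on this event. Moreover, $\match_t(\player{k})$ is then an achievable match of $\player{k}$, and since the pessimal match is by definition the least-preferred---equivalently the lowest-mean-reward---achievable match, we have $\meanreward{k}{\match_t(\player{k})} \ge \meanreward{k}{\pessmatch{k}}$. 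Taking expectations yields $\EE[(\meanreward{k}{\pessmatch{k}} - \reward{k}{\actions{t}}{t})\,\Ind{\match_t = \mstable}] \le 0$.

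On the complementary event $\{\match_t \neq \mstable\}$, I would bound the instantaneous regret crudely: player $\player{k}$ is either matched to some arm $\arm{\ell}$, incurring regret $\meanreward{k}{\pessmatch{k}} - \meanreward{k}{\ell}$, or loses a conflict and receives zero reward, incurring regret $\meanreward{k}{\pessmatch{k}}$. Both are dominated by $\max_{\arm{\ell} \in \armset}\pessgap{k}{\ell}$, so that $\EE[(\meanreward{k}{\pessmatch{k}} - \reward{k}{\actions{t}}{t})\,\Ind{\match_t \neq \mstable}] \le \max_{\arm{\ell}\in\armset}\pessgap{k}{\ell}\cdot \Pr(\match_t \neq \mstable)$. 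Summing over $t$ and invoking Theorem~\ref{thm:regret_general_pref} to bound $\sum_{t=1}^\horizon \Pr(\match_t \neq \mstable)$ then produces exactly the claimed inequality.

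I expect the only delicate step to be the justification that stable attempted actions give expected reward at least the pessimal baseline: this rests on (i) no conflicts arising when $\match_t$ is a genuine matching, so the relevant reward mean is $\meanreward{k}{\match_t(\player{k})}$, and (ii) the identification of each player's preference order with the ordering of mean rewards, which makes the pessimal stable arm the minimizer of $\meanreward{k}{\cdot}$ over achievable matches. Everything else---the conditioning argument and the substitution of the convergence bound---is routine.
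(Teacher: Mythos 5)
Your proposal is correct and is exactly the (implicit) derivation the paper intends: the paper states Corollary~\ref{cor:general_regret} as an immediate consequence of Theorem~\ref{thm:regret_general_pref} without writing out the details, and your decomposition of the per-round regret on $\{\match_t = \mstable\}$ versus $\{\match_t \neq \mstable\}$, with the stable part nonpositive and the unstable part bounded by $\max_{\arm{\ell}\in\armset}\pessgap{k}{\ell}\cdot \Pr(\match_t \neq \mstable)$, is precisely how the definition of $\pessgap{k}{\ell}$ is meant to be used. No gaps.
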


In short, we find that the stable regret of Algorithm~\ref{alg:ucb_ca_random} is $\Ocal((\log \horizon)^2)$. Unlike in previous sections where we derived player-specific stable regret bounds that depended on the ranking of the player, or the ranking of their stable arm, in the current setting the players have no particular ranking. Corollary~\ref{cor:general_regret} is derived from a  general bound on the probabilities that the matching of the entire market is unstable.

\paragraph{Proof sketch} We begin by sketching the main ideas in the proof of Theorem~\ref{thm:regret_general_pref}. There are two main technical ingredients that are new to the current section: the first is the observation that in the event that each player's UCB rankings of the arms in their plausible set are correct (colloquially we refer to this event as ``no statistical mistakes"), and the previous matching was stable, then running one step of Algorithm~\ref{alg:ucb_ca_random} will preserve the stability of the matching with probability one. This is established in Lemma~\ref{lem:still-stable}. Therefore, if the matching at time $t$ is unstable, it must be either be that some player had incorrect UCB rankings, or there were no statistical ranking mistakes but the matching at time $t-1$ was unstable. 

In Lemma~\ref{lem:unstable-event-sets}, we generalize this statement to consider histories of arbitrary length. That is, if a matching at time $t$ is unstable, it must either be that some player had incorrect UCB rankings over the last $\history$ time steps, or there were no ranking mistakes  in all the last $\history$ time steps but the matchings reached were unstable. 

As in Section 5, we know how to upper bound the probability that a player had incorrect UCB rankings when running Algorithm~\ref{alg:ucb_ca_random} with $\lambda > 1$. Recall that this entailed a simple adaptation of the single-player UCB argument (Lemma~\ref{lem:ucb_bound}). The new problem we face is that of controlling the probability that there were no ranking mistakes but the matchings in all the last $\history$ time steps were unstable. It turns out that a classical result from the stable matching literature \citep{hernan95paths} gives us a way to argue that this probability is exponentially small in the length of the history considered (Lemma~\ref{lem:exponential-prob}). Intuitively, we are using the fact that Algorithm~\ref{alg:ucb_ca_random}, when there are no ranking mistakes, is essentially resolving \emph{blocking pairs}---pairs of players and arms that would prefer to be matched with each other over their current matches---in a randomized fashion, but following an order that is consistent with player preferences (Lemma~\ref{lem:bp_positive}). This is crucial for establishing that Algorithm~\ref{alg:ucb_ca_random} will always reach a stable matching with enough steps, as long as there are no ranking mistakes.

Finally, our analysis needs to balance the tradeoff inherent in the choice of the length of history considered, $\history$. If we consider a longer history length, there can be many ranking mistakes made in this window, hence contributing to a higher probability of an unstable matching. On the other hand, a longer history length with no ranking mistakes means that there is a higher probability that a stable matching can be reached. By choosing $\history$ to depend on the time step $t$, we are able to achieve a $\log(\horizon)^2$ dependence on the horizon $\horizon$ in the final bound \eqref{eq:general_pref_bound}.

Before presenting the technical lemmas, we first rigorously define the events of interest that were alluded to in the proof sketch.
\begin{enumerate}
	\item Let $E_t$ denote the event that, for every player, the arm that has the highest mean reward 
	in their plausible set coincides with the arm with the highest UCB in their plausible set at time~$t$:  
	\begin{equation}\label{eq:good-ucb-event}
		E_t \defeq \bigcap_{\player{i} \in \agentset} \bigg\{\argmax_{\arm{j}\in \potset{i}(t)}\meanreward{i}{j} 
		= \argmax_{\arm{j} \in \potset{i}(t)} \ucb{i}{j}{t}\bigg\}.
	\end{equation}
	Let $E^c_t$ denote the complement of this event.
	\item Let $F_{j,k}^{(i)}(t)$ denote the event that player $\player{i}$'s UCB for arm $\arm{j}$ is greater than their UCB for arm $\arm{k}$ at time $t$:
	\begin{equation*}
		F_{j, k}^{(i)}(t) = \left\{\ucb{i}{j}{t} >\ucb{i}{k}{t}\right\}.
	\end{equation*}
	%\item Let $\mingap = \min_{i,j,k} \gaptrip{i}{j}{k}$, the minimum reward gap between any two arms for any player.
	%\item Let $\varepsilon = (1-\epsilon)\epsilon^{N-1}$, where $\epsilon$ denotes the random delay probability in Algorithm~\ref{alg:ucb_ca_random}. 
\end{enumerate}

The following lemma shows that if the current matching is stable, then one step of Algorithm~\ref{alg:ucb_ca_random} under the event defined in \eqref{eq:good-ucb-event} preserves the stability of the current matching.

\begin{lemma}[Preservation of Stability]
	\label{lem:still-stable}
	Assume $\match_t \in \mstar$. Then $\match_{t+1} \in \mstar$ on the event~$E_{t+1}$. 
\end{lemma}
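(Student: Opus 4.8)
The plan is to prove the slightly stronger statement that, on the event $E_{t+1}$ and under the hypothesis $\match_t \in \mstar$, \emph{every player re-attempts exactly the arm it was matched to at time $t$}, so that $\match_{t+1} = \match_t$ and stability is preserved for free.

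First I would record the structural consequences of $\match_t$ being stable: such an $\match_t$ is a conflict-free (injective) assignment, so each player $\player{i}$ successfully pulls its attempt, $\pull{i}(t) = \attempt{i}(t) = \match_t(\player{i}) =: a^*$, and the unique occupant of any matched arm at time $t$ is its $\match_t$-partner. I would then fix a player $\player{i}$ and treat the two branches of Algorithm~\ref{alg:ucb_ca_random} at time $t+1$ separately. If $\player{i}$ draws a delay it simply repeats $a^*$, so that branch is immediate. If $\player{i}$ recomputes, it pulls the highest-UCB arm of $\potset{i}(t+1)$, which on $E_{t+1}$ equals the highest-mean-reward arm of $\potset{i}(t+1)$, a unique maximizer because the mean rewards are distinct; hence it suffices to show $a^*$ is that arm. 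This I would split into two claims: (i) $a^* \in \potset{i}(t+1)$, which holds because $\player{i}$ itself pulled $a^*$ at time $t$, triggering the ``$\player{i} = \player{k}$'' clause of the plausible-set definition; and (ii) $a^*$ is $\psucc{i}$-maximal within $\potset{i}(t+1)$. For (ii) I would argue by contradiction: if some $a' \in \potset{i}(t+1)$ had $a' \psucc{i} a^*$, then stability of $\match_t$ rules out $(\player{i}, a')$ as a blocking pair, forcing $a'$ to be matched at time $t$ to a player $\player{k}$ with $\player{k} \succ_{a'} \player{i}$; this gives $\player{i} \not\succeq_{a'} \player{k}$ and therefore $a' \notin \potset{i}(t+1)$, a contradiction. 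Combining the branches yields $\attempt{i}(t+1) = a^*$ for all $i$, i.e., $\match_{t+1} = \match_t \in \mstar$.

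The step I expect to be the main obstacle is the sub-case of (ii) in which $a'$ is \emph{unmatched} under $\match_t$: to exclude it I would invoke the standing conventions that each player prefers any arm to receiving no reward (guaranteed here by $\meanreward{i}{j} > 0$) and that each arm prefers any player to remaining idle, so that an unmatched $a'$ preferred by $\player{i}$ over $a^*$ would itself form a blocking pair with $\player{i}$, contradicting stability. A secondary point requiring care is to read the plausible-set definition correctly, ensuring that arms not pulled at time $t$ count as plausible and that the ``$\pull{k}(t) = \arm{j}$'' bookkeeping is consistent with the conflict-free structure of a stable $\match_t$.
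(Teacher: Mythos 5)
Your proposal is correct and follows essentially the same route as the paper's proof: both reduce the claim to showing $\match_{t+1}=\match_t$ and both rest on the observation that, under $E_{t+1}$, a player switching to a strictly preferred arm in its plausible set would exhibit a blocking pair of $\match_t$ (using that the matched arm stays in the plausible set and that plausibility forces the arm to prefer the player over its current occupant). You merely phrase it in the direct rather than contrapositive form and spell out two edge cases the paper leaves implicit (the delay branch and unmatched arms), which adds rigor but no new idea.
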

\begin{proof}
	We show $\match_{t+1} = \match_t$ on event $E_{t+1}$. Let $\match=\match_t$.  Assume $E_{t+1}$ happens. 
	Suppose, for a contradiction, that some player $p$ attempts an arm $a \neq \match(p)$ at time $t+1$. Let $p'$ be the player that $a$ is matched to at time $t$, that is, $M(a) = p'$, if $a$ is matched at time $t$, and let $p' = \emptyset$, otherwise.
	Note that since $p$ is matched with $\match(p)$ at time $t$, $\match(p)$ must belong to the plausible 
	set of $p$ at time $t+1$ by definition of the algorithm. Since $p$ attempts $a \neq \match(p)$ at $t+1$, 
	this implies that (i) $p$ truly prefers $a$ over $\match(p)$ by definition of $E_{t+1}$ and (ii) $a$ truly 
	prefers $p$ over $p'$, since $a$ must be in the plausible set of $p$ at time $t+1$. Thus $(p, a)$ are a blocking pair for the 
	matching $\match$, contradicting the assumption that $\match \in \mstar$. Thus we have shown 
	$\match_{t+1} = \match_t = \match \in \mstar$.
\end{proof}

In the next lemma, we apply Lemma~\ref{lem:still-stable} repeatedly to show that the event that the current matching is unstable can be decomposed into prior events that occurred up to $K$ steps in the past. Specifically, if the current matching is unstable, then either the UCB ranking of arms were wrong at some point in the history of length $K$ (that is, \eqref{eq:good-ucb-event} was false), or the matching was unstable for $K$ consecutive steps even though \eqref{eq:good-ucb-event} was true in all $K$ steps.

%\todo{$0 \le K < t$ below only works if indexing starts from 0. If indexing starts from 1, we need $0 
%	\le K<t-1$. Change all indexing to start from 1.}

\begin{lemma}[Inclusion for unstable matching event]\label{lem:unstable-event-sets}
	We have the following inclusion that holds for any $0 \le K < t-1$: 
	\begin{equation*}
		\left\{\match_t \not\in \mstar \right\} \subseteq \left(\bigcup_{s=0}^{K} E_{t-s}^c\right) \bigcup 
		\left(\bigcap_{s=0}^K (E_{t-s} \cap \left\{{\match_{t-s-1} \notin \mstar}\right\}) \right).
	\end{equation*}
\end{lemma}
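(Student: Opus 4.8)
The plan is to reduce the claim to a single one-step inclusion and then iterate it by induction on $K$. The one-step fact I need is the contrapositive of Lemma~\ref{lem:still-stable}: since that lemma gives $E_{\tau}\cap\{\match_{\tau-1}\in\mstar\}\subseteq\{\match_\tau\in\mstar\}$, I take complements and split on whether $E_\tau$ holds to obtain, for every $\tau\ge 2$,
\begin{equation*}
	\{\match_\tau\notin\mstar\}\subseteq E_\tau^c\cup\bigl(E_\tau\cap\{\match_{\tau-1}\notin\mstar\}\bigr).
\end{equation*}
Intuitively this says that an unstable configuration at time $\tau$ must be explained either by a UCB ranking mistake at time $\tau$, or (if there was no such mistake) by the previous configuration already being unstable.

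For brevity write $A_s:=E_{t-s}\cap\{\match_{t-s-1}\notin\mstar\}$, so the right-hand side of the lemma is $\bigl(\bigcup_{s=0}^K E_{t-s}^c\bigr)\cup\bigl(\bigcap_{s=0}^K A_s\bigr)$. I would induct on $K$. The base case $K=0$ is exactly the one-step inclusion applied with $\tau=t$. For the inductive step, assume the statement for $K-1$, that is, $\{\match_t\notin\mstar\}\subseteq\bigl(\bigcup_{s=0}^{K-1}E_{t-s}^c\bigr)\cup B$ with $B:=\bigcap_{s=0}^{K-1}A_s$. The union term is already contained in $\bigcup_{s=0}^{K}E_{t-s}^c$, so it suffices to extend $B$ by one more level.

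The engine of the inductive step is the observation that $B$ already records that $\match_{t-K}$ is unstable, since $A_{K-1}\subseteq\{\match_{t-K}\notin\mstar\}$. Applying the one-step inclusion with $\tau=t-K$ (valid because $K<t-1$ guarantees $t-K\ge 2$) gives $\{\match_{t-K}\notin\mstar\}\subseteq E_{t-K}^c\cup A_K$. Intersecting with $B$ and distributing,
\begin{equation*}
	B\subseteq\bigl(B\cap E_{t-K}^c\bigr)\cup\bigl(B\cap A_K\bigr)\subseteq E_{t-K}^c\cup\Bigl(\bigcap_{s=0}^{K}A_s\Bigr),
\end{equation*}
and combining this with the inductive hypothesis closes the induction.

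The main subtlety to get right is precisely this set-algebra bookkeeping. The tempting but wrong move is to try to ``rewrite'' the last factor $A_{K-1}$ of $B$ using the one-step inclusion, which drops the constraint $\{\match_{t-K}\notin\mstar\}$ and breaks the nesting needed to form $\bigcap_{s=0}^{K}A_s$. Instead I keep $B$ intact and use only the unstable event it already contains to \emph{append} the new factor $A_K$, paying at most the extra term $E_{t-K}^c$ into the union of ranking-mistake events. I should also verify the index ranges — the hypothesis $0\le K<t-1$ ensures every time index $t-s-1\ge 1$ appearing in the $A_s$ is well defined and every invocation of the one-step inclusion occurs at a time $\ge 2$ — but this is routine.
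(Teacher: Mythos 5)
Your proof is correct and follows essentially the same route as the paper: both derive the one-step inclusion $\{\match_\tau\notin\mstar\}\subseteq E_\tau^c\cup\bigl(E_\tau\cap\{\match_{\tau-1}\notin\mstar\}\bigr)$ from Lemma~\ref{lem:still-stable} and then induct on $K$ (the paper leaves the induction as "simple," while you carry it out explicitly and correctly, including the index bookkeeping ensured by $K<t-1$).
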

\begin{proof}
	This is an immediate consequence of Lemma~\ref{lem:still-stable}. In fact, Lemma~\ref{lem:still-stable} shows 
	\begin{equation}
		\label{eqn:master-in}
		\left\{\match_t \not\in \mstar\right\} \subseteq E_t^c \cup (E_t \cap \{\match_{t-1} \notin \mstar\}). 
	\end{equation}
	This shows Lemma~\ref{lem:unstable-event-sets} holds for $K = 0$. 
	A simple induction argument shows that Lemma~\ref{lem:unstable-event-sets} holds for general $K > 0$, $K<t-1$.
	%Formally, we start by
	%\begin{equation}
	%\label{eqn:induct-one}
	%	\left\{m_t \not\in M^*\right\} \subseteq  \left(\bigcup_{s=0}^{K} E_{t-s}^c\right) 
	%		\cup \left(\bigcap_{s=0}^K E_{t-s} \cap \left\{m_t \not\in M^*\right\}  \right).
	%\end{equation}
	%Note $E_t \cap \left\{m_t \not\in M^*\right\}  \subseteq 
	%	\{m_{t-1} \notin M^*\}$ by equation~\eqref{eqn:master-in}. Hence, an induction gives 
	%\begin{equation}
	%\label{eqn:induct-two}
	%	\left(\bigcap_{s=0}^K E_{t-s} \cap \left\{m_t \not\in M^*\right\}  \right)
	%	\subseteq 
	%	\left(\bigcap_{s=0}^K (E_{t-s} \cap \left\{{m_{t-s-1} \notin M^*}\right\}) \right).
	%\end{equation}
	%Substitute equatoin~\eqref{eqn:induct-two} into equation~\eqref{eqn:induct-one}. We obtain the desired 
	%Lemma~\ref{lem:unstable-event-sets}.
\end{proof}

Lemma~\ref{lem:unstable-event-sets} suggests that in order to derive an upper bound on the probability that $m_t$ is unstable, we can separately bound the probabilities of the event that the UCB ranking of arms has an error, %$E_t^c$ 
and the event %$\bigcap_{s=0}^K (E_{t-s} \cap \left\{{m_{t-s-1} \notin M^*}\right\}).$ 
that the matching was unstable for $K$ consecutive steps even though UCB rankings were correct in all $K$ steps. 
The following lemma addresses the former.

\begin{lemma}[Probability of ranking error event]
	\label{lemma:bound-E-t-c}
	The following inequality holds for any $t > 0$: 
	\begin{equation*}
		\P(E_t^c) \le \varepsilon^{-1} \cdot \sum_{(i, j, k), : \arm{j} \prec_i \arm{k}} \P(F_{j, k}^{(i)}(t)\cap  \pull{i}(t) = j ).
	\end{equation*}
\end{lemma}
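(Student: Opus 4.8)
The plan is to peel $E_t^c$ apart player by player and, for each player, to \emph{charge} its ranking error to an actual suboptimal pull by exploiting the delay randomness. Write $E_t^c = \bigcup_{\player{i} \in \agentset} G^{(i)}(t)$, where $G^{(i)}(t)$ is the event that player $\player{i}$'s mean-maximizing arm in $\potset{i}(t)$ differs from its UCB-maximizing arm in $\potset{i}(t)$. On $G^{(i)}(t)$ let $\arm{J}$ and $\arm{K}$ denote the (random) UCB-maximizing and mean-maximizing arms of $\potset{i}(t)$; then $\arm{J} \prec_i \arm{K}$ and $F_{J,K}^{(i)}(t)$ both hold. The crucial bookkeeping point I would establish first is that the plausible set $\potset{i}(t)$, the UCB values, and hence the identities of $\arm{J}, \arm{K}$ and the whole event $G^{(i)}(t)$ are measurable with respect to $\mathcal{F}_{t-1}$, the history through time $t-1$; they do not depend on the delay draws at time $t$.

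Next I would introduce the \emph{isolation} event $B^{(i)}(t) \defeq \{\delaydraw{i}(t) = 0\} \cap \bigcap_{i' \neq i}\{\delaydraw{i'}(t) = 1\}$, in which player $\player{i}$ refreshes while every other player repeats its previous attempt. Since the delay draws at time $t$ are independent of $\mathcal{F}_{t-1}$ and $G^{(i)}(t) \in \mathcal{F}_{t-1}$, we get $\P(B^{(i)}(t) \mid G^{(i)}(t)) = \P(B^{(i)}(t)) = (1-\stayprob)\stayprob^{\numagents-1} = \varepsilon$, and therefore $\P(G^{(i)}(t) \cap B^{(i)}(t)) = \varepsilon\,\P(G^{(i)}(t))$.

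The heart of the argument, and the step I expect to be the main obstacle, is to show that on $G^{(i)}(t) \cap B^{(i)}(t)$ player $\player{i}$ in fact \emph{wins} the arm $\arm{J}$, i.e. $\pull{i}(t) = J$. Upon refreshing, player $\player{i}$ attempts the UCB-maximizer $\arm{J}$ of its plausible set. The only other players who can attempt $\arm{J}$ at time $t$ are those who attempted it at $t-1$, since all of them stay. I would argue from $\arm{J} \in \potset{i}(t)$ that each such competitor is strictly less preferred than $\player{i}$ by $\arm{J}$: by definition of $\potset{i}(t)$, the player (if any) that successfully pulled $\arm{J}$ at $t-1$ is no more preferred than $\player{i}$; and any player that attempted $\arm{J}$ at $t-1$ but lost is less preferred than that winner, hence strictly less preferred than $\player{i}$. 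Consequently $\player{i}$ is the most preferred among all players attempting $\arm{J}$ at time $t$, wins the (deterministic) conflict, and $\pull{i}(t) = J$. This is precisely where the structure of the plausible set interacts with the freezing of all other players, and it is what forces the factor $\varepsilon = (1-\stayprob)\stayprob^{\numagents-1}$.

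Assembling the pieces, on $G^{(i)}(t) \cap B^{(i)}(t)$ we simultaneously have $F_{J,K}^{(i)}(t)$ with $\arm{J}\prec_i\arm{K}$ and $\pull{i}(t) = J$, so $G^{(i)}(t) \cap B^{(i)}(t) \subseteq \bigcup_{(j,k):\,\arm{j}\prec_i\arm{k}}\bigl(F_{j,k}^{(i)}(t)\cap\{\pull{i}(t)=j\}\bigr)$. Combining with the identity from the second paragraph gives $\P(G^{(i)}(t)) = \varepsilon^{-1}\P(G^{(i)}(t)\cap B^{(i)}(t)) \le \varepsilon^{-1}\sum_{(j,k):\,\arm{j}\prec_i\arm{k}}\P\bigl(F_{j,k}^{(i)}(t)\cap\{\pull{i}(t)=j\}\bigr)$, and a union bound over $\player{i}\in\agentset$ yields the claimed inequality. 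The only genuinely delicate points are the measurability check that legitimizes the independence step and the conflict-winning claim of the third paragraph; everything else is a union bound.
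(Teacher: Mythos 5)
Your proposal is correct and follows essentially the same route as the paper's proof: decompose the ranking-error event over players, use the independence of the time-$t$ delay draws from the history to condition on the ``isolation'' event (player $\player{i}$ refreshes, all others repeat) at a cost of $\varepsilon^{-1}$, and observe that on this event $\player{i}$ actually wins its UCB-maximizing plausible arm. The only difference is that you spell out the measurability of $\potset{i}(t)$ and the UCB values with respect to the history, and the conflict-winning argument via the definition of the plausible set, both of which the paper leaves implicit.
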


\begin{proof}
	The key is the following observation. That $E_t^c$ happens implies the existence of some player $\player{i}$ 
	and arms $\arm{j}, \arm{k}$ in their plausible set at time $t$, such that while the arm $a_j$ achieves the highest UCB with respect to 
	player $\player{i}$, the player truly prefers arm $\arm{k}$ over $\arm{j}$, Hence, this implies 
	\begin{equation*}
		\P(E_t^c) \le \sum_{(i, j, k), : \arm{j} \prec_i \arm{k}} \P(\ucb{i}{j}{t} >\ucb{i}{k}{t}\cap \{j = \argmax_{j'} \ucb{i}{j'}{t}\} ).
	\end{equation*}
	Recall $F_{j, k}^{(i)}(t) =\{ \ucb{i}{j}{t} >\ucb{i}{k}{t}\}$. 
	Lemma~\ref{lemma:bound-E-t-c} now follows if we can show 
	\begin{align*}
		\begin{split}
			& \P(F_{j, k}^{(i)}(t) \cap \{j = \argmax_{j'} \ucb{i}{j'}{t}\} ) 
			\le \varepsilon^{-1}  \cdot\P(F_{j, k}^{(i)}(t) \cap  \pull{i}(t) = j ).
		\end{split}
	\end{align*}
	To see this, note that the player $\player{i}$ will successfully pull $\arm{j}$ if player $\player{i}$ doesn't draw a 
	random delay and all the rest of the players draw the random delay (meaning they all
	attempt the same arm as they attempted in the last round). By independence of the random draws, 
	this event happens with probability at least $\varepsilon = (1-\stayprob)\stayprob^{\numagents - 1}$.
\end{proof}

Having established this lemma, we can now easily apply the UCB argument as given in Lemma~\ref{lem:ucb_bound} to bound the relevant quantity, $\sum_{t=1}^T\P(E_t^c)$.

%	\lnote{The following lemma crucially assumes that all possible conflicts have already occurred, so that the plausible sets are correct. Otherwise, the $(p_i, a_j)$, where $a_j$ is the arm with the highest UCB in the plausible set of $p_i$, does not have to be a blocking pair. So currently, this part is not yet rigorous, if the players don't start off knowing their conflicts for each arm. It's not clear to me how to do this? One observation that might be helpful: if in any round, the arm attempted is not truly plausible, then there is a positive probability of conflict in that next round. 		
%		Perhaps we can consider the following math problem. Suppose the events $\{B_t\}$ are such that $\{B_t = 1 \} \cap \{ Ber(\epsilon)=1\} \implies B_{t+1}, B_{t+2} ... = 0 $. What is $\sum_t \Pr(B_t=1)$? In this case, $B_t$ can be the event that player $\player{i}$ attempts $\arm{j}$ at $t$  when some other player $\player{k}$ such that $\player{k} \succ_j \player{i}$ pulled $\arm{j}$ at $t-1$. }

We proceed to analyze the probability of the event that the matching was unstable for $K$ consecutive steps even though UCB rankings were correct in all $K$ steps. Essentially, this requires us to establish how quickly the decentralized conflict-avoiding procedure converges to a stable matching when there are no statistical errors in the rankings of arms. To do so, we invoke a result from the stable matching literature \citep{hernan95paths}. First, we introduce the notion of a blocking pair that is \emph{player-consistent}.

\begin{definition}[Player-consistent blocking pair]
	A blocking pair $(\player{i}, \arm{j})$ in a matching $\mu$ is \emph{player-consistent} if 
	\begin{equation}\label{eq:playerconsistentbp}
		\arm{j} \psucc{i} \arm{k}~~\text{for any $k$ such that $(\player{i}, \arm{k})$  is a blocking pair in $\mu$}.
	\end{equation}
\end{definition}

In other words, if player $\player{i}$ most prefers the $\arm{j}$ out of all the arms that prefer $\player{i}$ over the player that they are matched to in $\mu$, then the blocking pair $(\player{i}, \arm{j})$ is player-consistent. Notice that in Algorithm~\ref{alg:ucb_ca_random}, at time $t$, if the UCB rankings are accurate, then each player $\player{i}$ (who did not draw a random delay) will attempt precisely the arm $\arm{j}$ where $(\player{i}, \arm{j})$ is a player-consistent blocking pair in the matching $\mu$ induced by the previous attempted actions $m_{t-1}$, by the definition of the plausible set. 

We also require the following definition of \emph{resolving} a blocking pair, in the context of running one step of Algorithm~\ref{alg:ucb_ca_random}.

\begin{definition}[Resolution of blocking pair]
	Given attempted actions $m_t \notin M^* $ and a blocking pair $(\player{i}, \arm{j})$  in the matching induced by $m_t$, we say that $m_{t+1}$ is obtained by resolving $(\player{i}, \arm{j})$, if $m_{t+1}(\player{i} ) = a_j$ and $m_{t+1}(p) =  m_t(p)$ for all $p \in \agentset, p \neq \player{i}$.
\end{definition}

We are ready to establish a key result---that there is a strictly positive probability that a single player-consistent blocking pair is resolved in one step of Algorithm~\ref{alg:ucb_ca_random}.

\begin{lemma}[Positive probability of resolving a single blocking pair]\label{lem:bp_positive}
	Assume $m_{t-1}$ is unstable. Let $(\player{i}, \arm{j})$ be a blocking pair in $m_{t-1}$ that 
	is player-consistent. Condition on the event~$E_t$. Then, 
	with probability at least $\varepsilon = (1-\stayprob)\stayprob^{\numagents -1}$, 
	$(\player{i}, \arm{j})$ is the only blocking pair to be 
	resolved at time $t$, i.e., 
	\begin{equation}
		\label{eqn:suffice-bp-pos}
		\P\left(\match_t(\player{i})= \arm{j}, m_{t}(p) =  m_{t-1}(p) ~\forall p \in \agentset, p \neq \player{i}\mid E_t\right) \ge \varepsilon. 
	\end{equation}
\end{lemma}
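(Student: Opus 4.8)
The plan is to exhibit one explicit outcome of the time-$t$ delay draws that deterministically forces the transition in \eqref{eqn:suffice-bp-pos} on the event $E_t$, and to show this outcome is independent of $E_t$ and has probability exactly $\varepsilon$. Concretely, I would condition on
\[
D := \{\delaydraw{i}(t) = 0\} \cap \bigcap_{k \neq i}\{\delaydraw{k}(t) = 1\},
\]
on which player $\player{i}$ re-optimizes while every other player repeats its previous attempt. Since the Bernoulli delay draws at time $t$ are fresh and independent of the entire history through time $t-1$, while both the plausible sets $\potset{\cdot}(t)$ and the bounds $\ucb{\cdot}{\cdot}{t}$ (hence $E_t$) are determined by that history, $D$ is independent of $E_t$ and $\P(D) = (1-\stayprob)\stayprob^{\numagents-1} = \varepsilon$. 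It therefore suffices to show $D \cap E_t$ is contained in the event of \eqref{eqn:suffice-bp-pos}, since then $\P(\text{target}\mid E_t) \ge \P(D\mid E_t) = \P(D) = \varepsilon$.

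On $D$, every player $p \neq \player{i}$ delays and so repeats its previous attempt, giving $\match_t(p) = m_{t-1}(p)$ for all $p \neq \player{i}$ automatically. The remaining task is to show that on $D \cap E_t$ player $\player{i}$ attempts exactly $\arm{j}$. Because $\player{i}$ does not delay, it attempts the arm of maximal UCB in $\potset{i}(t)$, which on $E_t$ equals the arm maximizing $\meanreward{i}{\cdot}$ over $\potset{i}(t)$; as the mean rewards are distinct, it suffices to prove that $\arm{j}$ is the $\psucc{i}$-most-preferred arm of $\potset{i}(t)$.

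Let $\mu$ be the matching induced by $m_{t-1}$, so each arm is matched to its conflict winner, i.e. $\pull{\mu(\arm{l})}(t-1) = \arm{l}$. Then the definition of $\potset{i}(t)$ reads as a membership criterion in $\mu$: an arm $\arm{l}$ lies in $\potset{i}(t)$ exactly when $\arm{l}$ is free or $\player{i} \asucceq{l} \mu(\arm{l})$. From $(\player{i},\arm{j})$ being a blocking pair we get $\mu(\arm{j}) \neq \player{i}$ together with ``$\arm{j}$ free or $\player{i} \asucc{j} \mu(\arm{j})$,'' which places $\arm{j} \in \potset{i}(t)$. The crux---and the step I expect to be the main obstacle---is optimality of $\arm{j}$ within $\potset{i}(t)$. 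Suppose for contradiction some $\arm{l} \in \potset{i}(t)$ satisfies $\arm{l} \psucc{i} \arm{j}$. Transitivity with the blocking relation $\arm{j} \psucc{i} \mu(\player{i})$ gives $\arm{l} \psucc{i} \mu(\player{i})$, so $\player{i}$ truly prefers $\arm{l}$ to its current match; moreover $\arm{l} \psucc{i} \mu(\player{i})$ forces $\mu(\player{i}) \neq \arm{l}$, hence $\mu(\arm{l}) \neq \player{i}$, so the membership criterion sharpens to ``$\arm{l}$ free or $\player{i} \asucc{l} \mu(\arm{l})$.'' These two facts make $(\player{i},\arm{l})$ a blocking pair in $\mu$, and player-consistency of $(\player{i},\arm{j})$ then yields $\arm{j} \psucc{i} \arm{l}$, contradicting $\arm{l} \psucc{i} \arm{j}$.

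Combining these pieces, on $D \cap E_t$ player $\player{i}$ attempts $\arm{j}$ while all other players repeat their previous attempts, which is precisely the event in \eqref{eqn:suffice-bp-pos}; independence of $D$ and $E_t$ then gives the lower bound $\varepsilon$. The only delicate points to pin down in the full write-up are the precise reading of the plausible-set definition in the ``free arm'' case and the identification of $\potset{i}(t)$-membership (which is stated via the successful pulls $\pull{k}(t-1)$) with the induced matching $\mu$; the combinatorial heart is the player-consistency contradiction above.
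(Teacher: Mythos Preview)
Your proposal is correct and follows essentially the same approach as the paper: exhibit the delay-draw outcome in which $\player{i}$ re-optimizes while all others repeat, and argue that on $E_t$ this forces $\player{i}$ to attempt $\arm{j}$. In fact, you supply strictly more detail than the paper's own proof: you make the independence of $D$ from $E_t$ explicit, you get the roles of $\stayprob$ and $1-\stayprob$ right (the paper's proof actually interchanges them in the text, though the final bound $\varepsilon=(1-\stayprob)\stayprob^{\numagents-1}$ is correct), and you give the full player-consistency contradiction showing $\arm{j}$ is $\psucc{i}$-maximal in $\potset{i}(t)$, which the paper merely asserts.
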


\begin{proof}
	Assume $E_t$ holds. Let $(\player{i}, \arm{j})$ be any blocking pair that is player-consistent. 
	First, we show $\player{i}$ has probability at least $\stayprob$ of pulling the arm $\arm{j}$ conditioned 
	on all of the other players attempting the same arm as they pulled at time $t$. Indeed, since 
	$(\player{i}, \arm{j})$ is a blocking pair of $m_{t-1}$, it means that $\arm{j}$ is in the plausible set of 
	$\player{i}$ at time $t$. As $E_t$ occurs, and $\arm{j}$ is the top choice among all the arms in $\player{i}$'s plausible set, the player $\player{i}$ has probability at least 
	$\stayprob$ of attempting $\arm{j}$, and will be successful if all other players stay on the same arm 
	as they pulled at time $t$. Second, independently, each of the rest of the $N-1$ players have probability at least 
	$(1-\stayprob)$ of attempting the same arm that they attempted at time $t-1$. Together, this proves 
	equation~\eqref{eqn:suffice-bp-pos}. 
\end{proof}

Now we can finally show that the event 
that the matching was unstable for $K$ consecutive steps even though UCB rankings were correct in all $K$ steps happens with a probability that is exponentially small in $K$, as stated formally in the lemma below.

\begin{lemma}[Probability of not reaching a stable matching]\label{lem:exponential-prob}
	For any $0 \le K < t-1$, the following inequality holds:
	\begin{equation} \label{eq:expon-prob-unstable}
		\P\left(\bigcap_{s=0}^K \left(\{m_{t-s-1} \not\in M^* \} \cap E_{t-s} \right) \right) \le 
		(1-\varepsilon^{N^4})^{\lfloor K/\numagents^4\rfloor}.
	\end{equation}
\end{lemma}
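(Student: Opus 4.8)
The plan is to chop the window of $K$ transitions into $\lfloor K/\numagents^4 \rfloor$ consecutive blocks, each consisting of $\numagents^4$ steps of Algorithm~\ref{alg:ucb_ca_random}, and to show that within any single such block, conditioned on the matching at the start of the block being unstable and on the good event $E$ holding at every step of the block, the matching reaches (and thereafter retains) stability with probability at least $\varepsilon^{\numagents^4}$. Since on $E$ a stable matching can never become unstable again by Lemma~\ref{lem:still-stable}, the full intersection event in \eqref{eq:expon-prob-unstable} forces the matching to be unstable at the end of every block; it therefore suffices to bound the probability, conditional on the past, that a given block ends in an unstable matching, and this conditional probability is at most $1 - \varepsilon^{\numagents^4}$. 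Applying the tower property of conditional probability block by block and bounding each of the $\lfloor K/\numagents^4\rfloor$ conditional factors by $1-\varepsilon^{\numagents^4}$ yields the claimed bound.

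The heart of the argument is the per-block estimate, and here I would invoke the classical result of \citet{hernan95paths} on paths to stability: from any unstable matching there exists a sequence of at most $\numagents^4$ blocking-pair resolutions, each player-consistent, terminating at a stable matching. Fix the (past-measurable) unstable matching at the start of a block and let $(\player{i_1},\arm{j_1}), \ldots, (\player{i_\ell}, \arm{j_\ell})$, with $\ell \le \numagents^4$, be such a player-consistent resolution path. Conditioned on $E$ holding at the corresponding step, Lemma~\ref{lem:bp_positive} guarantees that the algorithm resolves exactly the prescribed single blocking pair with probability at least $\varepsilon$, deterministically advancing the matching to the next one on the path. Composing these bounds through the chain rule over the first $\ell$ steps of the block, the algorithm follows the entire path with probability at least $\varepsilon^{\ell} \ge \varepsilon^{\numagents^4}$ (using $\varepsilon \in (0,1)$), reaching a stable matching; Lemma~\ref{lem:still-stable} then keeps it stable for the remaining $\numagents^4 - \ell$ steps on $E$. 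This gives the per-block lower bound $\varepsilon^{\numagents^4}$ on reaching stability, equivalently the upper bound $1-\varepsilon^{\numagents^4}$ on remaining unstable at the block's end.

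Two points will require care. First, the blocking-pair path supplied by \citet{hernan95paths} must consist of \emph{player-consistent} resolutions, since that is exactly the structure that Lemma~\ref{lem:bp_positive} and Algorithm~\ref{alg:ucb_ca_random} produce on $E$ (each non-delaying player attempts the arm forming a player-consistent blocking pair); verifying that the cited result can be run through player-consistent resolutions and that its length is bounded by $\numagents^4$ is the main obstacle, and it is precisely where the exponent $\numagents^4$ enters. Second, the conditioning must be set up so that the step-wise factor $\varepsilon$ from Lemma~\ref{lem:bp_positive} applies cleanly: at each step I condition on the realized matching so far together with $E$ at that step, so that the path to be followed is determined by the block's starting matching and the successive $\varepsilon$ lower bounds compose correctly. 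Assembling the blockwise estimates via the tower property then completes the proof.
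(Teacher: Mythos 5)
Your proposal is correct and follows essentially the same route as the paper: both decompose the window into blocks of $\numagents^4$ steps, use Theorem 4.2 of \citet{hernan95paths} to produce a player-consistent resolution path of length at most $\numagents^4$, compose Lemma~\ref{lem:bp_positive} step by step to get the per-block success probability $\varepsilon^{\numagents^4}$, and multiply across blocks using independence of the delay draws. Your write-up is somewhat more explicit about the conditioning and about Lemma~\ref{lem:still-stable} preserving stability for the remainder of a block, but the argument is the same.
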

\begin{proof}
	The result is a direct consequence of Lemma~\ref{lem:bp_positive} and the theorem below. 
	\begin{theorem}[Theorem 4.2 in \citet{hernan95paths}]
		\label{thm:matching-resolving}
		Given any unstable matching $\mu_0$, there exists a sequence of blocking pairs of length at most $N^4$ such that 
		resolving the sequence of blocking pairs reaches a stable matching. Moreover, this sequence of blocking pairs 
		results from resolving blocking pairs in a \emph{player-consistent order}, that is, any blocking pair 
		$(\player{i}, \arm{j})$ resolved in the current matching $\mu$ is player-consistent with respect to the matching~$\mu$.
	\end{theorem}
	We now prove Lemma~\ref{lem:exponential-prob} using Lemma~\ref{lem:bp_positive} and 
	Theorem~\ref{thm:matching-resolving}.
	\begin{enumerate}
		\item We first show Lemma~\ref{lem:exponential-prob}  holds when $K = N^4$. Let $E = \cap_{s=0}^K E_{t-s}$. Condition on 
		the event that $E$ happens. Condition on the matching $\mu = m_{t-K-1}$. 
		By Theorem~\ref{thm:matching-resolving} and Lemma~\ref{lem:bp_positive}, we know that with probability 
		at least $\varepsilon^{N^4}$, %all the blocking pairs in $\mu$ can be resolved 
		a stable matching will be reached within $N^4$ steps of the 
		algorithm. Since this holds for arbitrary $\mu = m_{t-K-1}$, we obtain 
		\begin{equation*}
			\P\left(\bigcap_{s=0}^K \{m_{t-s-1} \not\in M^* \} \mid E \right) \le 1-\varepsilon^{N^4}. 
		\end{equation*}
		Thus, we have 
		\begin{equation*}
			\P\left(\bigcap_{s=0}^K \left(\{m_{t-s-1} \not\in M^* \} \cap E_{t-s} \right) \right)  \le 
			\P\left(\bigcap_{s=0}^K \{m_{t-s-1} \not\in M^* \} \mid E \right) \le 1-\varepsilon^{N^4}. 
		\end{equation*}
		\item We next generalize the result to $K > N^4$. This is straightforward, as the random seeds $x$ in 
		Algorithm~\ref{alg:ucb_ca_random} are mutually independent for any non-overlapping blocks of $\numagents^4$ steps.
	\end{enumerate}
\end{proof}
Note that in order for this bound to be meaningful, we require $K \gg \varepsilon^{-N^4}N^4$.

Finally, we are now fully equipped to prove the main result of this section.

\begin{proof}\textbf{of Theorem~\ref{thm:regret_general_pref}}
	~~Let $0 \le h_t < t$ be a time window that we are free to choose in a way that depends on the time $t$. 
	By Lemma~\ref{lem:unstable-event-sets} and the union bound, we have 
	\begin{equation*}
		\Pr\left(\match_t \not\in M^* \right) 
		\le  \P\left(\bigcap_{s=0}^{h_t} (E_{t-s} \cap \left\{{m_{t-s-1} \notin M^*}\right\})\right) + \sum_{s=0}^{h_t} \P(E_{t-s}^c).
	\end{equation*}
	Let $g_t =\lfloor h_t/N^4\rfloor$. Lemmas \ref{lemma:bound-E-t-c} and \ref{lem:exponential-prob} immediately yield the
	following: 
	\begin{equation*}
		\Pr\left(\match_t \not\in M^* \right) 
		\le (1 - \varepsilon^{N^4})^{g_t} + \varepsilon^{-1} \sum_{s = 0}^{h_t} \sum_{(i, j, k), : \arm{j} \prec_i \arm{k}} \P(F_{j, k}^{(i)}(t - s)\cap  \pull{i}(t - s) = j ).
	\end{equation*}
	Summing these inequalities over $t$ up to $T$, we obtain
	\begin{equation}
		\begin{split}
			\sum_{t = 1}^{T} \Pr\left(\match_t \not\in M^* \right) 
			&\leq \sum_{t = 1}^{T} (1 - \varepsilon^{N^4})^{g_t} + 
			\varepsilon^{-1} \sum_{t = 1}^{T}\sum_{s = 0}^{h_t} \sum_{(i, j, k), : \arm{j} \prec_i \arm{k}} 
			\!\!\!\! \P(F_{j, k}^{(i)}(t - s)\cap  \pull{i}(t - s) = j ) \\
			&= \sum_{t = 1}^{T} (1 - \varepsilon^{N^4})^{g_t} + 
			\varepsilon^{-1} \sum_{(i, j, k), : \arm{j} \prec_i \arm{k}}   \sum_{s = 0}^{h_T} \sum_{\substack{t: s \leq h_t \\ 1 \le t\le T}} 
			\P(F_{j, k}^{(i)}(t - s)\cap  \pull{i}(t - s) = j ) 
			\label{eqn:master-equation}
		\end{split}
	\end{equation}
	We seek upper bounds for the terms on the right-hand side. Focus on the second term in equation~\eqref{eqn:master-equation}. 
	Recall the standard UCB Lemma (e.g., Lemma~\ref{lem:ucb_bound}):
	\begin{equation*}
		\sum_{\substack{t: s \leq h_t \\ 1 \le t\le T}} 
		\P(F_{j, k}^{(i)}(t - s)\cap  \pull{i}(t - s) = j) 
		\le 6 \cdot \left(\frac{1}{\Delta^2}\log (T) + 1\right), ~\text{for each $s$.}
	\end{equation*}
	
	Substituting this bound into equation~\eqref{eqn:master-equation} yields
	\begin{align}
		\sum_{t = 1}^{T } \Pr\left(\match_t \not\in M^* \right) \leq \sum_{t = 1}^{T} (1 - \varepsilon^{N^4})^{g_t}
		+ 6 \varepsilon^{-1} \numagents \numarms^2 
		\left(h_T + 1\right) \left(\frac{1}{\Delta^2}\log (T) + 1\right), 
		\label{eqn:master-equation-second}
	\end{align}
	where we have used the fact that there are at most $\numagents \numarms^2$ triplets $(i,j,k)$ such that 
	$\arm{j} \prec_i \arm{k}$. We now choose a specific sequence $(h_t)$ to optimize the upper bound. 
	Let $B \ge 1$ be determined later. Set $h_t = \min\{t, B\} - 1$.  With this choice of $h_t$, and after
	some elementary computations, we can bound the 
	first term in equation~\eqref{eqn:master-equation-second} by 
	\begin{equation*}
		\sum_{t = 1}^{T} (1 - \varepsilon^{N^4})^{g_t} \le 3 \cdot \sum_{t=1}^T \exp(-h_t \varepsilon^{N^4}/N^4)
		\le 6 \cdot \left(T \exp \left(- \frac{B \varepsilon^{N^4}}{2N^4}\right) + \frac{\numagents^4}{\varepsilon^{\numagents^4}}\right).
	\end{equation*}
	The second term in equation~\eqref{eqn:master-equation-second} is bounded by 
	$6 \varepsilon^{-1} B\numagents \numarms^2 
	\left(\frac{1}{\Delta^2}\log (T) + 1\right)$, since $h_T < B$ by definition. Consequently, these 
	two bounds lead to the following (that holds for all $B$)
	\begin{align*}
		\sum_{t = 1}^{T } \Pr\left(\match_t \not\in M^* \right) \leq& ~6 \cdot \left(T \exp \left(- \frac{B\varepsilon^{N^4}}{2N^4}\right) 
		+ \frac{\numagents^4}{\varepsilon^{\numagents^4}} + \frac{1}{\varepsilon} \numagents \numarms^2
		B\left(\frac{1}{\Delta^2}\log (T) + 1\right)\right).
	\end{align*}
	By carefully setting $B = 2\left\lceil \frac{\numagents^4}{\varepsilon^{\numagents^4}} 
	\log\left(T\right) \right\rceil$, we obtain the final bound as desired 
	\begin{equation*}
		\sum_{t = 1}^{T } \Pr\left(\match_t \not\in M^* \right) \leq 24 \cdot 
		\frac{\numagents^5 \numarms^2}{\varepsilon^{\numagents^4 + 1}} \log(T) \cdot \left(\frac{1}{\Delta^2}\log (T) + 3\right).
	\end{equation*}
\end{proof}

% !TeX root = main.tex 

\section{Strategy and Incentive Compatibility}
\label{sec:ic}

In this section, we examine the CA-UCB algorithm from the perspective of incentive compatibility.

Thus far we have given stable regret guarantees for each player, when all players follow the same algorithm, whether assuming a global ranking of players (Theorem~\ref{thm:global-players}), or without making assumptions on the market's preferences (Theorem~\ref{thm:regret_general_pref}). Given these results, a natural question to consider, in the decentralized setting, is whether the players are indeed incentivized to run the same algorithm as everyone else. In other words, could any single player benefit from running a different algorithm, when all other players are running Algorithm~\ref{alg:ucb_ca_random}? 

\subsection{A positive result for globally ranked players}

In the setting of Section~\ref{sec:global-players}, when players are globally ranked, we can show that the gains from deviating are limited. The following proposition gives an lower bound on the stable regret of the deviating player that scales logarithmically in the horizon $\horizon$, for any algorithm that they run. This implies that the time-averaged gains from deviating must vanish quickly as learning progresses.

\begin{proposition}[Incentive compatibility under globally ranked players]\label{prop:incentive-global}
	Under Assumption~\ref{assmp:global_players}, suppose that all players other than player $\player{k}$ run Algorithm~\ref{alg:ucb_ca_random} with $\stayprob = 0$, and $\player{k}$ can run any algorithm. The following lower bound on player $\player{k}$'s stable regret holds:
	\begin{equation}
	\regret{k}(\horizon) \ge 6 k^2 (\numarms-k) \left( \frac{\log \horizon}{\Delta^2} + 1\right)\left(\min_{j: \gap{k}{j} < 0}\gap{k}{j}\right).
	\end{equation}
\end{proposition}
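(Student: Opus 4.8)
The plan is to lower-bound $\regret{k}(\horizon)$ by isolating the only channel through which a deviating $\player{k}$ can gain relative to their unique stable arm $\arm{k}$---successfully pulling an arm they truly prefer to $\arm{k}$---and then to show such favorable pulls are at most logarithmically frequent in expectation. Since the decomposition below accounts only for the reward collected each round, it is valid for any algorithm $\player{k}$ uses:
\begin{equation*}
\regret{k}(\horizon) = \sum_{i:\gap{k}{i}>0}\gap{k}{i}\,\E[\numpulls{k}{i}{\horizon}] + \gap{k}{\emptyset}\,\E[\numpulls{k}{\emptyset}{\horizon}] + \sum_{i:\gap{k}{i}<0}\gap{k}{i}\,\E[\numpulls{k}{i}{\horizon}].
\end{equation*}
The first two groups are nonnegative, and each negative coefficient satisfies $\gap{k}{i}\ge\min_{j:\gap{k}{j}<0}\gap{k}{j}$, so $\regret{k}(\horizon)\ge\big(\min_{j:\gap{k}{j}<0}\gap{k}{j}\big)\sum_{i:\gap{k}{i}<0}\E[\numpulls{k}{i}{\horizon}]$. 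As this leading factor is negative, the proposition reduces to the \emph{upper} bound $\sum_{i:\gap{k}{i}<0}\E[\numpulls{k}{i}{\horizon}]\le 6k^2(\numarms-k)(\log\horizon/\mingap^2+1)$.

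I would then record two structural facts. By the re-indexing under Assumption~\ref{assmp:global_players}, $\gap{k}{i}>0$ for every $i>k$, so $\{i:\gap{k}{i}<0\}\subseteq\{1,\dots,k-1\}$: the coveted arms are all low-indexed. More importantly, the trajectory of the higher-ranked players $\player{1},\dots,\player{k-1}$ does not depend on which algorithm $\player{k}$ runs. Player $\player{j}$'s plausible set only excludes arms last won by players the arm prefers to $\player{j}$---players of index strictly below $j<k$---so $\player{k}$ never enters the plausible-set computation of any $\player{j}$ with $j<k$, and every conflict between $\player{k}$ and such a $\player{j}$ is won by $\player{j}$. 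Consequently the sub-market on $\{\player{1},\dots,\player{k-1}\}$ evolves exactly as in the all--CA-UCB game, and Lemmas~\ref{lem:global_subopt}, \ref{lem:global_conf}, and~\ref{lem:ucb_bound} may be invoked verbatim for each such $\player{j}$. Finally, since $\player{k}$ wins at most one arm per round, $\sum_{i:\gap{k}{i}<0}\numpulls{k}{i}{\horizon}=\#\{t\le\horizon:\pull{k}(t)\in\{\arm{i}:\gap{k}{i}<0\}\}$ is a count over time steps, which is what the per-step argument below exploits.

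The core step is a pigeonhole argument that maps each favorable pull back to the same ``mistake'' events $\errorpot{j}{l}{t'}$ that control conflicts in Lemma~\ref{lem:global_conf}. Fix a round $t$ at which $\player{k}$ wins a coveted arm $\arm{i^*}$, $i^*<k$. The $k-1$ higher-ranked players win distinct arms, none equal to $\arm{i^*}$, yet only the $k-2$ low arms in $\{\arm{1},\dots,\arm{k-1}\}\setminus\{\arm{i^*}\}$ are available to them; hence some $\player{j}$ with $j<k$ either wins a high arm $\arm{l}$ with $l\ge k$ or wins nothing. In the former case $l>j$ gives $\arm{j}\psucc{j}\arm{l}$ and Lemma~\ref{lem:global_subopt} applied to $\player{j}$ places the event in the right-hand side below; in the latter Lemma~\ref{lem:global_conf} applied to $\player{j}$ does the same. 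Taking the union over $j<k$ yields exactly the inclusion of Lemma~\ref{lem:global_conf}:
\begin{equation*}
\big\{\pull{k}(t)\in\{\arm{i}:\gap{k}{i}<0\}\big\}\subseteq\bigcup_{\substack{1\le j<k\\ j<l\le\numarms}}\ \bigcup_{t-k\le t'\le t}\errorpot{j}{l}{t'}.
\end{equation*}
Summing over $t$ and applying Lemma~\ref{lem:ucb_bound} through the identical computation that produced~\eqref{eqn:global_conf-end} gives $\sum_{i:\gap{k}{i}<0}\E[\numpulls{k}{i}{\horizon}]\le 6k^2(\numarms-k)(\log\horizon/\mingap^2+1)$; combined with the first paragraph this is the claimed bound.

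The step I expect to be the main obstacle is making the ``unaffectedness'' of the higher-ranked players fully rigorous against an \emph{arbitrary} (possibly adaptive and adversarial) strategy of $\player{k}$: one must argue that the winners $\pull{j}(\cdot)$, the observed rewards, and hence the UCB indices of every $\player{j}$ with $j<k$ are measurable with respect to a history that is functionally independent of $\player{k}$'s choices, which is precisely what licenses the use of the three CA-UCB lemmas for those players. The second delicate point is the pigeonhole reduction: I must ensure a favorable pull genuinely forces a displacement among $\player{1},\dots,\player{k-1}$ (a high-arm win or a failed round), so that the tracing-back lemmas apply and the resulting event set coincides exactly with that of the conflict bound, thereby reproducing the constant $6k^2(\numarms-k)$ rather than a looser one.
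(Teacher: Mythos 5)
Your proposal is correct and follows essentially the same route as the paper: the same reduction of the lower bound to an upper bound on $\sum_{i:\gap{k}{i}<0}\E[\numpulls{k}{i}{\horizon}]$, the same target inclusion into the error events $\errorpot{j}{l}{t'}$ with $j<k$, and the same final application of Lemmas~\ref{lem:global_subopt}, \ref{lem:global_conf}, and~\ref{lem:ucb_bound}. The only difference is in the intermediate combinatorial step---you pigeonhole over all of $\player{1},\dots,\player{k-1}$ to find a displaced player, whereas the paper traces the single player $\player{i}$ whose stable arm was taken via the attempt inclusion \eqref{eq:superopt-attempts}---and both land on the same event set (hence the same constant), while your explicit observation that $\player{k}$ can never enter the plausible-set computations or win conflicts against better-ranked players is precisely the justification the paper leaves implicit when it invokes those lemmas for the non-deviating players.
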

This result follows from a simple application of the same arguments that we developed to prove Theorem~\ref{thm:global-players}. The key idea is as follows. A deviating player that is rank $k$ in the market can successfully pull an arm $\arm{i}$ that they prefer to their stable arm, only if the better-ranked player $\player{i}$ is not pulling their stable arm $\arm{i}$ in the same round. This can only happen if $\player{i}$ or a better-ranked player had a mistake in their UCB rankings and pulled a suboptimal arm within the last $k$ rounds, since all players other than $\player{k}$ are indeed following the CA-UCB algorithm. The gains to deviating are limited for player $\player{k}$ when all the arms have the same preferences, precisely because $\player{k}$ cannot affect the actions of better ranked players. A complete proof can be found in Appendix~\ref{app:global-IC-proof}.

\subsection{A negative result}
Given that we have a general stable regret guarantee for arbitrary preferences, established in Section~\ref{sec:general-prefs}, one might ask if there also exists a general incentive compatibility result for Algorithm~\ref{alg:ucb_ca_random}. Unfortunately, the answer is a negative one. The following proposition shows, by way of counterexample, that there can be no blanket incentive compatibility guarantee for Algorithm~\ref{alg:ucb_ca_random} without making additional assumptions, such as on the preference structure. 

%\todo{More elaboration. Compare with section 4's IC result. Also comment on future work directions, or add this to the discussion section.}

%\todo{Is it also worth commenting on how the Alg is challenging to analyze for incentive compatibility, because we cannot re-use the arguments to prove regret bounds to prove IC. This is because our proof of convergence assumes all players are following the Algorithm, and any deviating player will break several lemmas. This is unlike in Section 4, where the lemmas were player-wise, and one deviating player didn't affect better ranked players.}

\begin{proposition}\label{prop:not-incentive-comp}
	Consider the market of three players and three arms with preferences as given in Example~\ref{ex:incentive-counterex}. When two players $\player{1}$ and $\player{2}$ run Algorithm~\ref{alg:ucb_ca_random} with any $\stayprob \in (0,1/4)$, there exists a sequence of actions $\{\attempt{3}(t)\}_{t=1...\horizon}$ for player $\player{3}$ such that $\player{3}$'s stable regret can be upper bounded as:
	\begin{equation}
	\regret{3}(\horizon) \le -C_{1}\cdot \horizon + C_2\left(\frac{1}{\Delta^2}\log (T) + 1\right),
	\end{equation}
	where $C_1$ and $C_{2}$ are constants that depend only on $\lambda,\gap{3}{1}, \gap{3}{\emptyset}$. Moreover, there exists $\gap{3}{1}, \gap{3}{\emptyset}$ such that $C_{1}$ is strictly positive.
\end{proposition}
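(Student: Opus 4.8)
The plan is to exhibit an explicit deviation for $\player{3}$ and show that, played against the two CA-UCB players, it secures an arm strictly preferred to $\player{3}$'s pessimal stable arm on a constant fraction of rounds, so that the time-averaged reward strictly exceeds the pessimal baseline $\meanreward{3}{\pessmatch{3}}$. First I would recall the preferences in Example~\ref{ex:incentive-counterex} and, after re-indexing so that $\player{3}$'s pessimal stable match is $\arm{3}$, identify the target arm $\arm{1}$ with $\arm{1} \psucc{3} \arm{3}$ (equivalently $\gap{3}{1} < 0$). I would then take $\player{3}$'s deviation to be a fixed periodic sequence $\{\attempt{3}(t)\}$ that attempts $\arm{1}$ on a prescribed subset of rounds and falls back to $\arm{3}$ otherwise; the point of the pattern is that, by occupying $\arm{1}$, $\player{3}$ alters the plausible sets of $\player{1}$ and $\player{2}$ and drives them into a recurring configuration that repeatedly vacates $\arm{1}$, leaving $\player{3}$ as its sole (hence winning) attempter.

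The core of the argument is a dynamics analysis carried out on the ``no statistical mistakes'' event $E_t$ of Section~\ref{sec:general-prefs}. Conditioning on $E_t$ holding throughout a window, the attempts of $\player{1}$ and $\player{2}$ are determined by their plausible sets together with their delay coins, so the configuration of $(\player{1}, \player{2})$ evolves as a finite Markov chain driven by the $\mathrm{Ber}(\stayprob)$ delays and by $\player{3}$'s fixed pattern. I would enumerate the finitely many reachable configurations of this $3\times 3$ market and show that the chain spends at least a constant fraction $q = q(\stayprob) > 0$ of its steps in configurations where neither $\player{1}$ nor $\player{2}$ attempts $\arm{1}$. The restriction $\stayprob \in (0,1/4)$ enters precisely here: it guarantees that the productive transitions (a higher-ranked player leaving $\arm{1}$ without immediately returning) occur often enough to bound $q$ below by an explicit constant, while $\stayprob > 0$ rules out a degenerate deterministic cycle. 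On such rounds $\player{3}$ collects $\meanreward{3}{1} = \meanreward{3}{3} + |\gap{3}{1}|$, and on the complementary rounds the fallback to $\arm{3}$ yields at least $\meanreward{3}{3} = \gap{3}{\emptyset}$.

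Next I would account for the rounds on which $E_t$ fails. By the adaptation of the single-player UCB argument used in Lemmas~\ref{lem:ucb_bound} and~\ref{lemma:bound-E-t-c}, the CA-UCB players misrank the arms in their plausible sets only on a controlled number of rounds: $\sum_{t=1}^T \P(E_t^c) \le C \cdot (\frac{1}{\Delta^2}\log T + 1)$ for a constant $C$ depending on $\stayprob$. Over the $E_t$-rounds the previous paragraph shows $\player{3}$ wins $\arm{1}$ on a fraction at least $q$ and otherwise collects $\arm{3}$, so the reward accumulated on those rounds is at least $(\meanreward{3}{3} + q\,|\gap{3}{1}|)$ times their number, while the at most $C(\frac{1}{\Delta^2}\log T + 1)$ rounds with $E_t^c$ contribute at least $0$ and cost at most $\meanreward{3}{1}$ each relative to this baseline. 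Summing over $t$ and substituting into the definition of pessimal stable regret~\eqref{eq:pess_regret}, the $\horizon\,\meanreward{3}{3}$ terms cancel and we obtain $\regret{3}(\horizon) \le -q\,|\gap{3}{1}|\,\horizon + C_2(\frac{1}{\Delta^2}\log T + 1)$, i.e.\ the claimed bound with $C_1 = q(\stayprob)\,|\gap{3}{1}| = -q(\stayprob)\,\gap{3}{1}$ and $C_2$ depending only on $\stayprob, \gap{3}{1}, \gap{3}{\emptyset}$. Since the example has $\arm{1}\psucc{3}\arm{3}$ strictly, we have $\gap{3}{1} < 0$ and $q(\stayprob) > 0$, whence $C_1 > 0$, establishing the ``moreover'' claim.

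I expect the main obstacle to be the dynamics analysis of the second paragraph: verifying that $\player{3}$'s fixed pattern truly forces $\player{1}$ and $\player{2}$ into a recurrent family of configurations that frees $\arm{1}$ a constant fraction of the time, and pinning down the explicit constant $q(\stayprob)$ together with the role of the threshold $1/4$. This is a finite but delicate case analysis of the plausible-set updates coupled with the delay coins; the subtlety is that $\player{1}$ and $\player{2}$ react to $\player{3}$'s occupation of $\arm{1}$ \emph{through} their plausible sets, so one must check that, for every realization of the delays, these reactions cannot conspire to keep $\arm{1}$ permanently contested. The UCB accounting and the final summation are routine given Lemmas~\ref{lem:ucb_bound} and~\ref{lemma:bound-E-t-c}.
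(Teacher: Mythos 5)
There is a genuine gap, and it lies exactly where you flag your ``main obstacle'': the deviation you propose cannot produce the recurrent configurations you need. Your plan has $\player{3}$ attempt $\arm{1}$ on some rounds and fall back to $\arm{3}$ otherwise, with the idea that ``occupying $\arm{1}$'' perturbs the plausible sets of $\player{1}$ and $\player{2}$. But in Example~\ref{ex:incentive-counterex}, $\arm{1}$ ranks $\player{3}$ \emph{last} ($\arm{1}: \player{2}\succ\player{1}\succ\player{3}$). An arm is removed from $\player{i}$'s plausible set only when it was successfully pulled by a player that the arm strictly prefers to $\player{i}$; hence nothing $\player{3}$ does on $\arm{1}$ (or on $\arm{3}$) ever removes $\arm{1}$ from the plausible set of $\player{1}$ or $\player{2}$, and $\player{3}$ loses every contested attempt at $\arm{1}$. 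On the no-mistake event, $\player{1}$ simply sits on its stable favorite $\arm{1}$ forever, so $q=0$ and no choice of pattern over $\{\arm{1},\arm{3}\}$ yields linear gains. The missing idea is the indirect lever through $\arm{2}$: the paper's deviation cycles $\arm{2}\to\arm{3}\to\arm{1}$ with period three. Since $\arm{2}$ ranks $\player{3}$ first, $\player{3}$ always wins $\arm{2}$; this evicts $\arm{2}$ from $\player{2}$'s plausible set, so $\player{2}$ moves to $\arm{1}$ and wins it (being $\arm{1}$'s favorite), which in turn evicts $\arm{1}$ from $\player{1}$'s plausible set the following round; meanwhile $\player{3}$ has released $\arm{2}$, so $\player{2}$ returns there, leaving $\arm{1}$ uncontested for $\player{3}$ on the third round of each period. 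The paper then needs no Markov-chain or stationary-distribution argument: a single deterministic inclusion shows that $\player{3}$ wins $\arm{1}$ at time $3m$ whenever two specific UCB-ranking events hold and four specific delay coins come up $0$, giving success probability at least $1-4\stayprob$ minus UCB error terms per period; the constraint $\stayprob<1/4$ is exactly what keeps $1-4\stayprob>0$ after this union bound, not a mixing-rate condition.

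A second, related error: you conclude $C_1=q(\stayprob)\,|\gap{3}{1}|>0$ automatically. This contradicts the statement itself, which only asserts that \emph{some} choice of $\gap{3}{1},\gap{3}{\emptyset}$ makes $C_1>0$. The reason is that any working deviation must sacrifice reward on the lever rounds ($\player{3}$ spends roughly two thirds of the time on $\arm{2}$ and $\arm{3}$ rather than on its stable arm), so the paper's bound has the form $\gap{3}{1}(1-4\stayprob)\lfloor\horizon/3\rfloor+\gap{3}{\emptyset}\cdot\tfrac{2}{3}\horizon$ plus logarithmic terms, and the linear coefficient is negative only when $|\gap{3}{1}|(1-4\stayprob)/3$ exceeds $2\gap{3}{\emptyset}/3$. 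Your accounting, which credits $\meanreward{3}{3}$ on every non-$\arm{1}$ round, presupposes the fallback-to-$\arm{3}$ strategy that does not work. The UCB bookkeeping in your third paragraph (bounding the rounds where rankings are wrong via Lemma~\ref{lem:ucb_bound}) is sound and matches the paper, but it cannot rescue the argument without the correct deviation.
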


The above upper bound on the deviating player $\player{3}$'s stable regret shows that there exists a set of preferences and arm reward gaps such that a player could make significant gains over their stable arm by not running Algorithm~\ref{alg:ucb_ca_random}.  We defer the full description of Example~\ref{ex:incentive-counterex} and the proof of Proposition~\ref{prop:not-incentive-comp} to Appendix~\ref{app:general-ic}. In this example, $\player{3}$ has stable arm $\arm{3}$ but prefers $\arm{1}$. Because the arms have idiosyncratic preferences (as opposed to shared preferences), $\player{3}$ could pull a suboptimal arm in order to `trick' $\player{1}$ into not attempting $\arm{1}$ two rounds later, by exploiting the conflict avoidance mechanism; $\player{3}$ can then successfully pull $\arm{1}$ for one round, with some probability. As long as the reward for $\player{3}$ from $\arm{1}$ is large enough, $\player{3}$ is guaranteed a strictly negative stable regret that is linear in the horizon $\horizon$.

We have shown that Algorithm~\ref{alg:ucb_ca_random} is not incentive compatible in the fully general setting. It therefore remains an open question whether there exists an algorithm with low stable regret, under arbitrary preferences, that also has an incentive compatibility guarantee under the same.

% !TeX root = main.tex 

\section{Simulation experiments for random preferences}
\label{sec:expt}

In our theoretical analysis we considered two cases: markets in which the players are globally ranked (i.e. all arms have the same preferences over players) and markets with arbitrary preferences. For the first case Theorem~\ref{thm:global} we were able to prove a regret upper bound that resembles the guarantee derived by \cite{liu20competing} in the centralized case. However, in the case of general markets our guarantee (Theorem~\ref{thm:regret_general_pref}) has an exponential dependence on the size of the market. 

In this section, through empirical evaluations we show that 
the true performance of our proposed method is likely better than our guarantee suggests for markets with randomly drawn preferences. More precisely, we perform two sets of simulations. In the first set, we investigate how the average regret and market stability depend on the size of the market in balanced markets---markets with an equal number of players and arms---with preferences drawn from a distribution that will be specified later. We find that empirically the algorithm converges more slowly for larger number of players as expected, though the dependence on the number of players, $\numagents$, appears to be significantly better than the exponential dependence appearing in Theorem~\ref{thm:regret_general_pref}.

In the second set of experiments, we vary the heterogeneity of the players' preferences. We perform this experiment because one might expect that in markets in which different players have the same preferences there would be more conflicts (since different players have an incentive to attempt the same arms). Despite this intuition, our simulations show that CA-UCB performs equally well in markets with different level of heterogeneity. To sum up, our simulations show that not only is Theorem~\ref{thm:regret_general_pref} overly pessimistic, but that CA-UCB avoids conflicts equally well in different markets. 

For all experiments we use Algorithm~\ref{alg:ucb_ca_random} with delay probability $\stayprob=0.1$. We now present the details of our simulations.

\paragraph{Varying the size of the market.} We examine balanced markets of size $N \in \{5, 10, 15, 20\}$, and sample each player's and arm's ordinal preferences uniformly at random.  For all players the reward gaps between consecutively ranked arms are chosen to be equal to $\mingap=1$, regardless of the market size. The rewards are normally distributed with unit variance. \lledit{We sampled ten markets as such, and run Algorithm~\ref{alg:ucb_ca_random} once on each market.}

For each market size $\numagents$, we plot the mean, over ten markets, of the following two quantities: (i) the maximum average regret among players, $\max_{k\in\players}\regret{k}(\horizon)$, and (ii) the averaged market stability $\sum_{t = 1}^{T} \Pr\left(\match_t \not\in M^* \right) $ for horizon $\horizon$ up to $5000$. As can be seen in Figure~\ref{fig:players}, both the average regret and the market stability converge more slowly for larger markets. However, the dependence on $\numagents$ appears to be much better than exponential.

\paragraph{Varying the heterogeneity of the players' preferences.} We examine balanced markets of size $10$, and sample each arm's ordinal preferences uniformly at random. To sample the mean rewards $\meanreward{k}{i}$ of arm $\arm{i}$ for player $\player{k}$ we rely on random utility model used by \citet{ashlagi2017unbalanced}, with a slight modification:
\begin{align*}
x_i &\stackrel{i.i.d.}{\sim} \text{Uniform}([0,1]) \\
\varepsilon_{i,k} &\stackrel{i.i.d.}{\sim}  \text{Logistic}(0,1)\\
\overline{\mu}_i^{(k)} &= \beta x_i + \varepsilon_{i,k}\\
\meanreward{k}{i} &= \#\{j:\overline{\mu}_j^{(k)}\leq \overline{\mu}_i^{(k)} \}
\end{align*}
The intermediate utilities $\overline{\mu}_i^{(k)}$ are sampled according to random utility model used by \citet{ashlagi2017unbalanced}. We map these random utilities to $\meanreward{k}{i}$ so that the reward gaps between consecutively ranked arms are kept constant at $\mingap=1$. The parameter $\beta > 0$ determines the degree of correlation between the players' preferences. 
As $\beta$ increases the correlation between the players' preferences also increases. In fact, in the limit as $\beta \to \infty$, all the players share the same preferences with probability 1.

As before, the rewards are normally distributed with unit variance. We sample ten markets for each $\beta$ value, and plot the maximum average regret among players as well as the averaged market stability for horizon $\horizon$ up to $5000$ in Figure~\ref{fig:hetero}. As can be seen, there is no discernible difference in the convergence of Algorithm~\ref{alg:ucb_ca_random} in terms of regret or market stability, for markets with different levels of preference heterogeneity.

\begin{figure}[th]
	\centering
	\includegraphics[width=.5\textwidth]{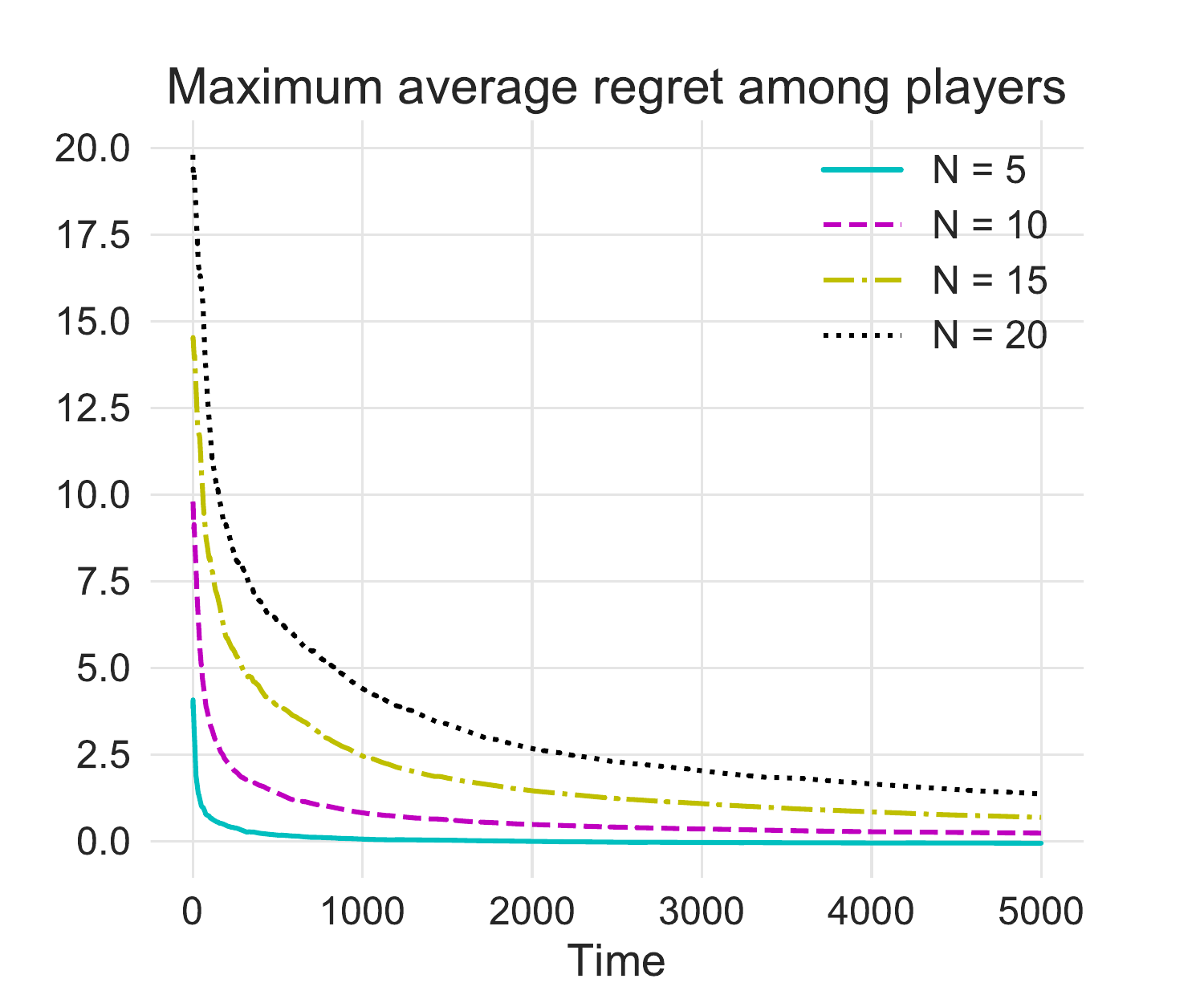}%
	\includegraphics[width=.5\textwidth]{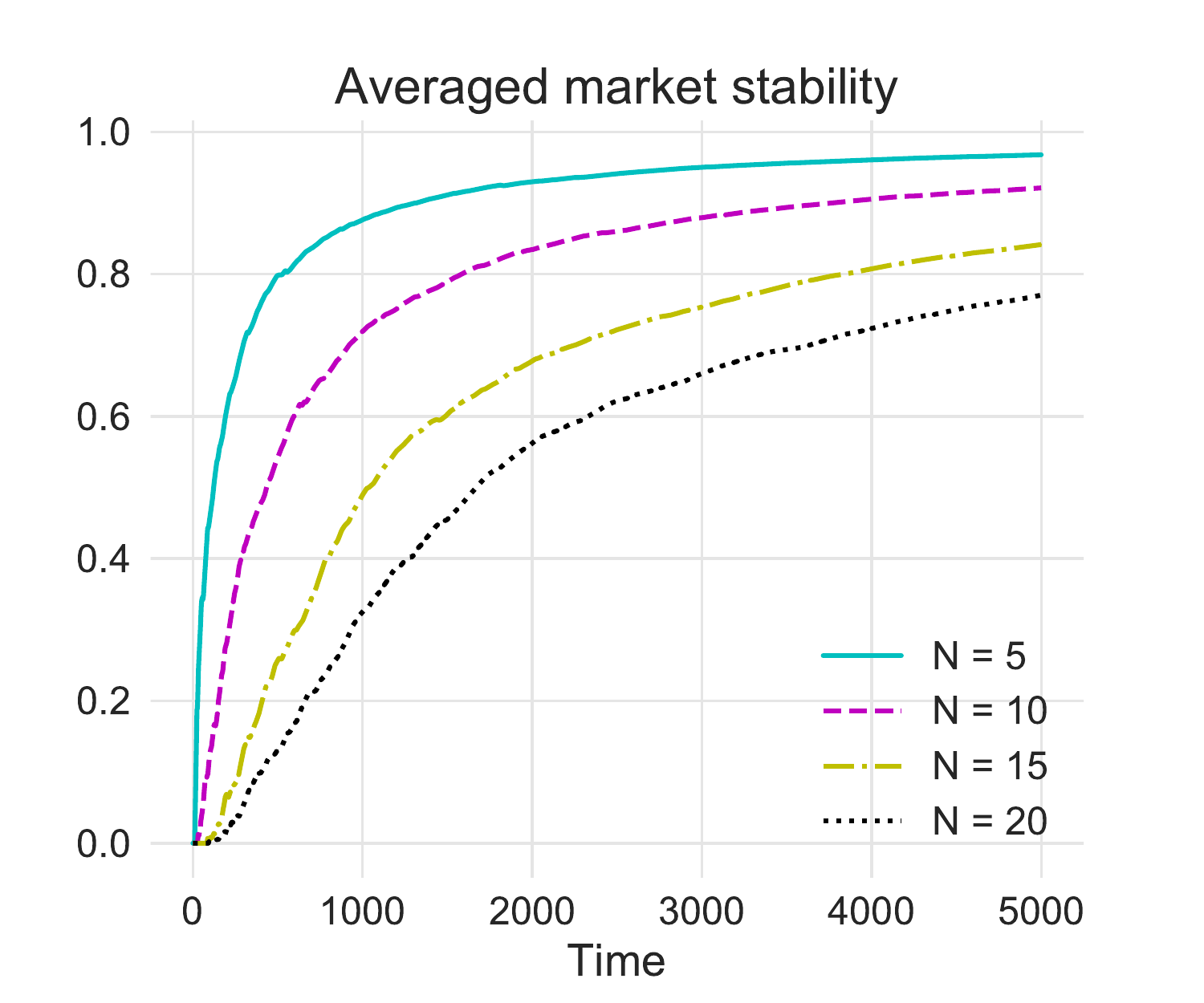}
	\caption{Varying the number of players. The plot on the left shows the maximum average regret among players and the plot on the right shows the averaged market stability. }\label{fig:players}
\end{figure}

\begin{figure}[th]
	\centering
	\includegraphics[width=.5\textwidth]{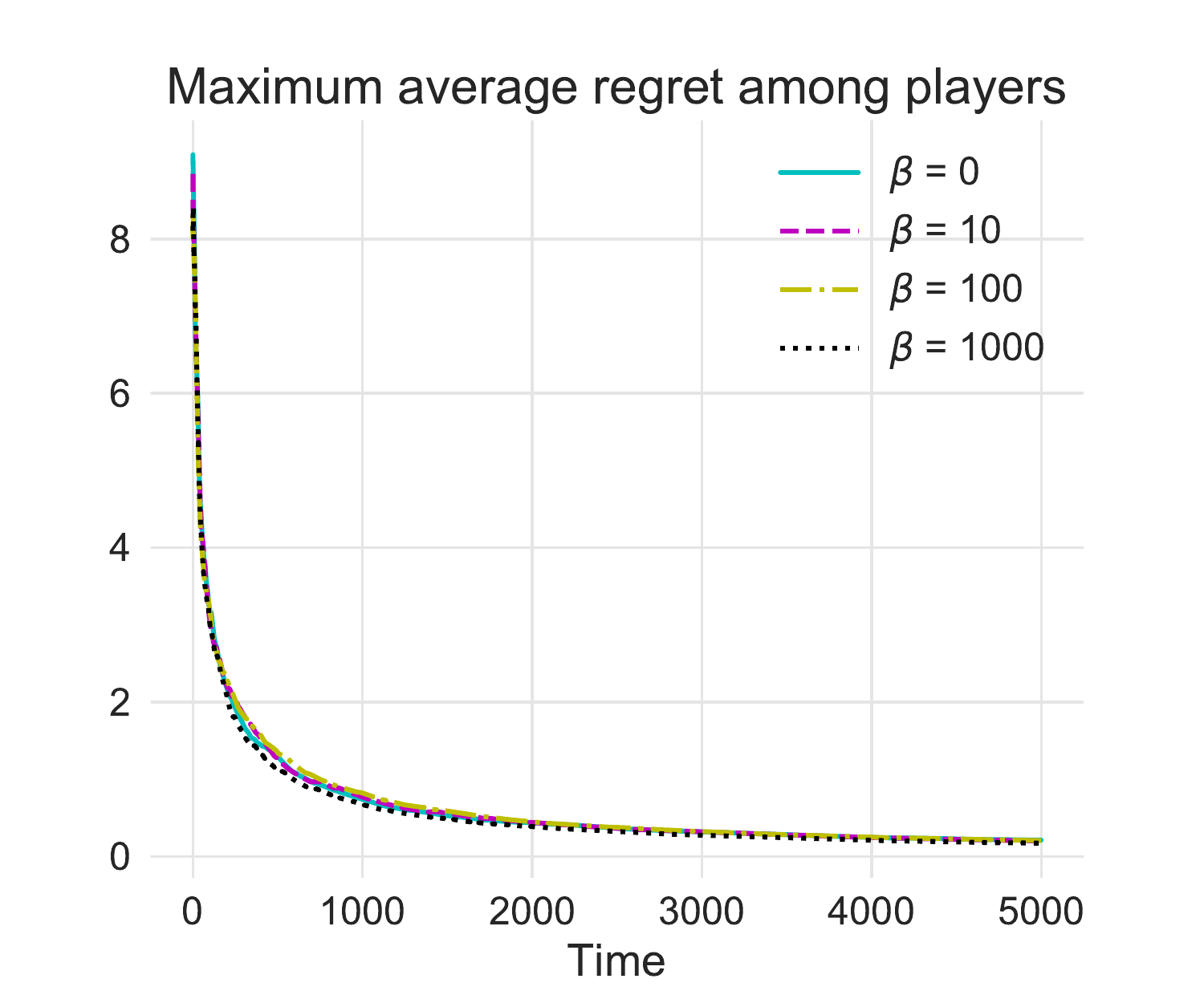}%
	\includegraphics[width=.5\textwidth]{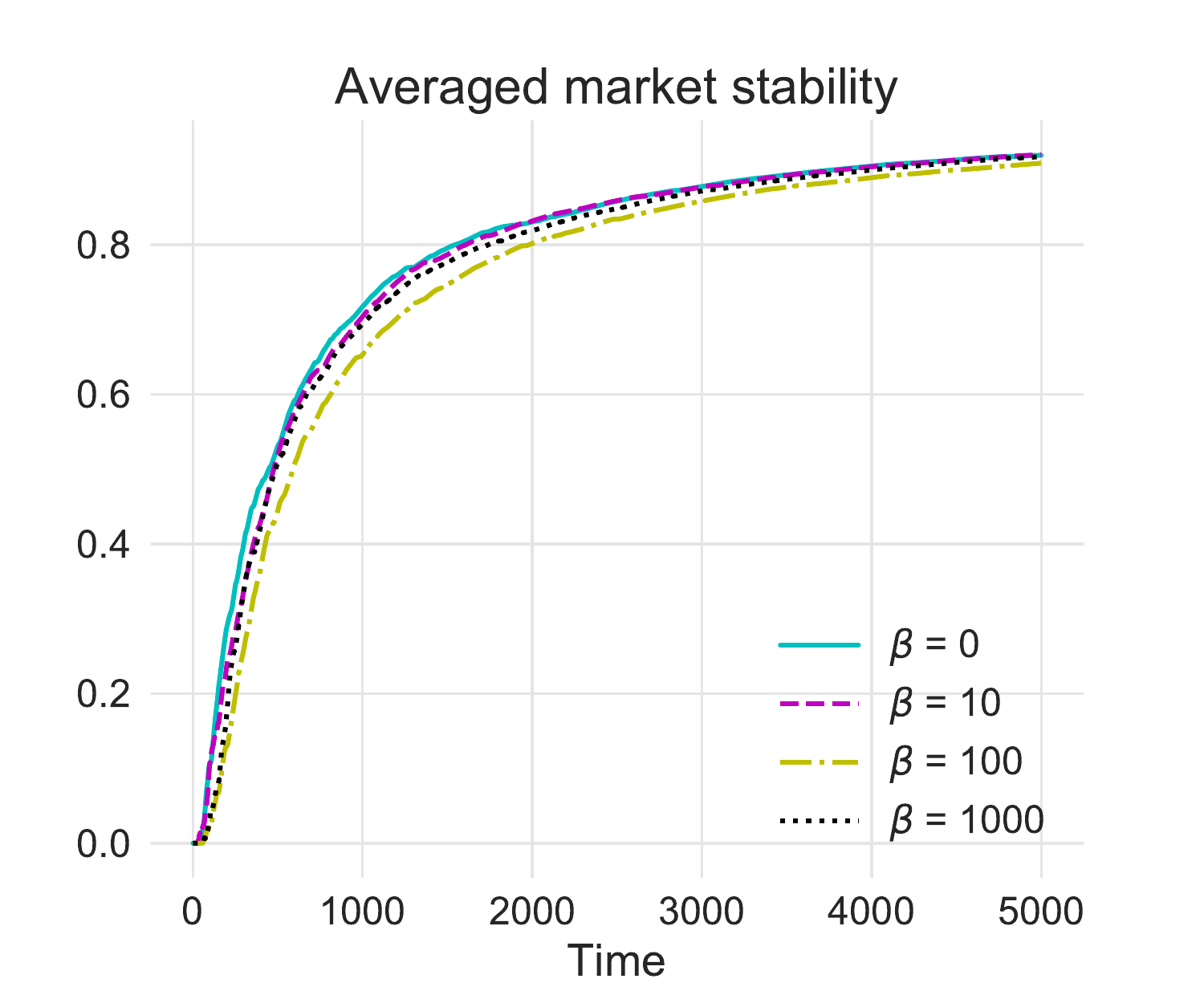}
	\caption{Varying the heterogeneity of the players' preferences. The plot on the left shows the maximum average regret among players and the plot on the right shows the averaged market stability. The larger the $\beta$ parameter, the more correlated the players' preferences are on average.}\label{fig:hetero}
\end{figure}

\section{Related Work}\label{sec:relwork}

There has been significant recent interest in stochastic multi-armed bandits problems with multiple, interacting 
players \citep{pmlr-v49-cesa-bianchi16,Shahrampour17}. In one 
formulation, known as \emph{bandits with collision}, multiple players choose from the same 
set of arms, and if two or more players choose the same arm, no reward is received by any 
player~\citep{Liu10distributed, Anandkumar:2011distri, avner2014, Bistritz2020MyFB, bubeck2020cooperative, bubeck2020non, kalathil2014dencentralized, rosenski16,lugosi18multi}.  In this setting, players are typically assumed to be cooperative, that is, their goal is to maximize the collective reward. \citet{Bistriz2018} and \citet{boursier20selfish} consider the setting where agents have heterogeneous preferences over arms, and the latter work also analyzes the effect of selfish players whose goal is to maximize individual rewards. \citet{avner2016multi} and \citet{darak2019multi} considered a ``stable configuration'' as a solution concept in the heterogeneous player preference setting; however, because the arms do not have preferences in their setting, their notion of ``stability'' is distinct from that of two-sided stable matching. \citet{bubeck2020non} also delineated
the optimal rates for the non-stochastic version of the cooperative problem.

\citet{liu20competing} introduced a multi-player stochastic multi-armed bandits problem motivated by two-sided matching markets, where arms also
have preferences, and in case of collision only the most preferred player receives a reward.
Unlike in the aforementioned line of work, where the natural goal is to find a maximum matching between players and arms, a more appropriate goal here is to find a stable matching. In the centralized setting, where a platform can coordinate the actions of players at each round, \citet{liu20competing}'s algorithm combining the upper confidence bound method and the deferred acceptance algorithm attains $\Ocal(\log(T)/\mingap^2)$ stable regret, which is order-optimal. A suboptimal algorithm based on explore-then-commit was proposed for the decentralized setting. Follow-up work by \citet{Sankararaman20dominate} on the decentralized setting analyzed an order-optimal algorithm for globally ranked players. A more detailed discussion of this work is in Section~\ref{sec:dis}.

The two-sided stable matching problem with preference learning has been studied in other dynamic settings under different assumptions. Given the large space of modeling choices, there has been a flowering of research on two-sided matching models that highlight different challenges introduced by uncertainty and decentralization. One modeling choice is to define arrival and departure processes for market participants, as opposed to analyzing a fixed set of players and arms. \citet{Johari2017matching} studied a sequential matching problem in which the market participants satisfy certain arrival processes, and the participants on the demand side of the market have a `type' that is learned through bandit feedback.

Another choice is how one formulates the cost of preference learning. \citet{Ashlagi2017communication}, which studies the costs of communication and learning for stable matching, formulates preference learning as querying a costly but noiseless choice function. Different players can query their choice functions independently; thus there is no congestion in the preference learning process. Many models studied in the literature on information acquisition in two sided matching \citep[see][and references therein]{lee2009interviewing,immorlica2020information} also do not capture congestion in the information acquisition stage. In some markets, however, obtaining information about the other side of the market itself could lead to congestion and thus the need for strategic decisions. For example, \citet[][chap. 10]{roth1990two} note that graduating medical students go to interviews to ascertain their own preferences for hospitals, but the collection of interviews that a student can schedule is limited. In the model that is studied in the current work, congestion in preference learning is captured by conflicts when two or more players attempt to pull the same arm. %Our model begins to capture such tradeoffs by introducing statistical uncertainty in the preferences of one side of the market and providing a natural mode of interaction between the learning agents.

Other models of uncertainty in two-sided matching that do not explicitly consider preference learning have also been studied. In this setting, there has been much interest in decentralized models. For example, \citet{niederle2009decentralized} studied a decentralized market game in which firms make directed offers to workers, agents have aligned preferences, and equilibrium outcomes under preference uncertainty are analyzed. \citet{arnosti2014managing} employed mean field modeling to analyze the welfare costs of not knowing the availability of agents, as opposed to preferences. \citet{ashlagi2019assortment} considered providing match recommendations to participants in markets for which both sides of the market propose with some probability, and a successful match occurs only in the case of a mutual proposal. \citet{dai2020learning} study a single-stage matching problem with uncertain preferences where players learn from historical data and act in a decentralized manner.

Lastly, the empirical aspects of stable matching in decentralized settings have also garnered significant research interest \citep{Das2005two,echenique2012experimental,pais2012decentralized}.
%performed an empirical study of a two-sided matching bandits problem where both sides of the market have uncertain preferences.

\section{Discussion}\label{sec:dis}

In this section, we discuss the strengths and limitations of Algorithm~\ref{alg:ucb_ca_random}, in the context of broader themes in decentralized matching and multiplayer bandit learning. We also suggest future research directions motivated by our current findings.

\paragraph{Single-phase algorithm} One advantage of Algorithm~\ref{alg:ucb_ca_random} is its simplicity, specifically the fact that it does not involve separate phases or subroutines. Recent work by \citet{Sankararaman20dominate} studied an algorithm (`UCB-D3') for decentralized matching bandits, assuming \emph{globally ranked players}, that proceeds in phases of exponentially increasing length; each phase comprises of a learning stage, where players choose arms according to their own UCBs, followed by a communication subroutine, where players broadcast their preferred arms to other players. In contrast, our algorithm does not require players to keep track of which phase they are in, or when to begin a subroutine. Not having separate algorithmic phases is desirable because multiple phases requires players to synchronize their transition from one phase to the next. In `more decentralized' situations this may not be possible. 
For example, players may enter the market at different times, or leave the market for a number of rounds only to return later \citep[see e.g.,][]{Akbarpour20thick}. The CA-UCB algorithm can be run in such cases without modification and is still guaranteed to have small regret.

\paragraph{Dependence of stable regret on market size} While both UCB-D3 and our method are guaranteed to achieve $\Ocal(\log(T)/\mingap^2)$ stable regret for globally ranked players, the regret guarantee for UCB-D3 has a better dependence on the number of arms (which upper bounds the number of players). In the worst case, the guarantee on the regret of UCB-D3 depends on the square of the number of arms while the guarantee on the regret of our method depends on the cube of the number of arms. The optimal order-dependence on the rank $k$ and the number of arms $\numarms$ is still an open question, since the lower bound \citep[e.g., Corollary 6 in][]{Sankararaman20dominate} and upper bounds currently do not match.  Another interesting question is whether UCB-D3's better regret guarantee under these assumptions translates to better performance in practice; an in-depth empirical comparison of UCB-D3 and CA-UCB will be needed and is beyond the scope of the current work.

\paragraph{Random delays} Another important feature of Algorithm~\ref{alg:ucb_ca_random} is the injection of additional randomness through each player's independently drawn random delays. Randomization is key for this algorithm to achieve a $\Ocal(\log(T)^2)$ regret guarantee in the case of \emph{arbitrary} two-sided preferences. Intuitively, the added randomness allows players to escape conflict cycles, as illustrated in Examples~\ref{ex:2player} and \ref{ex:3player}. Technically, it allows us to leverage a result from \citet{hernan95paths}) to show that the players must converge to a stable matching (which may not be unique), in a low-regret sense. Nevertheless, repeating one's previous action with a constant probability at every step could be considered wasteful. Are there other, more efficient ways of utilizing randomness as an implicit coordination mechanism than random delays?

\paragraph{Improving the stable regret under arbitrary preferences} While Algorithm~\ref{alg:ucb_ca_random} is the first method to provably achieve polylogarithmic regret in markets with arbitrary preferences, we believe there is a significant room for the development of better algorithms. In particular, for markets with arbitrary preferences, the regret guarantee for our method depends exponentially on the number of players. This dependence arises because our regret analysis hinges on a reduction to the convergence rate of the corresponding randomized decentralized matching dynamics under \emph{known} preferences. As shown in \citet{ackerman2008uncoord} and \citet{hoffman2013jealousy}, existing randomized dynamics for decentralized matching under known preferences have worst-case convergence time that is exponential in the number of market participants. While this may suggest that there is indeed a real computational barrier in the arbitrary preferences setting, it might be possible to improve upon the exponential dependence by considering sub-classes of two-sided preferences or randomly drawn preferences. For example, Algorithm~\ref{alg:ucb_ca_random} has improved rates if we assume that the players are globally ranked. %A global ranking (i.e., arms with totally shared preferences) is arguably restrictive as an assumption and may not hold in many markets.

It is also not clear that the $\Ocal((\log \horizon)^2)$ dependence on the horizon is optimal in this setting, even though it is unavoidable given our analysis strategy and our algorithm. Obtaining a regret bound that depends polynomially on the number of players and arms \emph{and} has an optimal order dependence on the horizon may require a new algorithm. 

\paragraph{Information available to players} \lledit{A player that implements CA-UCB must observe the successful arm pulls of all other players. On one hand, by leveraging this information our algorithm ensures that players avoid conflicts most of the time. On the other hand, it is not clear that such information is absolutely necessary for achieving sublinear regret in general markets. For example, UCB-D3 \citep{Sankararaman20dominate}, which achieves sublinear regret in the setting of globally ranked players, does not require players to see the actions of other players. However, players must participate in a rank estimation routine, which relies on the assumption that the players are ranked globally.}

\paragraph{Conclusion and open questions} In this work we have made progress on the problem of stochastic bandits in decentralized matching markets. Still, many open questions remain. %More work is needed to better understand of the interplay between the statistical learning and the economic incentives aspects of matching in decentralized stochastic environments. 
We conclude by highlighting the most intriguing directions for future inquiry: 
\begin{enumerate}
	\item \emph{Better algorithms and matching lower bounds.} Even though algorithms such as UCB-D3 \citep{Sankararaman20dominate} and CA-UCB have stable regret that is almost order-optimal in the setting of globally ranked players, there is still a lot of room for improvement in the setting of arbitrary preferences. Is there a large class of preferences for which one can show matching upper and lower regret bounds, in terms of the dependence on the horizon, the reward gap, and size of the market?
	\item \emph{Incentive compatibility in the decentralized setting.} Unlike in the centralized setting, where a single algorithm was shown to be incentive compatible given any set of preferences \citep{liu20competing}, decentralization appears to pose more challenges for incentive compatibility. As seen in Section~\ref{sec:ic}, the randomized conflict avoidance mechanism of Algorithm~\ref{alg:ucb_ca_random} can be strategically exploited by a deviating player when arm preferences are uncorrelated. How fundamental is this difficulty to the decentralized setting, and can it be overcome by a better algorithm?
\end{enumerate}

\bibliographystyle{abbrvnat}
\bibliography{recmkt,recmkt_2}

\newpage
\appendix

\section{Example~\ref{ex:3player}}\label{app:example}

In this section, we present a second counterexample in which CA-UCB without random delays (i.e.,~$\stayprob = 0$) would fail to converge to a stable matching and the players can enter into a conflict cycle. In this example, neither the arms nor the players are globally ranked. In contrast to Example~\ref{ex:2player}, the type of coordination failure seen in Example~\ref{ex:3player} is unrelated to the failure of the players to learn their rewards. In fact, they can enter into such a cycle even after they have acquired perfect information on all the arms.

 \begin{example}[3-player market with non-unique stable matching]\label{ex:3player}
 	\label{example:pessimal_regret}
 	Let the set of players be $\players = \{ \player{1}, \player{2}, \player{3} \}$ and the set of arms be $\arms = \{ a_1, a_2, a_3 \}$, with true preferences given by:
 	\begin{align*}
 		&\player{1}: \arm{3} \succ \arm{2} \succ \arm{1} &\quad \arm{1}: \player{3} \succ \player{2} \succ \player{1} \\
 		&\player{2}: \arm{1} \succ \arm{3}  \succ \arm{2} &\quad \arm{2}: \player{1} \succ \player{3} \succ \player{2} \\
 		&\player{3}: \arm{2} \succ \arm{1} \succ \arm{3} &\quad \arm{3}: \player{2} \succ \player{1} \succ \player{3}.
 	\end{align*}

 Then the conflict-avoiding algorithm cycles even when the preferences of the players are known. Suppose the players are following Algorithm~\ref{alg:ucb_ca_random}, and their UCB rankings for the arms always coincide with their true preferences. The cycle it enters is as follows:
 \begin{itemize}
 	\item Time $t$: $\player{1}$ and $\player{3}$ conflict on $\arm{2}$, $\player{1}$ wins. 

 	$\player{2}$ pulls $\arm{1}$.
 	\item Time $t+1$: $\player{3}$ attempts $\arm{1}$ because $\arm{2}$ is not in its plausible set. 	$\player{2}$ and $\player{3}$ conflict on $\arm{1}$, $\player{3}$ wins. 

 	$\player{1}$ pulls $\arm{3}$ because $\arm{3}$ was not pulled by any player at time $t$.
 	\item Time $t+2$: $\player{2}$ attempts $\arm{3}$ because $\arm{1}$ is not in its plausible set. $\player{1}$ and $\player{2}$ conflict on $\arm{3}$, $\player{2}$ wins.

 		$\player{3}$ pulls $\arm{2}$ because $\arm{2}$ was not pulled by any player at time $t+1$
 	\end{itemize}
 At time $t+3$, the players attempt the same actions as they did at time $t$, entering into a cycle where there is a conflict at every round henceforth.

 \end{example}

 Previous work has found other examples where sequentially resolving blocking pairs in an unstable matching leads to cycling \citep{knuth97stable,roth90random,hernan95paths}. Example~\ref{ex:3player} shows that players following the decentralized conflict-avoiding protocol (where more than one blocking pair may be resolved at every time step) can also enter into cycles.

 These examples highlight the failure modes of decentralized conflict-avoiding algorithms. One way to escape these failure modes is by introducing randomness, such that the probability of coordination failures becomes exponentially small. This is the motivation for incorporating random delays into Algorithm~\ref{alg:ucb_ca_random}.

\newpage

\section{Proof of Lemma~\ref{lem:ucb_bound}}\label{app:general-prefs}

%\todo{Change the indexing below to start with $t=1$.}

\begin{proof}
	Our proof is essentially identical to  the single-agent UCB analysis in Section 2.2 of~\citet{Bubeck12regret}.
	Assuming that the event \[F_{j, k}^{(i)}(t) = \left\{\ucb{i}{j}{t} >\ucb{i}{k}{t}\right\}\] is true, then at least 
	one of the three following events must occur: 
	\begin{equation*}
	\begin{split}
	\event_{1}(t) &= \left\{\hatmeanreward{i}{j}{t}  > \mu_j^{(i)} + \sqrt{\frac{3\log t}{2\bar{T}_j^{(i)}(t)}}\right\}, \\~
	\event_{2}(t) &= \left\{ 	\hatmeanreward{i}{k}{t} + \sqrt{\frac{3\log t}{2\bar{T}_k^{(i)}(t)}} < \mu^{(i)}_k\right\}, \\~
	\event_{3}(t) &= \left\{ \bar{T}_j^{(i)}(t) \leq \frac{6}{\Delta^2}\log (t)\right\}. 
	\end{split}
	\end{equation*}
	To see this, suppose that none of three events $\event_1(t)$, $\event_2(t)$ and $\event_3(t)$ occur. Then, 
	\begin{align*}
	\hatmeanreward{i}{k}{t} + \sqrt{ \frac{3\log (t)}{2\numpulls{i}{k}{t}}} &\geq \meanreward{i}{k} \ge  \meanreward{i}{j} + \mingap \ge  \meanreward{i}{j} + \sqrt{ \frac{6\log (t)}{\numpulls{i}{j}{t}}} \\
	&\geq  \hatmeanreward{i}{j}{t} + \sqrt{ \frac{3\log (t)}{2\numpulls{i}{j}{t}}}
	\end{align*}
	which is a contradiction because the left-hand side equals $\ucb{i}{k}{t}$ and the right-hand side equals $\ucb{i}{j}{t}$. 
	
	Let $u > 0$ be some value to be chosen later. Then, we have 
	\begin{align*}
	\sum_{t = 1}^{T } \Ind{F_{j, k}^{(i)}(t)\cap  \pull{i}(t) = j} &= \sum_{t = 1}^{T } \Ind{F_{j, k}^{(i)}(t)\cap  \pull{i}(t) = j \cap \numpulls{i}{j}{t} \leq u} \\
	&+ \sum_{t = 1}^{T } \Ind{F_{j, k}^{(i)}(t)\cap  \pull{i}(t) = j \cap \numpulls{i}{j}{t} > u}.
	\end{align*}
	Therefore, if we choose $u = \frac{6}{\Delta^2}\log (t)$,  we obtain 
	\begin{align*}
	\sum_{t = 1}^{T } &\Ind{F_{j, k}^{(i)}(t)\cap  \pull{i}(t) = j} = u + \sum_{t =  u + 1}^{T } \Ind{F_{j, k}^{(i)}(t)\cap  \pull{i}(t) = j \cap \numpulls{i}{j}{t} > u}\\
	&\leq u + \sum_{t =  \lfloor u \rfloor + 1}^{T } \Ind{\event_{1}(t)} + \sum_{t = \lfloor u \rfloor + 1}^{T } \Ind{\event_{2}(t)}.
	\end{align*}
	We are left to establish an upper bound on $\Pr(\event_{1}(t))$ and $\Pr(\event_{2}(t))$. We can do this by a simple application of a union bound and concentration:
	\begin{align*}
	\Pr(\event_{1}(t)) &\leq \Pr\left(\exists s \in \{1,2, \ldots t\} \colon \hatmeanreward{i}{j}{s} + \sqrt{ \frac{3\log (t)}{2s} }\leq \mu_j^{(i)} \right)\\
	&\leq \sum_{s = 1}^t  \Pr\left(\hatmeanreward{i}{j}{s} + \sqrt{ \frac{3\log (t)}{2s}} \leq \mu_j^{(i)} \right)\\
	&\leq \sum_{s = 1}^t \frac{1}{t^3} = \frac{1}{t^2},
	\end{align*}
	where the last inequality follows by a standard concentration argument for independent sub-Gaussian random variables. 
	The probability of $\event_{2}(t)$ occurring can be upper bounded similarly. Then, using $\sum_{t = 1}^{\infty} t^2 = \frac{\pi^2}{6}$ yields the conclusion. 
\end{proof}

\section{Proof of Proposition~\ref{prop:incentive-global}}\label{app:global-IC-proof}

\begin{proof}%[Proof of Proposition~\ref{prop:incentive-global}]
	By definition, player $\player{k}$'s regret can be lower-bounded as follows:
	\begin{equation}
	\label{eqn:IC-starting-point}
	\regret{k}(\horizon) \ge \sum_{i:  \arm{i}\psucc{k}\arm{k}} \gap{k}{i}\cdot \E\left[\numpulls{k}{i}{\horizon}\right] \ge \left(\min_{i: \gap{k}{i} < 0}\gap{k}{i}\right)\cdot \sum_{i:  \arm{i}\psucc{k}\arm{k}} \E\left[\numpulls{k}{i}{\horizon}\right].
	\end{equation}
	Since $\arm{i}\psucc{k}\arm{k}$ implies that $i < k$, we may proceed to upper bound $\sum_{i: i < k}\E\left[\numpulls{k}{i}{\horizon}\right]$. We claim that the following inclusion is true:
	\begin{align}\label{eq:superopt-pulls}
	\bigcup_{i = 1}^{k - 1}\left\{\pull{k}(t) = \arm{i}\right\} \subseteq \bigcup_{\substack{1\leq j < k \\ j < l \le \numarms}} \bigcup_{t-k \le t^\prime \le t}  \errorpot{j}{l}{t^\prime}.
	\end{align}
	The argument is as follows. If $\left\{\pull{k}(t) = \arm{i}\right\}$ holds for some $i$, we know that $\player{i}$ at time $t $ did not attempt to pull~$\arm{i}$. They either attempted to pull an arm $\arm{i^\prime}$ with $i^\prime > i$ or with $i^\prime < i$. Since we know that $\player{i}$ is running Algorithm~\ref{alg:ucb_ca_random}, in the former case, we can apply Lemma~\ref{lem:global_subopt} to player $\player{i}$; in the latter case, we can apply equation~\eqref{eq:superopt-attempts}, also to player $\player{i}$. This establishes equation~\eqref{eq:superopt-pulls}.
	
	Since all players $\player{j}$ with $j<k$ are running Algorithm~\ref{alg:ucb_ca_random}, we may apply Lemma~\ref{lem:ucb_bound} to yield
	\begin{equation}
	\sum_{i: i < k}\E\left[\numpulls{k}{i}{\horizon}\right] \leq 6 (k+1) \sum_{\substack{1\leq j <k \\ j < l \le \numarms}} \bigg(\frac{\log \horizon}{|\gap{j}{l}|^2} + 1\bigg)
	\le 6 k^2 (\numarms-k) \left( \frac{\log \horizon}{\Delta^2} + 1\right). 
	\end{equation}
	Substituting the above into \eqref{eqn:IC-starting-point} yields the desired lower bound.
\end{proof}

\section{Proof of Proposition~\ref{prop:not-incentive-comp}}\label{app:general-ic}

\begin{example}\label{ex:incentive-counterex}
	Let the set of players be $\players = \{ \player{1}, \player{2}, \player{3} \}$ and the set of arms be 
	$\arms = \{ \arm{1}, \arm{2}, \arm{3}\}$, with true preferences given by:
	\begin{align*}
	&\player{1}: \arm{1} \succ \arm{3} \succ \arm{2} &\quad \arm{1}: \player{2} \succ \player{1} \succ \player{3} \\
	&\player{2}: \arm{2} \succ \arm{1}  \succ \arm{3} &\quad \arm{2}: \player{3} \succ \player{2} \succ \player{1} \\
	&\player{3}: \arm{1} \succ \arm{3} \succ \arm{2} &\quad \arm{3}: \player{3} \succ \player{1} \succ \player{2}.
	\end{align*}
	The unique stable matching in this case is $(\player{1}, \arm{1}), (\player{2}, \arm{2}), (\player{3}, \arm{3})$.
\end{example}

\begin{proof}%[Proof of Proposition~\ref{prop:not-incentive-comp}]
	Let $\delaydraw{3}(t) \stackrel{i.i.d.}{\sim} Ber(\stayprob)$ for any $t$. The set of actions that player $\player{3}$ can play, for $t = 1,\cdots,\horizon$, to get negative stable regret is as follows: 
	\begin{equation}
	\attempt{3}(t)  = \begin{cases}
	\arm{2} &\text{ if } t= 3m-2 \\
	\arm{3} &\text{ if } t= 3m-1 \text{ and } \delaydraw{3}(t) = 0 \\
	\arm{2} &\text{ if } t= 3m-1 \text{ and } \delaydraw{3}(t) = 1 \\
	\arm{1} &\text{ if } t= 3m\\
	\end{cases}, \text{ for } m \in \mathbb{N}.
	\end{equation}

	By the definition of $\player{3}$'s regret, and using the fact that $\gap{k}{\emptyset}>\max\{\gap{3}{1} ,\gap{3}{2}\}>0$, we have:
	\begin{equation}
	\regret{3}(\horizon)\le \gap{3}{1}\cdot \E[\numpulls{3}{1}{\horizon}] + \gap{3}{\emptyset}\cdot \left(T- \E[\numpulls{3}{1}{\horizon}] \right).
	\end{equation}
	Thus it suffices to lower bound the expected number of times that $\player{3}$ successfully attempts $\arm{1}$.
	
	Define the following events:
	\begin{align*}
	\Omega^1_t &:= \{ F_{3, 1}^{(2)}(t) \cap \attempt{2}(t) = \arm{3}\}^c, \\	\Omega^2_t &:= \{ F_{1, 2}^{(2)}(t) \cap \attempt{2}(t) = \arm{1}\}^c. \\
	\end{align*}
	
	%Recall the definition of $E_t$ in the context of the current proof:  \[E_t \defeq \bigcap_{\player{i} \in \{\player{1}, \player{2}\}} \bigg\{\argmax_{\arm{j}\in \potset{i}(t)}\meanreward{i}{j} = \argmax_{\arm{j} \in \potset{i}(t)} \ucb{i}{j}{t}\bigg\}.\] Note that the intersection in the above display is only over the first two players, who are assumed to be running Algorithm~\ref{alg:ucb_ca_random}, and we ignore $\player{3}$, who is deviating.
	
	We first show the following inclusion, for any $m\in \mathbb{N}$:
	\begin{equation}\label{eq:incentive-inclusion}
\Omega^1_{3m-1}  \cap \Omega^2_{3m} \cap \{\delaydraw{2}(3m-1)=\delaydraw{3}(3m-1)=\delaydraw{2}(3m)=\delaydraw{1}(3m)=0 \}  \subseteq \{\pull{3}(3m) = \arm{1}\} . 
	\end{equation}
	We can simply check that this holds:
	\begin{itemize}
		\item At time $3m-2$, $\player{3}$ attempts and successfully pulls $\arm{2}$. 
		\item 	At time $3m-1$, $\player{2}$ pulls $\arm{1}$, since $\arm{2}$ is not in its plausible set, $\delaydraw{2}(3m-1)=0$ and the event~$\Omega^1_{3m-1}$ holds. $\player{3}$ pulls $\arm{3}$, since $\delaydraw{3}(3m-1)=0$.
		\item 	At time $3m$, $\player{1}$ does not pull $\arm{1}$, since $\arm{1}$ is not in its plausible set and $\delaydraw{1}(3m)=0$. $\player{2}$ does not pull $\arm{1}$, since $\arm{2}$ is in its plausible set, $\delaydraw{2}(3m) = 0$ and the event~$\Omega^2_{3m}$ holds. Thus $\player{3}$ successfully pulls $\arm{1}$.
	\end{itemize}
	Taking expectation of \eqref{eq:incentive-inclusion} and rearranging gives
	\begin{align}
	&\quad ~\Pr\left( \pull{3}(3m) = \arm{1} \right) \nonumber \\
	&\ge 1- \Pr \left(\left(\Omega^1_{3m-1}  \cap \Omega^2_{3m}  \cap \{\delaydraw{2}(3m-1)=\delaydraw{3}(3m-1)=\delaydraw{2}(3m)=\delaydraw{1}(3m)=0 \} \right)^c \right) \nonumber\\
	&\ge 1-\left( \Pr((\Omega^1_{3m-1})^c)+\Pr((\Omega^2_{3m})^c) + 4\stayprob\right), \label{eq:ic-union-bnd}
	\end{align}
	where the last inequality follows from a union bound. 
	
	It is useful to upper bound the following:
	\begin{align}
	&\sum_{m=1}^{\lfloor \horizon/3\rfloor} \Pr((\Omega^1_{3m-1})^c)+\Pr((\Omega^2_{3m})^c) \nonumber\\
	=&~
		\sum_{m=1}^{\lfloor \horizon/3\rfloor} 
		\Pr\left(F_{3, 1}^{(2)}(3m-1) \cap \attempt{2}(3m-1) = \arm{3}\right) +  \Pr\left( F_{1, 2}^{(2)}(3m) \cap \attempt{2}(3m) = \arm{1}\right) \nonumber\\
		\le &~\frac{1}{\stayprob(1-\stayprob)}\sum_{m=1}^{\lfloor \horizon/3\rfloor} 
	\Pr\left(F_{3, 1}^{(2)}(3m-1) \cap \pull{2}(3m-1) = \arm{3}\right) + 
	\Pr\left( F_{1, 2}^{(2)}(3m) \cap \pull{2}(3m) = \arm{1}\right) \nonumber\\
			\le &~\frac{1}{\stayprob(1-\stayprob)}\sum_{t=1}^{ \horizon} 
	\Pr\left(F_{3, 1}^{(2)}(t) \cap \pull{2}(t) = \arm{3}\right) + 
	\Pr\left( F_{1, 2}^{(2)}(t) \cap \pull{2}(t) = \arm{1}\right) \nonumber\\
	\le &~	\frac{1}{\stayprob(1-\stayprob)}\cdot 12 \cdot \left(\frac{1}{\Delta^2}\log (T) + 1\right), \label{eq:ic-ucb}
	\end{align} 
	where the last inequality follows from Lemma~\ref{lem:ucb_bound}.

	Now, we sum \eqref{eq:ic-union-bnd} over $m = 1, \cdots, \lfloor \horizon/3\rfloor$ to get:
	\begin{align*}
	\E[\numpulls{3}{1}{\horizon}] &\ge \sum_{m=1}^{\lfloor \horizon/3\rfloor} \Pr\left( \pull{3}(3m) = \arm{1} \right) \\
	&\ge (1-4\stayprob)\cdot \lfloor \horizon/3\rfloor - \sum_{m=1}^{\lfloor \horizon/3\rfloor}\Pr((\Omega^1_{3m-1})^c)+\Pr((\Omega^2_{3m})^c) \\
	& \ge (1-4\stayprob)\cdot \lfloor \horizon/3\rfloor -\frac{1}{\stayprob(1-\stayprob)}\cdot 12 \cdot \left(\frac{1}{\Delta^2}\log (T) + 1\right),
	\end{align*}
	where the last two inequalities follow from Equation~\eqref{eq:ic-union-bnd} and Equation~\eqref{eq:ic-ucb}.
	
	Thus we have
	\begin{equation}\label{eq:incent-final-bound}
	\regret{3}(\horizon)\le \gap{3}{1}\cdot \left((1-4\stayprob)\cdot \lfloor \horizon/3\rfloor - \frac{1}{\stayprob(1-\stayprob)}\cdot 12 \cdot \left(\frac{1}{\Delta^2}\log (\horizon) + 1\right)\right) + \gap{3}{\emptyset}\cdot \left(\frac{2}{3}\horizon\right).
	\end{equation}
	Note that $\gap{3}{1} < 0$, and $1-4\stayprob>0$ by assumption. Upon rearranging terms, we get the desired result.
	
\end{proof}

\newpage

\end{document}